\documentclass{article}

\usepackage{microtype}
\usepackage{graphicx}
\usepackage{subcaption}
\usepackage{booktabs} 

\usepackage{hyperref}

\usepackage[preprint]{icml2026}

\usepackage{amsmath}
\usepackage{amssymb}
\usepackage{mathtools}
\usepackage{amsthm}
\usepackage{bm}
\usepackage{dsfont}

\usepackage[capitalize,noabbrev]{cleveref}

\theoremstyle{plain}
\newtheorem{theorem}{Theorem}[section]
\newtheorem{proposition}[theorem]{Proposition}
\newtheorem{lemma}[theorem]{Lemma}
\newtheorem{corollary}[theorem]{Corollary}
\theoremstyle{definition}
\newtheorem{definition}[theorem]{Definition}

\theoremstyle{remark}

\newcommand{\bbR}{\mathbb{R}}
\newcommand{\bbE}{\mathbb{E}}
\newcommand{\bbN}{\mathbb{N}}
\newcommand{\bbP}{\mathbb{P}}

\newcommand{\indic}{\mathds{1}}
\newcommand{\indep}{\perp \!\!\! \perp}

\usepackage[textsize=tiny]{todonotes}

\icmltitlerunning{Graph Attention Network for Node Regression on Random Geometric Graphs with ER contamination}

\begin{document}

\twocolumn[
  \icmltitle{Graph Attention Network for Node Regression on Random Geometric Graphs with Erd\H{o}s--R\'enyi contamination}



  \icmlsetsymbol{equal}{*}

  \begin{icmlauthorlist}
    \icmlauthor{Somak Laha}{yyy}
    \icmlauthor{Suqi Liu}{comp}
    \icmlauthor{Morgane Austern}{yyy}
  \end{icmlauthorlist}

  \icmlaffiliation{yyy}{Department of Statistics, Harvard University, \texttt{\{somaklaha, maustern\}@fas.harvard.edu}}
  \icmlaffiliation{comp}{Department of Statistics, University of California, Riverside, \texttt{suqi.liu@ucr.edu}}

  \icmlcorrespondingauthor{Somak Laha}{\texttt{somaklaha@fas.harvard.edu}}

  \icmlkeywords{Machine Learning, ICML}

  \vskip 0.3in
]



\printAffiliationsAndNotice{}  

\begin{abstract}
Graph attention networks (GATs) are widely used and often appear robust to noise in node covariates and edges, yet rigorous statistical guarantees demonstrating a provable advantage of GATs over non-attention graph neural networks~(GNNs) are scarce. We partially address this gap for node regression with graph-based errors-in-variables models under simultaneous covariate and edge corruption: responses are generated from latent node-level covariates, but only noise-perturbed versions of the latent covariates are observed; and the sample graph is a random geometric graph created from the node covariates but contaminated by independent Erd\H{o}s--R\'enyi edges. We propose and analyze a carefully designed, task-specific GAT that constructs denoised proxy features for regression. We prove that regressing the response variables on the proxies achieves lower error asymptotically in (a) estimating the regression coefficient compared to the ordinary least squares (OLS) estimator on the noisy node covariates, and (b) predicting the response for an unlabelled node compared to a vanilla graph convolutional network~(GCN)---under mild growth conditions. Our analysis leverages high-dimensional geometric tail bounds and concentration for neighbourhood counts and sample covariances. We verify our theoretical findings through experiments on synthetically generated data. We also perform experiments on real-world graphs and demonstrate the effectiveness of the attention mechanism in several node regression tasks.

\end{abstract}

\section{Introduction}

Graph attention networks (GATs)~\cite{veličković2018graphattentionnetworks} have been proposed to adapt to informative neighbours in message passing, thereby alleviating the oversmoothing phenomenon suffered by graph convolutional networks (GCNs). In practice, GAT-style architectures are a common choice in graph neural network~(GNN) modeling, and have been shown to be effective across many node-level prediction benchmarks. Yet it remains unclear when attention provably improves over non-attention message passing. Heuristically, attention should help when covariates are noisy or useful signal is unevenly distributed across neighbours, since the adaptive weighting can better separate signal from the background. In this paper, we formalize this idea with a simple node-regression task, showing that a specifically-designed discrete-attention scheme yields consistent estimation and lower prediction error than non-attention aggregation.

To this end, we study a regression model with noisy graph side information. Specifically, we observe a network with $n$ nodes, where each node $i$ has an unobserved latent covariate $\bm x_i\in\mathbb R^d$ and a scalar response
$y_i= \bm x_i^\top \bm \beta + \varepsilon_i$. We do not observe $\bm x_i$ directly; instead, we are given a noise-amplified version $\bm z_i=\bm x_i+\bm\eta_i$. Conditioned on the latent covariates $\{\bm x_i\}_{i=1}^n$, the underlying graph is a random geometric (dot-product) graph: we connect $i$ and $j$ whenever $\bm x_i^\top \bm x_j$ exceeds a threshold $t_n$, hence linked nodes have strongly aligned latent covariates. The observed graph is then formed by corrupting this geometric graph with independent Erd\H{o}s--R\'enyi~(ER) noise. That is, independently of ${\bm x_i}$ and the geometric graph, we add independent edges from an ER graph, and the observed edge set $E$ is the union of the geometric and ER edges. Both the dimension of the features and the number of observations diverge $n,d\to\infty$ with signal and noise variances that do not scale with $n$ or $d$. Our goal is node regression: using both the features $ \{\bm {z}\}^n_{i=1} $ and the noisy observed graph to construct accurate predictors of the responses and, in particular, to recover the regression coefficient $\bm {\beta} $. Feature noise creates an errors-in-variables problem (attenuation if regressing $y_i$ on $\bm z_i$), while ER edges introduce spurious neighbours that can dominate naive neighbourhood averaging.
The model and the relevant notations are formalized in Section~\ref{model overview}.

A straightforward method to estimate the coefficients $\bm \beta$ is by regressing $\bm Y_n$ directly on $\bm Z_n$. This naive approach inherits the classical attenuation bias of errors-in-variables: ordinary least squares (OLS) estimate of the regression coefficients obtained by regressing on $\bm Z_n$ is inconsistent even when $d \ll n$.
To overcome this, we design a discretized attention mechanism that computes node-level denoised proxies $\bm \lambda_i$ for the latent variables $\bm x_i$. To motivate our architecture, heuristically, geometric neighbours of node $i$ tend to have latent covariates $\bm x_j$ similar to $\bm x_i$, so message passing along geometric edges can in principle denoise the observed features $\bm z_i$.
However, because the observed graph is contaminated by many spurious ER edges, standard message passing that averages over all neighbours yields suboptimal estimation and high prediction error. We therefore implement our discretized attention mechanism as a two-layer attention-based graph neural network that constructs the denoised proxies $\bm \lambda_i$ 
by selectively averaging neighbour covariates.
We prove that regressing on these denoised proxies not only yields consistent estimation for $\bm\beta$ (Theorem~\ref{main text theorem proxy ols}),
but also achieves strictly smaller asymptotic risk compared to any finite-depth GNN that aggregates raw neighbourhoods when ER edges dominate geometric ones (Theorem~\ref{main text theorem non-att gnn prediction}).

One interesting feature is that designing the attention weights is delicate: if we were to let the weights depend directly on a single dot-product $\bm z_i^\top \bm z_j$ in the same coordinates that are later averaged, the selection would be strongly correlated with the measurement noise (and hence with the regression residuals), leading to biased estimation of $\bm \beta$. To avoid this, we split each $\bm z_i$ into two disjoint coordinate blocks and use a cross-fitting attention rule: screening uses dot-products within one block to decide which neighbours to keep, while averaging uses the other block. This decouples the selection event from the coordinates being averaged and yields low-variance, approximately unbiased proxies for the latent covariates. The resulting proxies $\bm \lambda_i$ are collected in the matrix $\bm \Lambda_n$, and we finally regress $\bm Y_n$ on $\bm \Lambda_n$. Algorithm~\ref{alg:attn-proxy-regression} details this procedure.

Our analysis shows that, under mild growth conditions relating $n,d$, the average degree of both the ER and geometric graph, as well as covariate and noise variances, the attention-based proxies $\bm \lambda_i$ are close to the latent covariates $\bm x_i$ in $L_2$ sense, and the resulting OLS estimate for the regression coefficient is consistent. The asymptotic mean squared error (MSE) for predicting the response variable of an unlabelled node achieved by our proposed method (Algorithm~\ref{alg:attn-proxy-prediction}) based on proxies $\bm \lambda_i$ approaches the observation-noise variance even when ER contamination is substantial. At the same time, any finite layer non-attention message passing network that aggregates raw neighbourhoods suffers a higher asymptotic MSE whenever the ER degree dominates the geometric degree. 

In summary, by leveraging a noisy random geometric graph setting,
we rigorously demonstrate that (1) a carefully-designed attention-based graph neural network overcomes the attenuation bias faced by OLS, obtaining consistent estimation of the coefficients for a node regression task;
(2) the GAT-style model provably outperforms all non-attention graph neural networks under the same data generative assumptions.
We further validate our theoretical results using experiments on synthetic data as well as real-world networks.

\paragraph{Notations.}
Suppose the observed graph is $G = (\bm Z_n, E)$ where $\bm Z_n$ is the matrix of node-level covariates and $E$ is the set of edges. For each $i \in [n]$, we use the notation $N_i \coloneqq \{j \in [n]: (i,j)\in E\}$ to denote the neighbourhood of $i$ in the observed graph $G$.
For any vector $\bm v \in \mathbb{R}^d$, we denote by $\bm v^{(1)}$ the top $\lceil d/2 \rceil$ coordinates of $\bm v$, and by $\bm v^{(2)}$ the bottom $\lfloor d/2 \rfloor$ coordinates of $\bm v$. In the rest of this article, we assume that $d$ is even, but the analysis for odd $d$ remains identical as $d$ grows to infinity.
The Euclidean norm of a vector $\bm v$ is defined by $\|\bm v\| = \sqrt{\bm v^\top \bm v}$ and the $L_r$ norms of a random variable $X$ are denoted by
$\|X\|_{L_r} = \left(\mathbb{E}[|X|^r]\right)^{1/r}$.
For any matrix $\bm M \in \mathbb{R}^{m\times m}$, we denote its operator norm by
$\|\bm M\|_{\mathrm{op}} = \sup\left\{ \frac{\|\bm M \bm v\|}{\|\bm v\|}: \bm v \in \mathbb{R}^m \right\}$.
For any $k\in\bbN$, we use the notation $[k] \coloneqq \{1,\ldots, k\}$.
We shall use the standard order notations $o(\cdot), O(\cdot), \Theta(\cdot)$ to mean asymptotic orders as $n$ tends to infinity.

\section{Related work}\label{related work}

\textbf{Message passing and attention.} Modern GNNs formalize representation learning on graphs as local message passing. The spectral simplification in \cite{kipf2017semisupervisedclassificationgraphconvolutional} crystallized neighbourhood averaging as a degree‑normalized linear operator and popularized two‑layer architectures for node tasks. The inductive perspective in \cite{10.5555/3294771.3294869} introduced sampling‑and‑aggregation operators (GraphSAGE) that scale to unseen nodes and graphs. A concurrent unifying view appears in \cite{10.5555/3305381.3305512}, which formalizes message passing with learnable update and aggregation maps. 

On top of these, an attention mechanism allows a node to weight its neighbours. Graph Attention Networks (GATs) \cite{veličković2018graphattentionnetworks}, together with the contemporary attention-based GNN of \cite{thekumparampil2018attention}, introduced masked self‑attention over 1‑hop neighbourhoods with softmax‑normalized coefficients and quickly became standard baselines for node prediction tasks. 

Subsequent work questioned the expressivity of the original mechanism: \cite{brody2022attentivegraphattentionnetworks} showed that the classic single-head GAT layer implements a limited, essentially “static” attention where the coefficients do not sufficiently depend on the representation of the center node, and proposed a more expressive dynamic variant, GATv2, whose attention scores can approximate any permutation‑invariant neighbourhood weighting. Overall, most of this line of work is concerned with expressivity: (i) which neighbourhood functions attention layers can represent, or (ii) with optimization phenomena.

Closer to our statistical perspective, \cite{ma2024graph} analyzed graph attention in contextual stochastic block models (CSBMs), with SBM edges and Gaussian-mixture node features. Their framework separates “structure noise” and “feature noise” and characterizes regimes where an explicit (sign-based) attention rule can increase an effective feature signal-to-noise ratio, improving classification when structural noise dominates. However, their strongest guarantees are proved in the “easy” high signal-to-noise ratio scaling: for single-layer perfect node classification, they assume an SNR scaling $\mathrm{SNR}=\omega(\sqrt{\log n})$, which under their parametrization implies that the feature-noise parameter tends to zero as $n$ increases. They also emphasize that the attention mechanism analyzed is a simplified, non-learnable version chosen for tractability. In contrast, we work with latent-position graphs and additive covariate measurement error at fixed, non-vanishing variance, and we study linear regression under errors-in-variables rather than community detection. Because our attention mechanism uses noisy covariates both to screen neighbours and to form averaged proxies, we adopt a cross-part screening/averaging design that decouples these steps. This enables us to prove a GAT-style advantage over GCN-style averaging even when covariate noise is constant, and under a broader stochastic notion of structural noise via ER contamination. This significantly generalizes the previous work’s attention-gain phenomenon beyond asymptotically clean features, to robustness at fixed feature-noise levels and under ER structural contamination. As a result, their theory speaks to attention advantages primarily in regimes with asymptotically clean features, whereas our focus is robustness when covariate noise remains substantial.

\textbf{Robustness to perturbations.} A large body of work studies worst‑case or adaptive perturbations of graph structure and covariates. \cite{10.1145/3219819.3220078} introduced Nettack, a targeted attack that greedily edits a small number of edges or features yet can drastically degrade the accuracy of GCN‑style models. \cite{zügner2024adversarialattacksgraphneural} formulated training‑time attacks as a bilevel optimization problem and used meta‑gradients to learn discrete edge perturbations, showing that small, carefully chosen changes to the graph can make GNNs perform worse than a classifier that ignores the graph altogether. Our focus is different: we analyze random measurement error in covariates and probabilistic Erd\H{o}s--R\'enyi contamination in edges and prove that a specific, non‑adaptive attention rule improves estimation and prediction in that stochastic setting. The two perspectives of adversarial robustness and stochastic noise are complementary.

\textbf{Latent‑position random graphs.} The geometric component of our model is a dot‑product mechanism, and the overall graph is its union with an independent Erd\H{o}s--R\'enyi graph. The theory of random geometric graphs is treated in the monograph \cite{penrose2003random}. Latent‑space models for networks were formalized by \cite{Hoff01122002}, while the dot‑product specialization appears in \cite{10.1007/978-3-540-77004-6_11}. For statistical methodology and asymptotics on random dot-product graphs, see \cite{JMLR:v18:17-448}. 

\textbf{Errors‑in‑variables and attenuation.} The inconsistency of OLS under covariate noise is classical; see \cite{10.5555/19255} and \cite{Carroll2006Measurement}. Our Theorem~\ref{main text theorem naive ols} instantiates attenuation in a networked setting while the proxy‑based OLS recovers consistency under geometric‑dominance conditions (Theorem~\ref{main text theorem proxy ols}).

\section{Node regression on random graphs}\label{model overview}

\subsection{Graph distribution}
Our graph generating distribution is a generalization of the random dot-product graphs, with additional Erd\H{o}s--R\'enyi contamination. Random dot-product graphs are a special form of random geometric graphs. In order to precisely define the model, we shall use the notation $G = (\bm X_n, E)$ to denote a graph $G$ on $n$ nodes, equipped with node covariates $\bm X_n =  \begin{bmatrix}
    \bm x_1,\ldots,\bm x_n
\end{bmatrix}^\top$ where $\bm x_i \in \bbR^d$ and a set of undirected edges $E$.

\begin{definition}[Erd\H{o}s--R\'enyi contaminated random dot-product graph]
    Consider a graph on $n$ nodes equipped with node covariates $\bm X_n = \begin{bmatrix}
        \bm x_1, \ldots, \bm x_n
    \end{bmatrix}^\top \in \bbR^{n\times d}$. We define the geometric edge set $E_1 \subset [n]^2$ as
    $$(i,j) \in E_1 \text{ if and only if } \bm x_i^\top \bm x_j \geq t\sqrt{d}$$
    for all $i,j\in [n], \, i\neq j$, and some parameter $t\geq 0$; and the Erd\H{o}s--R\'enyi edge set $E_2\subset [n]^2$ as
    $$\bbP\left((i,j) \in E_2 \right) = p,$$
    independently for all $i,j\in [n], \, i\neq j$ and some parameter $0\leq p<1$. The Erd\H{o}s--R\'enyi edges are also independent of the node covariates $\bm X_n$. The resulting graph $G = (\bm X_n, E_1\cup E_2)$ is an Erd\H{o}s--R\'enyi contaminated random dot-product graph, denoted by $\mathcal{G}(n,p,t,d)$.

\end{definition}

Having $p = 0$ corresponds to no Erd\H{o}s--R\'enyi edges in the graph, which is equivalent to a pure random dot-product graph. Increasing $p$ heuristically corresponds to increasing Erd\H{o}s--R\'enyi edge density, hence the noise in the graph.

\subsection{Covariates and response distribution}

Suppose that $G_0 = (\bm X_n, E) \sim \mathcal{G}(n,p_n,t_n,d)$ is an Erd\H{o}s--R\'enyi contaminated random dot-product graph. We observe $E$, but not the covariate matrix $\bm{X}_n$. Instead, we observe $\bm z_i$'s for each node $i\in [n]$, where
$$\bm z_i = \bm x_i + \bm \eta_i,$$
and assume $\bm x_i \sim N(\bm 0, \sigma_x^2\bm I_d), \bm \eta_i \sim N(\bm 0, \sigma_\eta^2\bm I_d)$, independently for all $i\in[n]$. For each node $i\in [n]$, we also observe a response variable $y_i$, and assume a true model
$$y_i = \bm x_i^\top \bm \beta + \varepsilon_i,$$
where $\varepsilon_i \sim N(0, \sigma_\varepsilon^2)$, for all $i\in [n]$, mutually independently and independently of $\bm x_i$ and $\bm \eta_i$'s. We additionally assume
$$\sup_d \bm \beta^\top \bm \beta<\infty.$$

\subsection{Asymptotic regime and parameters}

For the geometric edge threshold parameter, we choose  $t=\sigma_x^2t_n$. We further parametrize $t_n$ and the Erd\H{o}s--R\'enyi edge probability as
$$t_n = \sqrt{2(1-\alpha)\log n}, \quad p_n = n^{\gamma-1}, \quad 0<\alpha,\gamma<1.$$
The variance parameters $\sigma_x^2, \sigma_\eta^2$ and $\sigma_\varepsilon^2$, and the parameters $\alpha$ and $\gamma$ are assumed to be free of $n,d$. This reparametrization of $t_n$ and $p_n$ is useful because as we shall see in subsequent sections, the average degrees of the random dot-product graph and the Erd\H{o}s--R\'enyi graph are of the orders $n^\alpha$ and $n^\gamma$ respectively, up to log factors.

\paragraph{Problem statement.}
Node regression is a classical problem in statistics and machine learning \cite{hamilton2020graph} that focuses on modeling relationships between node-level covariates and responses within a network. In our setting, because the observed covariates satisfy $\bm z_i=\bm x_i+\bm \eta_i$, regressing $y_i$ on $\bm z_i$ is an {errors-in-variables} problem; the observed graph $G=(\bm Z_n,E)$ provides {side information} about relationships among the latent covariates $\{\bm x_i\}$ that we will exploit in our procedures. In the setup as described above, our objective is to
\begin{itemize}
    \item[(i)] Estimate $\bm \beta$ based on the observed graph and response variables $(G = (\bm Z_n, E), \{y_i\}_{i=1}^n)$,
    \item[(ii)] Predict the response $y_{n+1}$ for an unlabelled $(n+1)-$th node with observed covariate vector $\bm z_{n+1}$, given the updated observation $(G = (\bm Z_{n+1}, E), \{y_i\}_{i=1}^n)$, where $E$ is the updated set of edges between all $n+1$ nodes.
\end{itemize}

\section{Main results}\label{main results}

\subsection{Estimation algorithm through discretized attention-based proxies}
The baseline approach to estimate $\bm\beta$ is to plug in the observed $\bm Z_n$ in place of the unobserved $\bm X_n$ in the formula of the OLS estimate of $\bm \beta$, to get
$$\hat{\bm \beta}_z = (\bm Z_n^\top \bm Z_n)^{-1}\bm Z_n^\top \bm Y_n,$$
but as will be made precise in subsequent sections, this approach leads to attenuation bias. Instead, we employ a specifically-designed graph attention network to compute a proxy $\bm \lambda_i$ for each of the $\bm x_i$'s, and then regress the responses $y_i$ on the proxies to estimate $\bm \beta$. Algorithm~\ref{alg:attn-proxy-regression} describes the procedure to obtain the estimate $\hat{\bm \beta}_\lambda$. 

Algorithm~\ref{alg:attn-proxy-regression} builds a denoised covariate proxy for each node by {binary screening} of neighbours followed by {cross-part averaging}. 
Initially splitting each covariate vector into two disjoint blocks, screening uses dot-products of the {same} blocks, while averaging uses the {other} block; this split is essential to decouple selection noise from the averaging step. The discrete weights $w_{ij,k}=\mathbf{1}\{ \bm z_i^{(k)\top} \bm z_j^{(k)} \ge \sigma_x^2t_n\sqrt{d}/2\}$ implement the screening, and the resulting averages produce proxies close to the latent covariate $\bm x_i$ under the regimes stated later.

 In subsequent sections, we formalize how $\hat{\bm \beta}_\lambda$ consistently estimates $\bm \beta$, and how the main motivator is that the proxies $\{\bm \lambda_i\}$ are close to the unobserved covariates $\{\bm x_i\}$ in $L_2$ sense in a moderate high-dimensional regime of $n,d$.

\begin{algorithm}[ht]
   \caption{Regression coefficient ($\bm \beta$) estimation}
\label{alg:attn-proxy-regression}
\begin{algorithmic}
\STATE {\bfseries Input:} Observed graph $G=(\bm Z_n,E)$ with nodes $[n]$, response vector $\bm Y_n\in\mathbb{R}^{n}$, geometric threshold $t_n>0$.
\vspace{2mm}

\STATE {\bfseries Binary attention weights:} For every edge $(i,j)\in E$ and for $k\in\{1,2\}$, set
\[
w_{ij,k}\gets \mathds{1}\!\left\{\, \bm z_i^{(k)\top} \bm z_j^{(k)} \;\ge\; \sigma_x^2 t_n\sqrt{d}/2\; \right\}.
\]

\STATE {\bfseries Cross-part neighbour averaging:} For each $i\in[n]$, compute
$$\bm \lambda_i^{(1)} \!\leftarrow\! \frac{\sqrt{d}}{t_n}
  \frac{\sum_{j\in N_i} w_{ij,2}\bm z_j^{(1)}}{\sum_{j\in N_i} w_{ij,2}},$$ $$\bm \lambda_i^{(2)} \!\leftarrow\! \frac{\sqrt{d}}{t_n}
  \frac{\sum_{j\in N_i} w_{ij,1}\bm z_j^{(2)}}{\sum_{j\in N_i} w_{ij,1}}.$$
 If a denominator is zero, set the corresponding $$\bm \lambda_i^{(k)}\leftarrow \bm 0\in\mathbb{R}^{d/2}.$$

\STATE {\bfseries Concatenate proxies:} Set $\bm \lambda_i\gets \begin{bmatrix}\bm \lambda_i^{(1)} \\ \bm \lambda_i^{(2)}\end{bmatrix}\in\mathbb{R}^{d}$ and stack $\bm \Lambda_n\gets[\bm \lambda_1, \ldots, \bm \lambda_n]^\top\in\mathbb{R}^{n\times d}$.
\vspace{2mm}
\STATE {\bfseries OLS with proxies:} Return
\[
\hat{\bm\beta}_\lambda \;\gets\; (\bm \Lambda_n^\top \bm \Lambda_n)^{-1}\bm \Lambda_n^\top \bm Y_n.
\]
\end{algorithmic}
\end{algorithm}

\subsection{Efficient estimation guarantee of \texorpdfstring{$\beta$}{}}

The baseline approach that plugs in $\bm Z_n$ in place of $\bm X_n$ in the formula of the OLS estimate to obtain the estimate $\hat{\bm{\beta}}_z$, results in attenuation bias. This is a classically known result~\cite{10.5555/19255, Carroll2006Measurement}.
\begin{proposition}\label{main text theorem naive ols}
    Assume that $d\ll n$. Then the OLS estimator $\hat{\bm \beta}_z = (\bm Z_n^\top \bm Z_n)^{-1}\bm Z_n^\top \bm Y_n$ is not consistent for the parameter $\bm \beta$, in the sense that, as $n \longrightarrow \infty$,
    $$\|\hat{\bm \beta}_z - \bm \beta\| \stackrel{\mathbb{P}}{\longrightarrow} \frac{\sigma_\eta^2}{\sigma_x^2+\sigma_\eta^2}\|\bm \beta\|.$$
\end{proposition}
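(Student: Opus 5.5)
The plan is to write $\bm Y_n = \bm X_n\bm\beta + \bm\varepsilon_n = \bm Z_n\bm\beta - \bm H_n\bm\beta + \bm\varepsilon_n$, where $\bm H_n = [\bm\eta_1,\ldots,\bm\eta_n]^\top$. This gives the exact decomposition
\[
\hat{\bm\beta}_z - \bm\beta = \Big(\tfrac1n\bm Z_n^\top\bm Z_n\Big)^{-1}\Big(\tfrac1n\bm Z_n^\top\bm\varepsilon_n - \tfrac1n\bm Z_n^\top\bm H_n\bm\beta\Big).
\]
Everything reduces to operator-norm concentration of the sample covariance matrices involved. The graph plays no role, since $\hat{\bm\beta}_z$ uses only $\bm Z_n$ and $\bm Y_n$.

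\textbf{Step 1: concentration of the covariance blocks.} The rows $\bm z_i\sim N(\bm 0,(\sigma_x^2+\sigma_\eta^2)\bm I_d)$ are i.i.d. Standard Gaussian covariance concentration (e.g.\ Vershynin's bound $\|\frac1n\bm Z_n^\top\bm Z_n - \Sigma\|_{\mathrm{op}}\lesssim \sqrt{d/n}+d/n$ with high probability) gives
\[
\tfrac1n\bm Z_n^\top\bm Z_n = (\sigma_x^2+\sigma_\eta^2)\bm I_d + \bm\Delta_1,\qquad \|\bm\Delta_1\|_{\mathrm{op}} = O_{\mathbb P}\big(\sqrt{d/n}\big)=o_{\mathbb P}(1),
\]
using $d\ll n$. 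Applying the same argument to the stacked vectors $(\bm x_i,\bm\eta_i)\in\mathbb R^{2d}$ controls the cross block:
\[
\tfrac1n\bm Z_n^\top\bm H_n = \sigma_\eta^2\bm I_d + \bm\Delta_2,\qquad \|\bm\Delta_2\|_{\mathrm{op}}=o_{\mathbb P}(1).
\]
Since $\lambda_{\min}(\frac1n\bm Z_n^\top\bm Z_n)$ is bounded away from zero with high probability, the inverse satisfies
\[
\Big\|\Big(\tfrac1n\bm Z_n^\top\bm Z_n\Big)^{-1} - (\sigma_x^2+\sigma_\eta^2)^{-1}\bm I_d\Big\|_{\mathrm{op}} = o_{\mathbb P}(1).
\]

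\textbf{Step 2: the noise term.} Conditionally on $\bm Z_n$, the vector $\frac1n\bm Z_n^\top\bm\varepsilon_n$ is Gaussian with covariance $\frac{\sigma_\varepsilon^2}{n^2}\bm Z_n^\top\bm Z_n$. Hence
\[
\mathbb E\Big\|\tfrac1n\bm Z_n^\top\bm\varepsilon_n\Big\|^2 = \frac{\sigma_\varepsilon^2 d(\sigma_x^2+\sigma_\eta^2)}{n}\to 0,
\]
because $d/n\to 0$. Multiplying by the inverse, which is bounded in operator norm, keeps this term $o_{\mathbb P}(1)$ in Euclidean norm.

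\textbf{Step 3: the bias term.} This term is
\[
-\Big(\tfrac1n\bm Z_n^\top\bm Z_n\Big)^{-1}\tfrac1n\bm Z_n^\top\bm H_n\bm\beta = -\frac{\sigma_\eta^2}{\sigma_x^2+\sigma_\eta^2}\bm\beta + \bm r_n,
\]
with $\|\bm r_n\|\le o_{\mathbb P}(1)\cdot\|\bm\beta\|$. The bound on $\bm r_n$ uses the operator-norm errors from Step 1 together with the assumption $\sup_d\bm\beta^\top\bm\beta<\infty$. Combining Steps 2 and 3 with the triangle inequality gives $\big|\|\hat{\bm\beta}_z-\bm\beta\| - \frac{\sigma_\eta^2}{\sigma_x^2+\sigma_\eta^2}\|\bm\beta\|\big|\to 0$ in probability, which is the claim. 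If $\|\bm\beta\|$ itself varies with $d$, the statement should be read as this difference vanishing.

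\textbf{Main obstacle.} The only delicate point is that $d$ may grow with $n$. This is why I use operator-norm rather than entrywise concentration in Step 1, and why $d\ll n$ (i.e.\ $d/n\to0$) is exactly the condition these bounds need. If $d$ grows slowly enough, say $d\log n/n\to 0$, a union bound plus an $\epsilon$-net argument would also suffice for Step 1.
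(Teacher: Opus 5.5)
Your proof is correct and follows essentially the same route as the paper: Gaussian covariance concentration applied to the stacked vectors in $\mathbb{R}^{2d}$, a perturbation bound for the inverse of $\frac{1}{n}\bm Z_n^\top\bm Z_n$, a vanishing noise term, and the reverse triangle inequality. The differences are only cosmetic. The paper centers at the attenuated coefficient $\frac{\sigma_x^2}{\sigma_x^2+\sigma_\eta^2}\bm\beta$ using $\frac{1}{n}\bm Z_n^\top\bm X_n$; this is equivalent to your $\frac{1}{n}\bm Z_n^\top\bm H_n$ term, since $\bm X_n=\bm Z_n-\bm H_n$. The paper also bounds the noise term by conditioning on $\bm E_n$ rather than on $\bm Z_n$.
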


In fact, it can be noted from Proposition~\ref{main theorem beta y} (in Appendix~\ref{key results appdx}) that under the same conditions as in Proposition~\ref{main text theorem naive ols},
$$\left\|\hat{\bm \beta}_z - \frac{\sigma_x^2}{\sigma_x^2 + \sigma_\eta^2}\bm \beta\right\| \stackrel{\bbP}{\longrightarrow} 0.$$
This reflects a fundamental non-identifiability phenomenon when the graph is ignored: from the i.i.d. pairs $(\bm z_i, y_i)$ alone, one can at best identify the attenuated coefficient $\frac{\sigma_x^2}{\sigma_x^2+\sigma_\eta^2}\bm \beta$, not $\bm \beta$ itself. Although the marginal law of $\bm z_i$ allows consistent estimation of $\sigma_x^2+\sigma_\eta^2$, the latent variance $\sigma_x^2$, and hence the attenuation factor, cannot be recovered without additional structure, leading to $\bm \beta$ being non-identifiable.

On the other hand, in a certain regime of the graph size $n$, dimension $d$ and the parameters $\alpha, \gamma, \sigma_x^2,\sigma_\eta^2$, the $\bm \lambda_i$-based estimator, computed using Algorithm~\ref{alg:attn-proxy-regression}, turns out to be consistent for $\bm \beta$.
\begin{theorem}\label{main text theorem proxy ols}
    Assume that $\gamma < \alpha + \frac{\sigma_x^4(1-\alpha)}{2(\sigma_x^2+\sigma_\eta^2)^2}$ and $n^{\min\left\{\alpha,\frac{1}{3}\right\}}\gg d \gg (\log n)^3$. Then the estimator $\hat{\bm \beta}_\lambda = (\bm \Lambda_n^\top \bm \Lambda_n)^{-1}\bm \Lambda_n^\top \bm Y_n$ is consistent for the parameter $\bm \beta$, in the sense that, as $n \longrightarrow \infty$,
    $$\|\hat{\bm \beta}_\lambda - \bm \beta\| \stackrel{\mathbb{P}}{\longrightarrow} 0.$$
\end{theorem}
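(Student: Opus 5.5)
We plan to prove consistency by showing that the proxies are uniformly close to the latent covariates in an averaged $L_2$ sense, and then running a perturbation argument for OLS. Write $\bm Y_n = \bm X_n\bm\beta + \bm\varepsilon_n = \bm\Lambda_n\bm\beta + (\bm X_n-\bm\Lambda_n)\bm\beta + \bm\varepsilon_n$, so that
\[
\hat{\bm\beta}_\lambda - \bm\beta = \Big(\tfrac{1}{n}\bm\Lambda_n^\top\bm\Lambda_n\Big)^{-1}\Big(\tfrac{1}{n}\bm\Lambda_n^\top(\bm X_n-\bm\Lambda_n)\bm\beta + \tfrac{1}{n}\bm\Lambda_n^\top\bm\varepsilon_n\Big).
\]
It then suffices to establish three facts.

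\begin{itemize}
\item[(a)] $\frac1n\bm\Lambda_n^\top\bm\Lambda_n$ has smallest eigenvalue bounded below by a constant, of order $\sigma_x^2$, with probability tending to one.
\item[(b)] $\frac1n\|\bm\Lambda_n-\bm X_n\|_{\mathrm{op}}^2 \to 0$ in probability, together with $\frac1n\|\bm\Lambda_n\|_{\mathrm{op}}^2=O_{\bbP}(1)$.
\item[(c)] $\|\frac1n\bm\Lambda_n^\top\bm\varepsilon_n\|\to0$.
\end{itemize}

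Fact (c) is easy. $\bm\Lambda_n$ is a function of $(\bm Z_n,E)$ only, so it is independent of $\bm\varepsilon_n$. Conditionally, $\bbE\|\frac1n\bm\Lambda_n^\top\bm\varepsilon_n\|^2=\frac{\sigma_\varepsilon^2}{n^2}\operatorname{tr}(\bm\Lambda_n^\top\bm\Lambda_n)$, which is $O(d/n)\to0$ once each $\|\bm\lambda_i\|^2$ is shown to be $O(d)$ on average. Fact (a) follows from (b) and the standard Gaussian covariance bound $\|\frac1n\bm X_n^\top\bm X_n-\sigma_x^2\bm I_d\|_{\mathrm{op}}=O_{\bbP}(\sqrt{d/n})$, via Weyl's inequality. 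Given (a) and (b), the bias term is bounded by $\|(\frac1n\bm\Lambda_n^\top\bm\Lambda_n)^{-1}\|_{\mathrm{op}}\cdot\frac1{\sqrt n}\|\bm\Lambda_n\|_{\mathrm{op}}\cdot\frac1{\sqrt n}\|\bm X_n-\bm\Lambda_n\|_{\mathrm{op}}\cdot\|\bm\beta\|\to0$, using $\sup_d\|\bm\beta\|<\infty$.

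The core is therefore (b). To control the operator norm, we plan to show the stronger Frobenius-type statement $\frac1n\sum_i\bbE\|\bm\lambda_i-\bm x_i\|^2\to0$ only after splitting into a mean part and a fluctuation part. The reason is that $\bbE\|\bm\lambda_i-\bm x_i\|^2$ itself is of order $d\cdot(\text{variance per coordinate})$, which need not vanish. Instead we decompose $\bm\lambda_i^{(1)}-\bm x_i^{(1)}=\bm b_i+\bm\xi_i$.

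\begin{itemize}
\item[(i)] $\bm b_i$ is the conditional mean given the selection events and the block-$2$ data. It involves $\frac{\sqrt d}{t_n}$ times the average of $\bm x_j^{(1)}$ over selected neighbours, minus $\bm x_i^{(1)}$.
\item[(ii)] $\bm\xi_i$ averages $\bm\eta_j^{(1)}$ and the orthogonal components of $\bm x_j^{(1)}$. Because of the cross-fitting, the weights $w_{ij,2}$ depend only on block-$2$ coordinates, so conditionally on the selection these terms are centred, nearly independent, and isotropic.
\end{itemize}

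The fluctuation matrix then has rows with covariance of order $\frac{d}{t_n^2 S_i}$, where $S_i$ is the number of selected neighbours. Its operator norm squared over $n$ is of order $\frac{d}{t_n^2 \min_i S_i}(1+d/n)$. This vanishes if $S_i\gtrsim n^{\alpha'}$ with $d\ll n^{\alpha}$ up to logs, which is where $n^{\alpha}\gg d$ enters.

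For the mean part we need the geometric fact that, conditional on $\bm x_i^\top\bm x_j\ge \sigma_x^2t_n\sqrt d$ (in the relevant block), $\bbE[\bm x_j\mid\bm x_i]\approx \frac{t_n}{\sqrt d}\bm x_i\cdot\frac{\sigma_x^2\sqrt d}{\|\bm x_i\|^2}\cdot(\text{overshoot})$. The normalisation $\sqrt d/t_n$ is chosen so that this equals $\bm x_i$ to leading order. The relative errors come from the Gaussian tail overshoot, which is $O(1/t_n^2)$, and from the concentration of $\|\bm x_i\|^2/d$. We will need $d\gg(\log n)^3$ so that these high-dimensional geometric tail approximations (Berry–Esseen type errors $\sim t_n^3/\sqrt d$) are $o(1)$. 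The condition $d\ll n^{1/3}$ is used for uniformity over $i$ in the covariance and degree concentration.

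The main obstacle is controlling the ER contamination inside the screening. Among the $\sim n^{\gamma}$ ER neighbours, each passes the block-$2$ screen with probability roughly $\exp\!\big(-\frac{t_n^2\sigma_x^4}{8(\sigma_x^2+\sigma_\eta^2)^2}\cdot 2\big)$, which is of order $n^{-\sigma_x^4(1-\alpha)/(2(\sigma_x^2+\sigma_\eta^2)^2)}$. These false positives have mean-zero block-$1$ features independent of $\bm x_i$, so they shrink the mean toward zero by the fraction they represent. True geometric neighbours number $\sim n^{\alpha}$ and pass the half-threshold screen with probability tending to one. The stated condition on $\gamma$ is exactly what makes the false-positive count $o(n^{\alpha})$, giving a vanishing attenuation of $\bm b_i$.

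I would prove this step first, via Bernstein bounds on the neighbourhood counts and a union bound over $i$. I would then carefully verify that the threshold $\sigma_x^2t_n\sqrt d/2$ applied to the noisy half-block dot product $\bm z_i^{(2)\top}\bm z_j^{(2)}$ has the claimed false-positive and true-positive rates. This is where I expect the most delicate tail computation.
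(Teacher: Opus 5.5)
\textbf{Gap: the operator-norm bound for the fluctuation part in (b).} Your reduction to (a)--(c) is sound. Claim (b) is exactly the paper's core estimate, since $\frac1n\|\bm\Lambda_n-\bm X_n\|_{\mathrm{op}}^2=\|\frac1n\bm V_n^\top\bm V_n\|_{\mathrm{op}}$; getting the cross term by Cauchy--Schwarz is even slightly leaner than the paper's separate concentration of $\frac1n\bm X_n^\top\bm V_n$. The problem is your proof of (b).

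The fluctuation rows $\bm\xi_i$ are not nearly independent. Each $\bm\eta_j^{(1)}$ and $\bm x_j^{(1)}$ enters the rows of all $\Theta(n^\alpha/t_n)$ nodes that select $j$. The component of $\bm x_j^{(1)}$ orthogonal to $\bm x_i^{(1)}$ is independent of $i$'s own selection event, but not of node $k$'s, which depends on $\bm x_k^{(1)\top}\bm x_j^{(1)}$: geometric edges use the full dot product, and cross-fitting decouples only the screening weights. So the independent-rows bound $\frac{d}{t_n^2S}(1+d/n)$ is not available. The Frobenius fallback $\frac1n\|\bm\Xi\|_F^2\asymp d^2/(t_nn^\alpha)$ needs $d\ll n^{\alpha/2}$ up to logs, which is stronger than the hypothesis whenever $\alpha<2/3$.

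The $\bm\eta^{(1)}$ part can be rescued. It is independent of the block-$2$ row-normalised selection matrix $W$, so it equals $\frac{\sqrt d}{t_n}WH$ with $H$ Gaussian. Then $\bbE\|WH\|_{\mathrm{op}}\lesssim\|W\|_F+\sqrt d\,\|W\|_{\mathrm{op}}$, and $\|W\|_{\mathrm{op}}=O(1)$ under uniform degree bounds, giving a contribution $\frac{d}{t_n^2S}+\frac{d^2}{t_n^2n}\to0$. The $\bm x^{(1)}$ part admits no such factorisation, so you need a genuine concentration argument under this dependence.

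The paper supplies exactly that, without separating bias from fluctuation:
\begin{itemize}
\item Invariance under block-orthogonal maps forces $\bbE[\bm v_i\bm v_i^\top]=a\bm I\oplus b\bm I$. Hence $\|\bbE[\frac1n\bm V_n^\top\bm V_n]\|_{\mathrm{op}}\le2\bbE\|\bm v_i\|^2/d=o(1)$ follows from the per-node bound $\bbE\|\bm v_i\|^2=o(d)$.
\item A node-by-node Doob martingale with the matrix Freedman inequality bounds the fluctuation by $O_{\bbP}(d^{3/2}\sqrt{\log d}/(t_n\sqrt n))$. Resampling node $m$ changes only its own row and those of its $O(n^\alpha/t_n)$ screened neighbours.
\end{itemize}
That is where $d\ll n^{1/3}$ enters, not uniformity over $i$.

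\textbf{Secondary issue: your description of the screen.} A geometric neighbour passes with probability $\frac12(1+o(1))$, not probability tending to one. Given $\bm x_i^\top\bm x_j\approx\sigma_x^2t_n\sqrt d$, the block-$2$ product $\bm z_i^{(2)\top}\bm z_j^{(2)}$ has conditional mean $\approx\sigma_x^2t_n\sqrt d/2$, which is exactly the threshold, and standard deviation $\Theta(\sqrt d)$. Since $\bm x_i^{(1)\top}\bm x_j^{(1)}=\bm x_i^\top\bm x_j-\bm x_i^{(2)\top}\bm x_j^{(2)}$, conditioning on passing lowers the block-$1$ signal by $\Theta(\sqrt d)$. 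That is a relative attenuation of order $1/t_n$, which dominates the $O(1/t_n^2)$ overshoot you cite. It is the paper's $1/t_n$ error term, obtained by regressing $\bm z_j^{(1)}$ jointly on $\bm x_i^\top\bm x_j$ and $\bm z_i^{(2)\top}\bm z_j^{(2)}$.

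This attenuation vanishes and does not change the order of the screened counts. So your ER-versus-geometric comparison and the role of the condition on $\gamma$ stand, but your mean-part computation must include it. To turn the radial bias $b_i=\epsilon_i\bm x_i^{(1)}$ into an operator-norm bound via $\|D_\epsilon\bm X_n^{(1)}\|_{\mathrm{op}}\le\max_i|\epsilon_i|\,\|\bm X_n\|_{\mathrm{op}}$, you need $\max_i|\epsilon_i|\to0$ in probability. That requires union bounds over $i$ for the screened counts and for the empirical averages defining $\epsilon_i$, or else the invariance argument above.
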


The signal-to-noise condition $\gamma < \alpha + \frac{\sigma_x^4(1-\alpha)}{2(\sigma_x^2+\sigma_\eta^2)^2}$ is required so that the signal from the geometric neighbours is not drowned by the noise from Erd\H{o}s--R\'enyi neighbours. Importantly, this does not require the set of edges in the observed graph to predominantly be geometric ones.
In our parametrization, the expected geometric degree scales like $n^\alpha$ while the expected ER degree scales like $n^\gamma$, so we explicitly allow regimes with $\gamma>\alpha$, where the number of geometric edges $n^\alpha$ can be negligible compared to the number of ER edges $n^\gamma$; the condition only rules out ER contamination so strong that it overwhelms the geometric signal after screening. 
On the other hand, heuristically, $d\ll n^\alpha$ ensures each node has “enough” informative geometric neighbours relative to the ambient dimension so that the screened averages can denoise and produce proxies $\bm \lambda_i$ that track $\bm x_i$ (formalized in Lemma~\ref{lambda close to x main text}), while the additional restriction $d\ll n^{1/3}$ ensures the empirical covariance matrices built from these proxies behave stably (so regression on $\bm \Lambda_n$ behaves like its population analogue; see Appendix~\ref{lemmas for thm b.2 appdx}). Finally, the lower bound $d\gg (\log n)^3$ is a convenient sufficient regime in which norms and dot-products are sharply concentrated uniformly over all $n$ nodes at the threshold scale $t_n \asymp \sqrt{\log n}$, making the screening step and neighbourhood sizes well-behaved.

The detailed proofs of Proposition~\ref{main text theorem naive ols} and Theorem~\ref{main text theorem proxy ols} are in Appendix~\ref{key results appdx}.

\subsection{Prediction algorithm for unlabelled node}

We now turn to the prediction problem: given an unlabelled $(n+1)$-th node with observed noise-amplified covariate vector $\bm z_{n+1}$ and edges to the existing graph, we wish to predict its unobserved response $y_{n+1}$. This places us in a transductive semi-supervised setting in the spirit of graph-based methods such as graph convolutional networks (GCNs) \cite{kipf2017semisupervisedclassificationgraphconvolutional}, where labels are available only on a subset of nodes and the goal is to exploit both node-level covariates and graph structure to infer the label of an unlabelled node.

One possible approach is to train a non-attention-based $L$-layer neural network on the graph on $[n]$ nodes, and use the fitted network to predict $y_{n+1}$. This approach leads to higher MSE (mean squared error) for the prediction when the average Erd\H{o}s--R\'enyi degree of a node is much larger compared to the average geometric degree, i.e. $\gamma>\alpha$ (Theorem~\ref{main text theorem non-att gnn prediction}). On the other hand, in a subregime where $\gamma>\alpha$, yet we can estimate $\bm \beta$ consistently, we can leverage Algorithm~\ref{alg:attn-proxy-regression} to calculate the denoised covariate proxy $\bm \lambda_{n+1}$ and specially estimate a consistent estimator of $\beta$ to predict $y_{n+1}$. This attention-based graph neural network approach leads to a lower MSE with high probability (Theorem~\ref{main text theorem gann prediction}). This approach is detailed in Algorithm~\ref{alg:attn-proxy-prediction}.

\begin{algorithm}[ht]
   \caption{Response variable $(y_{n+1})$ prediction for unlabelled node}
\label{alg:attn-proxy-prediction}
\begin{algorithmic}
\STATE {\bfseries Input:} Observed graph $G=(\bm Z_{n+1},E)$ with nodes $[n+1]$, response vector $\bm Y_n\in\mathbb{R}^{n}$, geometric threshold $t_n>0$.
\vspace{2mm}
\STATE {\bfseries Attention-based proxy:} Run Algorithm~\ref{alg:attn-proxy-regression} on $G=(\bm Z_{n+1},E)$ to find attention-based covariate proxy $\bm \lambda_{n+1}$ for the unlabelled node.
\vspace{2mm}
\STATE {\bfseries Algorithm~\ref{alg:attn-proxy-regression} on subgraph:} Use Algorithm~\ref{alg:attn-proxy-regression} on the subgraph of $G$ induced by set of nodes $[n]\setminus N_{n+1}$, where $N_{n+1}$ is the neighbourhood of the unlabelled node in $G$, to calculate $\hat{\bm \beta}_{\lambda,-(n+1)}$.
\vspace{2mm}
\STATE {\bfseries Prediction:} Return
\[
\hat{y}_{n+1} \gets \bm \lambda_{n+1}^\top \hat{\bm \beta}_{\lambda,-(n+1)}.
\]
\end{algorithmic}
\end{algorithm}

\subsection{Prediction MSE guarantee}

Theorem~\ref{main text theorem proxy ols} states a regime of parameters where Algorithm~\ref{alg:attn-proxy-regression} produces a consistent estimate of $\bm \beta$. In the same regime of the parameters, the prediction algorithm Algorithm~\ref{alg:attn-proxy-prediction} can predict $y_{n+1}$ with prediction MSE converging in probability to the oracle prediction MSE $\sigma_\varepsilon^2$.

\begin{theorem}\label{main text theorem gann prediction}
Suppose that $n^{\min\left\{\alpha, \frac{1}{3}\right\}} \gg d \gg (\log n)^3$ and $\alpha>\gamma + \frac{\sigma_x^4(\alpha-1)}{2(\sigma_x^2+\sigma_\eta^2)^2}$. Let $(\bm z_{n+1},y_{n+1})$ be the $(n+1)-$th node in the graph with neighbourhood $N_{n+1}$. Then, for the prediction $\hat{y}_{n+1}$ obtained through Algorithm~\ref{alg:attn-proxy-prediction}, we have
$$\bbE\left[\left(y_{n+1} - \hat{y}_{n+1}\right)^2\Bigr| \hat{\bm \beta}_{\lambda,-(n+1)}\right] \stackrel{\bbP}{\longrightarrow} \sigma_\varepsilon^2 ,$$    
as $n\longrightarrow\infty$.
\end{theorem}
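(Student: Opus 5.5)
The plan is to decompose the prediction error and handle each piece with an earlier result. Write $y_{n+1}-\hat y_{n+1} = \varepsilon_{n+1} + \bm x_{n+1}^\top\bm\beta - \bm\lambda_{n+1}^\top\hat{\bm\beta}$, where $\hat{\bm\beta}:=\hat{\bm\beta}_{\lambda,-(n+1)}$. Expand further as
\[
y_{n+1}-\hat y_{n+1} = \varepsilon_{n+1} + (\bm x_{n+1}-\bm\lambda_{n+1})^\top\bm\beta + \bm\lambda_{n+1}^\top(\bm\beta-\hat{\bm\beta}).
\]

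\textbf{Independence.} The key structural point is that $\hat{\bm\beta}$ is computed on the subgraph induced by $[n]\setminus N_{n+1}$. This set of nodes depends on the node-$(n+1)$ data only through $N_{n+1}$. Conditionally on $N_{n+1}$, the data $(\bm x_{n+1},\bm\eta_{n+1},\varepsilon_{n+1})$ together with the neighbour covariates $\{\bm z_j\}_{j\in N_{n+1}}$ (which determine $\bm\lambda_{n+1}$) are then independent of $\hat{\bm\beta}$. Moreover $\varepsilon_{n+1}$ is independent of everything else.

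Given this, conditioning on $\hat{\bm\beta}$ gives
\[
\bbE\bigl[(y_{n+1}-\hat y_{n+1})^2\mid\hat{\bm\beta}\bigr] = \sigma_\varepsilon^2 + \bbE\bigl[\bigl((\bm x_{n+1}-\bm\lambda_{n+1})^\top\bm\beta + \bm\lambda_{n+1}^\top(\bm\beta-\hat{\bm\beta})\bigr)^2\mid\hat{\bm\beta}\bigr].
\]
By Cauchy--Schwarz, it suffices to show two things:
\begin{itemize}
\item $\bbE[((\bm x_{n+1}-\bm\lambda_{n+1})^\top\bm\beta)^2]\to 0$;
\item $(\bm\beta-\hat{\bm\beta})^\top\,\bbE[\bm\lambda_{n+1}\bm\lambda_{n+1}^\top]\,(\bm\beta-\hat{\bm\beta})\to 0$ in probability.
\end{itemize}

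\textbf{First term.} This follows from the $L_2$ closeness of proxies to latent covariates (Lemma~\ref{lambda close to x main text}), applied to node $n+1$, which is exchangeable with the others. Since $\sup_d\|\bm\beta\|<\infty$, one gets $\bbE[((\bm x_{n+1}-\bm\lambda_{n+1})^\top\bm\beta)^2]\le\|\bm\beta\|^2\,\bbE\|\bm x_{n+1}-\bm\lambda_{n+1}\|^2$.

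If that lemma only controls $\|\bm\lambda_i-\bm x_i\|^2/d$ or a coordinatewise quantity, I would instead use directional control. By rotational invariance of the model within each block, $\bbE[\bm\lambda_{n+1}\bm\lambda_{n+1}^\top]$ is a scalar multiple of the identity on each half. So it suffices that the per-coordinate mean square error tends to zero, and this is exactly what the screening analysis should deliver. The condition $\alpha>\gamma+\sigma_x^4(\alpha-1)/(2(\sigma_x^2+\sigma_\eta^2)^2)$ is the same SNR condition as in Theorem~\ref{main text theorem proxy ols}, rewritten.

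\textbf{Second term.} By the same invariance, $\bbE[\bm\lambda_{n+1}\bm\lambda_{n+1}^\top]$ has operator norm $O(1)$: per-coordinate variance is about $\sigma_x^2$ plus a vanishing term. Hence this term is bounded by $O(1)\cdot\|\hat{\bm\beta}-\bm\beta\|^2$. It remains to show $\hat{\bm\beta}\to\bm\beta$ in probability. For this I apply Theorem~\ref{main text theorem proxy ols} to the induced subgraph on $[n]\setminus N_{n+1}$, which has $n'=n-|N_{n+1}|$ nodes, with $|N_{n+1}|=O_{\bbP}(n^{\max(\alpha,\gamma)})=o(n)$.

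\textbf{Main obstacle.} The subgraph is not distributed exactly as $\mathcal G(n',p_n,t_n,d)$. Removing the neighbours of node $n+1$ biases the remaining covariates slightly away from the direction of $\bm x_{n+1}$. It also removes some geometric neighbours of the remaining nodes, and the thresholds use $n$ rather than $n'$.

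I would handle this in two steps.
\begin{enumerate}
\item The threshold mismatch is harmless, because $\log n'=\log n+o(1)$, so the regime conditions are unchanged.
\item For the selection bias, I would condition on $\bm x_{n+1}$ and couple with the full-graph estimator on $[n]$. The key claim is that deleting $o(n)$ rows changes $\bm\Lambda_n^\top\bm\Lambda_n/n$ and $\bm\Lambda_n^\top\bm Y_n/n$ by $o_{\bbP}(1)$ in operator norm. The deleted nodes number only $O(n^{\max(\alpha,\gamma)})$, and each proxy of a remaining node loses only those neighbours also adjacent to node $n+1$. Those are a vanishing fraction, by the concentration of neighbourhood counts used in the proof of Theorem~\ref{main text theorem proxy ols}.
\end{enumerate}
Combining these with the minimal-eigenvalue bound from that proof gives $\|\hat{\bm\beta}-\bm\beta\|\to0$ in probability, which completes the argument.
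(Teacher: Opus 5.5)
Your argument has the same skeleton as the paper's proof. Write $\hat{\bm\beta}$ for $\hat{\bm\beta}_{\lambda,-(n+1)}$ and $\bm v_{n+1}=\bm\lambda_{n+1}-\bm x_{n+1}$. The shared ingredients are:
\begin{itemize}
\item the neighbourhood-deleted estimator decouples $\hat{\bm\beta}$ from node $n+1$;
\item consistency of $\hat{\bm\beta}$ comes from Theorem~\ref{main theorem beta lambda} applied on $n(1-o(1))$ nodes;
\item block rotational invariance turns $\|\|\bm v_{n+1}\|\|_{L_2}=o(\sqrt d)$ into $\|\bbE[\bm v_{n+1}\bm v_{n+1}^\top]\|_{\mathrm{op}}=o(1)$ (Lemma~\ref{R diagonal}).
\end{itemize}
The difference is the grouping. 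The paper writes the error as $\varepsilon_{n+1}+\bm x_{n+1}^\top(\bm\beta-\hat{\bm\beta})+(\bm x_{n+1}-\bm\lambda_{n+1})^\top\hat{\bm\beta}$. The middle term is exactly $N(0,\sigma_x^2\|\bm\beta-\hat{\bm\beta}\|^2)$ given $\hat{\bm\beta}$. The last term's conditional second moment is at most $\|\hat{\bm\beta}\|^2\|\bbE[\bm v_{n+1}\bm v_{n+1}^\top]\|_{\mathrm{op}}$. Independence is used for both. In your grouping the proxy-error term carries the deterministic $\bm\beta$. Its conditional second moment given $\hat{\bm\beta}$ is therefore $o_\bbP(1)$ by Markov's inequality, with no independence needed. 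The price is that your estimation term needs $\|\bbE[\bm\lambda_{n+1}\bm\lambda_{n+1}^\top]\|_{\mathrm{op}}\le 2\bbE\|\bm\lambda_{n+1}\|^2/d=O(1)$, which the same invariance supplies.

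Two points need fixing.
\begin{itemize}
\item Your primary bound $\|\bm\beta\|^2\,\bbE\|\bm x_{n+1}-\bm\lambda_{n+1}\|^2$ fails, because Lemma~\ref{lambda close to x main text} only gives $\bbE\|\bm v_{n+1}\|^2=o(d)$. The fallback is therefore mandatory. The matrix that must be block-scalar there is $\bbE[\bm v_{n+1}\bm v_{n+1}^\top]$, not $\bbE[\bm\lambda_{n+1}\bm\lambda_{n+1}^\top]$. This yields $\bm\beta^\top\bbE[\bm v_{n+1}\bm v_{n+1}^\top]\bm\beta\le 2\|\bm\beta\|^2\bbE\|\bm v_{n+1}\|^2/d=o(1)$, which is exactly the paper's step.
\item Conditional independence given $N_{n+1}$ is not literally true. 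The retained nodes are those with $\bm x_{n+1}^\top\bm x_j<\sigma_x^2t_n\sqrt d$ and no ER edge. That is the very selection effect you later call the main obstacle, so $\hat{\bm\beta}$ depends weakly on $\bm x_{n+1}$ and hence on $\bm\lambda_{n+1}$. The paper asserts unconditional independence with the same gap, so this is a shared soft spot rather than something the paper resolves. Your grouping at least confines it to the single term $\bm\lambda_{n+1}^\top(\bm\beta-\hat{\bm\beta})$.
\end{itemize}

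For the subgraph, the paper simply applies Theorem~\ref{main theorem beta lambda} because $|N_{n+1}|=o(n)$. It states $|N_{n+1}|=O_\bbP(n^\alpha/t_n)$, which omits the $\Theta(n^\gamma)$ ER part that you correctly include. Your coupling with the full-graph estimator on $[n]$ is more careful than anything in the paper. However, a row count will not close it. The deleted block has trace $\sum_{j\in N_{n+1}}\|\bm\lambda_j\|^2/n\asymp |N_{n+1}|\,d/n$, which diverges for instance when $\alpha=0.9$ and $d=n^{0.3}$. You would need operator-norm (not trace) control of the deleted rows and of the perturbed proxies of the remaining nodes.
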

The assumptions of this theorem are the same as the ones in Theorem~\ref{main text theorem proxy ols} because, heuristically, as soon as the regression coefficient $\bm \beta$ can be estimated consistently, one can use the consistent estimator and the attention-based proxy of the latent node covariate, which is also efficient under the same conditions, to produce a low MSE prediction of the response variable. In practice, to circumvent issues around dependence of the estimate of $\bm \beta$ obtained from the entire graph and the attention-based proxy $\bm \lambda_{n+1}$, we estimate $\bm \beta$ by executing Algorithm~\ref{alg:attn-proxy-regression} on a subgraph obtained by deleting the entire neighbourhood of the unlabelled $(n+1)$-th node.

The detailed proof of Theorem~\ref{main text theorem gann prediction} is in Appendix~\ref{key results appdx}.

\subsection{Prediction lower bound for graph convolutional network}

In addition, we show that the prediction approach based on a non-attention-based graph convolutional network results in worse performance
by establishing a lower bound on the MSE in the following theorem.
\begin{theorem}\label{main text theorem non-att gnn prediction}
    Assume that $(\psi_\ell)$ is a sequence of odd, $C_\psi$-Lipschitz $\bbR^d\mapsto \bbR^d$ activation functions and $(M_0^{(\ell)}, M_1^{(\ell)})$ is a sequence of $\bbR^{d\times d}$ matrices with their operator norm bounded above by $C_M$; and $(\psi_\ell, M_0^{(\ell)}, M_1^{(\ell)})$ are $\sigma\left(\left\{y_i\right\}_{i\in [n]},\left\{\bm z_i\right\}_{i\in [n+1]}\right)$ measurable. Let a $L$-layer network be defined as
$$\bm \xi_i^{(\ell+1)} = \psi_{\ell}\left(M_0^{(\ell)}\bm \xi_i^{(\ell)}+M_1^{(\ell)}\frac{1}{|N_i|}\sum_{j\in N_i}\bm \xi_j^{(\ell)}\right),$$
for $\ell=0,1,\ldots,L-1$,
and $\bm \xi_i^{(0)} = \bm z_i$ for all $i\in [n]$. 
If $\log n \ll d \ll n^\alpha$ and $\alpha<\gamma$, then
for any $\sigma\left(\left\{y_i\right\}_{i\in [n]},\left\{\bm z_i\right\}_{i\in [n+1]}\right)$ measurable $C$-Lipschitz function $g:\bbR^{d}\mapsto \bbR$,
$$\bbE\left(y_{n+1} - g\left(\bm \xi_{n+1}^{(L)}\right)\right)^2\geq \sigma_\varepsilon^2 + \frac{\sigma_x^2\sigma_\eta^2}{\sigma_x^2+\sigma_\eta^2}\|\bm \beta\|^2+o(1),$$
as $n,d\longrightarrow \infty$.
\end{theorem}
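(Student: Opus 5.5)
The plan is to show that, when $\alpha<\gamma$, the aggregated messages along the graph carry essentially no information about $\bm x_{n+1}$ beyond what $\bm z_{n+1}$ already carries. Hence $g(\bm \xi_{n+1}^{(L)})$ behaves asymptotically like a function of $\bm z_{n+1}$ and of data independent of node $n+1$, and such predictors are limited by the errors-in-variables Bayes risk.

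\textbf{Step 1: reduce to a benchmark predictor.} We first reduce the problem to a comparison predictor. Define $\tilde{\bm \xi}^{(\ell)}_{i}$ by the same recursion, but with every neighbourhood average replaced by its ``ER-only'' counterpart, namely the average of $\bm \xi_j^{(\ell)}$ over the ER neighbours of $i$ that are not geometric neighbours. Since $\gamma>\alpha$, the number of geometric neighbours $|N_i\cap E_1|\asymp n^{\alpha}$ (up to logs) is a vanishing fraction of $|N_i|\asymp n^{\gamma}$, uniformly in $i$ with high probability. We will establish this from binomial concentration together with the Gaussian tail for $\bm x_i^\top\bm x_j/(\sigma_x^2\sqrt d)$ at level $t_n$, which is valid since $d\gg\log n$. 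We will also bound $\max_j\|\bm \xi_j^{(\ell)}\|/\sqrt d$ using Lipschitzness, the bound $\|M\|_{\mathrm{op}}\le C_M$, and $\psi_\ell(0)=0$ (from oddness). An induction on $\ell$ with constants $(C_\psi C_M)^L$ then gives
\[
\mathbb E\|\bm \xi_{n+1}^{(L)}-\tilde{\bm \xi}_{n+1}^{(L)}\|^2=o(1)\cdot\mathbb E\|\bm z_{n+1}\|^2\cdot(\text{fraction of geometric neighbours}).
\]
This is where the main obstacle lies. Naively the error is $O(d\, n^{\alpha-\gamma})$, which is not $o(1)$ in squared norm, so we need a finer statement than a crude norm bound. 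We will use that $g$ is $C$-Lipschitz and examine only the component of the perturbation that matters. Geometric neighbours $j$ have $\bm x_j$ correlated with $\bm x_{n+1}$ only along the direction of $\bm x_{n+1}$, with $\bm x_{n+1}^\top\bm x_j/\|\bm x_{n+1}\|\approx \sigma_x t_n$. Their contribution to the average is therefore approximately a vector of norm $n^{\alpha-\gamma}\sigma_x t_n\cdot O(1)$ along $\bm x_{n+1}$, plus isotropic noise of order $\sqrt{d/|N_{n+1}|}$. Both are $o(1)$ since $d\ll n^\alpha<n^\gamma$. We will carry this decomposition through the layers, tracking the ``signal direction'' component separately, with the odd/Lipschitz structure keeping the means controlled.

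\textbf{Step 2: the ER-only average is uninformative.} We then show that the ER-only average is uninformative. The ER neighbours are chosen independently of all covariates, so $\tilde{\bm \xi}^{(L)}_{n+1}$ is a function of $\bm z_{n+1}$ together with quantities built from $\{\bm z_j\}_{j\le n}$ and edge indicators independent of $(\bm x_{n+1},\bm\eta_{n+1},\varepsilon_{n+1})$. The only exception is the weak dependence through node $n+1$ appearing in multi-hop neighbourhoods, which is a $1/|N_j|\asymp n^{-\gamma}$ effect and is handled just as in Step 1. Conditioning on everything except node $n+1$'s latent variables, the predictor is then a function of $\bm z_{n+1}$ plus $o_P(1)$.

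\textbf{Step 3: Bayes risk lower bound.} Finally we apply the Bayes risk bound. For any measurable $h$ we have $\mathbb E(y_{n+1}-h(\bm z_{n+1}))^2\ge \sigma_\varepsilon^2+\mathbb E\,\mathrm{Var}(\bm x_{n+1}^\top\bm\beta\mid \bm z_{n+1})$. By Gaussian conditioning, the second term equals $\frac{\sigma_x^2\sigma_\eta^2}{\sigma_x^2+\sigma_\eta^2}\|\bm \beta\|^2$. The $o(1)$ perturbation from Steps 1 and 2 transfers to the risk through the Lipschitz bound on $g$ together with Cauchy--Schwarz, using $\sup_d\|\bm\beta\|<\infty$, which yields the stated inequality.
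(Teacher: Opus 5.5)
The gap is in Step~1, and it is not the one you flag. Replacing $\frac{1}{|N_i|}\sum_{j\in N_i}\bm\xi_j^{(\ell)}$ by an average over a smaller set does not just delete the geometric summands. It also changes the normalisation.

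Let $G_i$ denote the removed geometric neighbours and $\tilde{\bm m}_i$ your ER-only average. The difference of messages then contains three pieces: $\frac{1}{|N_i|}\sum_{j\in G_i}\bm\xi_j^{(\ell)}$, the propagated error, and a reweighting term $-\frac{|G_i|}{|N_i|}\tilde{\bm m}_i$. The only control you propose on the ER side is $\max_j\|\bm\xi_j^{(\ell)}\|=O(\sqrt d)$. That leaves the reweighting term of size $n^{\alpha-\gamma+o(1)}\sqrt d$, which is exactly the crude bound you concede need not vanish (e.g.\ $\alpha=0.5$, $\gamma=0.6$, $d=n^{0.4}$). Your signal-plus-isotropic-noise refinement covers only the geometric summands. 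The Step~2 view, that the ER average need only be uninformative rather than small, cannot repair this.

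The missing ingredient is that the ER average is itself small. The message-free maps $\phi_0(\bm u)=\bm u$, $\phi_{\ell+1}(\bm u)=\psi_\ell(M_0^{(\ell)}\phi_\ell(\bm u))$ are odd and $(C_\psi C_M)^\ell$-Lipschitz. Since $N_i^{(ER)}$ is independent of the covariates, $\{\phi_\ell(\bm z_j)\}_{j\in N_i^{(ER)}}$ are i.i.d.\ centred sub-Gaussian vectors, so $\bbE\|\frac{1}{|N_i|}\sum_{j\in N_i^{(ER)}}\phi_\ell(\bm z_j)\|^2=O(dn^{-\gamma})$. This is the ER half of Lemma~\ref{vanishing msg lemma}. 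The geometric half is a fraction of order $n^{\alpha-\gamma}$ of the neighbours, with conditional mean of norm $O(t_n)$, plus fluctuations. Together they show the \emph{entire} message is $O(dn^{-\gamma}+n^{2(\alpha-\gamma)})$ in mean square. The paper therefore compares $\bm\xi_{n+1}^{(L)}$ directly with $\phi_L(\bm z_{n+1})$ by induction over layers (Lemma~\ref{GNN output close to non-GNN}) and applies your Step~3 to $g(\phi_L(\bm z_{n+1}))$. Once the ER estimate is in hand, the ER-only network and Step~2 are superfluous.

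Two smaller corrections. First, in Step~2 the set $N_i^{(ER)}\setminus G_i$ is \emph{not} chosen independently of the covariates. Whether an ER neighbour $j$ of $n+1$ is dropped depends on $\bm x_{n+1}^\top\bm x_j$. This, together with the degree normalisation, is where information on $\bm x_{n+1}$ beyond $\bm z_{n+1}$ leaks in. The multi-hop appearance of node $n+1$ that you single out enters through $\tilde{\bm\xi}_{n+1}^{(\ell)}$, i.e.\ mainly through $\bm z_{n+1}$. That part is harmless, because the Bayes bound conditions on $\bm z_{n+1}$. Averaging over all of $N_i^{(ER)}$ would make your comparison network exactly conditionally independent of $\bm x_{n+1}$ given $\bm z_{n+1}$ (the parameters being functions of $\{y_i\}_{i\le n}$ and $\{\bm z_i\}_{i\le n+1}$). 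The reweighting issue above would remain, however.

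Second, for $\ell\ge1$ you do not need to track a signal direction through the nonlinearities. Write $\bm z_j\mid\{\bm x_i^\top\bm x_j\ge\sigma_x^2t_n\sqrt d\}\stackrel{d}{=}S_t\bm u+R$ with $\bm u=\bm x_i/\|\bm x_i\|$, and couple it with the symmetric $S'\bm u+R$, $S'\sim N(0,\sigma_x^2)$. Oddness and Lipschitzness then give $\|\bbE[\phi_\ell(\bm z_j)\mid\bm x_i^\top\bm x_j\ge\sigma_x^2t_n\sqrt d]\|\le C\,\bbE|S_t-S'|=O(t_n)$ at every layer at once (Lemma~\ref{mean of lipschitz of y is O(tn)}).

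Your Step~3 transfer is fine as stated. A reverse triangle inequality in $L_2$ even avoids the paper's case split on the size of $\|g(\phi_L(\bm z_{n+1}))\|_{L_2}$.
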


\begin{figure*}[tb]
\vskip 0.2in
\begin{center}
\centerline{\includegraphics[width=1.5\columnwidth]{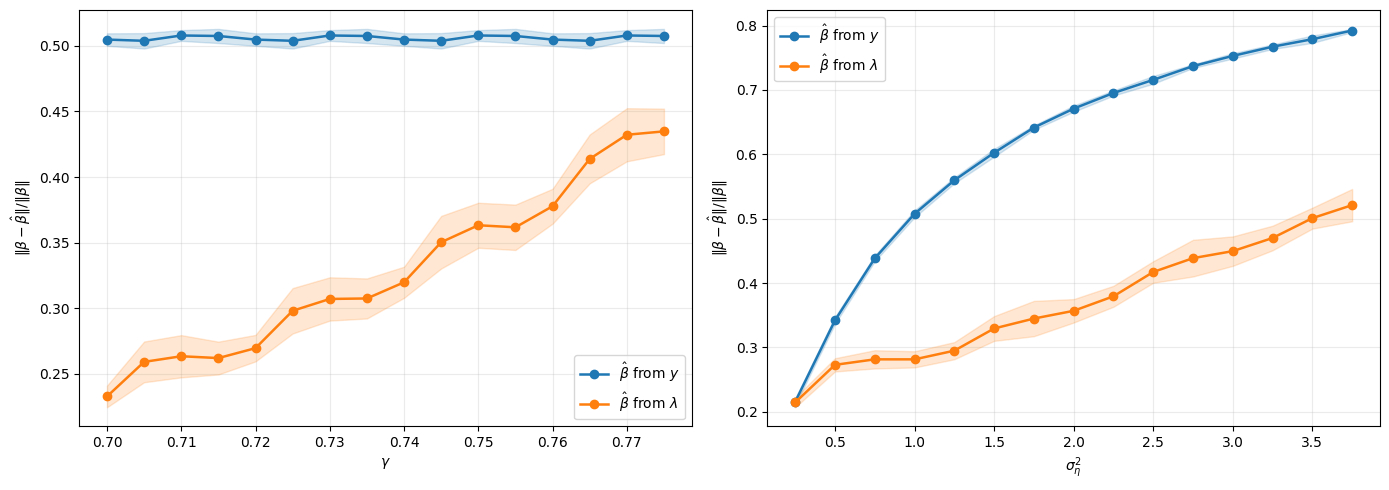}}
\caption{Comparison of relative $L_2$ errors of the estimates $\hat{\bm \beta}_\lambda$ and $\hat{\bm \beta}_z$ by varying $\gamma$ and $\sigma_\eta^2$ respectively. We fix the parameters $n=30000, d=250, \sigma_x^2=1,\sigma_\varepsilon^2=1,$ and $\alpha=0.72$. In the first plot, we vary $\gamma$ from $0.70$ to $0.75$ in $0.05$ increments and fix $\sigma_\eta^2=1$. In the second plot, we vary $\sigma_\eta^2$ from $0.25$ to $3.00$ in $0.25$ increments and fix $\gamma=0.725$. Curves show seed-averaged relative errors (mean $\pm \mathrm{SD}$ ) across $10$ seeds.}
\label{L2 error beta}
\end{center}
\vskip -0.2in
\end{figure*}

Theorem~\ref{main text theorem non-att gnn prediction} shows that increasing depth does not circumvent the fundamental error lower bound. For every finite-depth neighbourhood-aggregation GNN with Lipschitz updates/readout, the stated MSE lower bound still holds, so simply stacking more aggregation layers cannot improve performance in the ER-dominant regime $(\gamma>\alpha)$.
Thus, in the ER-dominant regime $(\gamma>\alpha)$, adding more neighbourhood-aggregation layers cannot recover the signal, while the proposed attention-based screening can (Theorem~\ref{main text theorem gann prediction}).

\subsection{Auxiliary result on denoising guarantee}\label{auxiliary results section}

At the center of the prediction guarantee is the observation that geometric edges preferentially connect nodes with aligned latent covariates, so averaging over geometric neighbours would denoise $\bm z_i$. The challenge is that observed neighbourhoods mix geometric and ER neighbours and the attention scores depend on noisy features. Our discretized attention therefore (i) screens neighbours to retain mostly geometric ones and (ii) uses cross-part screening/averaging to mitigate selection bias. Lemma~\ref{lambda close to x main text} formalizes that the resulting proxy $\bm \lambda_i$ is $L_2$-close to $\bm x_i$ under suitable scaling. Beyond the regression problem studied here, this suggests that similar discrete-attention, cross-part denoising layers could serve as a generic preprocessing step for other downstream tasks on contaminated graphs (e.g., classification or clustering) by producing denoised node features.

\begin{lemma}\label{lambda close to x main text}
    Assume that $\gamma < \alpha + \frac{\sigma_x^4(1-\alpha)}{2(\sigma_x^2+\sigma_\eta^2)^2}$ and $n^\alpha\gg d \gg (\log n)^3$. Then for each $i\in [n]$,
    $$\| \|\bm \lambda_i - \bm x_i\| \|_{L_2} = o(\sqrt{d}).$$
\end{lemma}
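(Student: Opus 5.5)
The plan is to write $\bm\lambda_i-\bm x_i$ blockwise and bound the $k=1$ block; the $k=2$ block is symmetric. Let $S_i=\{j\in N_i: w_{ij,2}=1\}$ be the screened set. Split it as $S_i=S_i^{g}\cup S_i^{e}$, where $S_i^{g}=S_i\cap\{j:(i,j)\in E_1\}$ holds the screened geometric neighbours and $S_i^{e}$ the screened pure-ER neighbours. For $j\in S_i^g$ decompose
\[
\bm x_j^{(1)}=c_{ij}\,\bm x_i^{(1)}+\bm r_j, \qquad c_{ij}=\frac{\bm x_i^{(1)\top}\bm x_j^{(1)}}{\|\bm x_i^{(1)}\|^2},
\]
with $\bm r_j\perp \bm x_i^{(1)}$. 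Then $\bm\lambda_i^{(1)}-\bm x_i^{(1)}$ splits into four terms:
(a) a scaling bias $\bigl(\frac{\sqrt d}{t_n}\bar c_i\,\frac{|S_i^g|}{|S_i|}-1\bigr)\bm x_i^{(1)}$, where $\bar c_i$ is the average of $c_{ij}$ over $S_i^g$;
(b) a geometric fluctuation term $\frac{\sqrt d}{t_n}\frac{1}{|S_i|}\sum_{j\in S_i^g}(\bm r_j+\bm\eta_j^{(1)})$;
(c) an ER term $\frac{\sqrt d}{t_n}\frac{1}{|S_i|}\sum_{j\in S_i^e}\bm z_j^{(1)}$;
(d) the event $\{S_i=\emptyset\}$, on which $\bm\lambda_i^{(1)}=\bm 0$ and the error is $\|\bm x_i^{(1)}\|$.
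Since $\|\bm x_i\|^2\approx\sigma_x^2 d$, it suffices to show each term has $L_2$ norm $o(\sqrt d)$. Concretely, (a) needs the bracket to be $o(1)$ in a suitably integrable sense, and (d) needs $\bbP(S_i=\emptyset)=o(1)$ together with a Cauchy–Schwarz bound.

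\textbf{Counting step.} Conditionally on $\bm z_i,\bm x_i$ lying in a high-probability "typical" set, I will show $|S_i^g|\gtrsim n^{\alpha}/\mathrm{polylog}(n)$ and $\bbE|S_i^e|\lesssim n^{\gamma-\kappa}$ up to polylog factors, where
\[
\kappa=\frac{\sigma_x^4(1-\alpha)}{2(\sigma_x^2+\sigma_\eta^2)^2}.
\]
The typical set consists of norms of blocks of $\bm x_i,\bm\eta_i$ concentrated uniformly over $i$, which is where $d\gg(\log n)^3$ is used.

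For ER pairs, $\bm z_i^{(2)}$ and $\bm z_j^{(2)}$ are independent. Hence $\bm z_i^{(2)\top}\bm z_j^{(2)}$ is, conditionally, Gaussian with variance $\approx(\sigma_x^2+\sigma_\eta^2)^2 d/2$. A Gaussian tail at level $\sigma_x^2t_n\sqrt d/2$ then gives pass probability $n^{-\kappa+o(1)}$, so $\bbE|S_i^e|\approx n^{\gamma}\cdot n^{-\kappa}$. The assumption $\gamma<\alpha+\kappa$ is exactly $|S_i^e|=o(|S_i^g|)$.

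For geometric pairs, the event $\bm x_i^\top\bm x_j\ge\sigma_x^2t_n\sqrt d$ has probability $n^{\alpha-1+o(1)}$. Conditionally on it, the block dot products each concentrate near half the threshold, so the block-$2$ noisy dot product clears $\sigma_x^2t_n\sqrt d/2$ with probability bounded below. Binomial/Chernoff concentration then gives $|S_i^g|\asymp n^{\alpha}$ up to logs, and also yields (d).

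\textbf{Bias and fluctuation steps.} For (c), the $\bm z_j^{(1)}$ with $j\in S_i^e$ are independent of the screening, since screening only used block $2$ and $\bm z_i$. So, conditionally on the counts,
\[
\bbE\|(c)\|^2\lesssim \frac{d}{t_n^2}\cdot\frac{|S_i^e|\,d}{|S_i|^2}=o(d).
\]
For (b), conditionally on the projections, the $\bm r_j+\bm\eta_j^{(1)}$ are roughly independent and centered with covariance $O(1)\bm I$. Thus
\[
\bbE\|(b)\|^2\lesssim\frac{d}{t_n^2}\cdot\frac{d}{|S_i^g|},
\]
which is $o(d)$ because $d\ll n^\alpha$ up to logs.

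The centering of $\bm r_j$ requires care, since the selection depends on $\bm x_j^{(1)}$ only through $\bm x_i^{(1)\top}\bm x_j^{(1)}$. This is where the cross-fitting matters: the block-$2$ screening involves $\bm\eta_j^{(2)}$ and $\bm x_j^{(2)}$ but not $\bm\eta_j^{(1)}$. Consequently $\bm\eta_j^{(1)}$ stays exactly centered and independent, and $\bm r_j$ is a rotationally symmetric Gaussian component orthogonal to $\bm x_i^{(1)}$.

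\textbf{Main obstacle.} I expect the main obstacle to be (a): showing $\frac{\sqrt d}{t_n}\bar c_i\to1$ in $L_2$. Here $c_{ij}$ is the block-$1$ projection coefficient conditioned on the joint event (full latent dot product $\ge\sigma_x^2t_n\sqrt d$, block-$2$ noisy dot product $\ge\sigma_x^2t_n\sqrt d/2$). My plan is to write $U=\bm x_i^{(1)\top}\bm x_j^{(1)}$ and $V=\bm x_i^{(2)\top}\bm x_j^{(2)}$; given $\bm x_i$, these are independent Gaussians with variance $\approx\sigma_x^4d/2$, and the noise cross terms are independent of both. I would then use a sharp Gaussian large-deviation (Laplace/overshoot) argument to show that, conditionally on the event, $U=\sigma_x^2t_n\sqrt d/2\,(1+o_{L_2}(1))$. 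The reason is that the cheapest way to realise $U+V\ge$ threshold under a lognormal-scale tail is an even split, with overshoot of order $\sqrt d/t_n$, which is relatively negligible.

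Dividing by $\|\bm x_i^{(1)}\|^2\approx\sigma_x^2d/2$ gives $c_{ij}=\frac{t_n}{\sqrt d}(1+o(1))$. Combined with $|S_i^g|/|S_i|\to1$, this yields $\|(a)\|=o(1)\cdot\sqrt d$. The delicate point is making the $o(1)$ uniform enough, together with the random denominators $|S_i|$, to pass to an $L_2$ bound rather than only in probability. I would handle this by truncating on the typical set and bounding the complement crudely via $\|\bm\lambda_i\|\le\frac{\sqrt d}{t_n}\max_j\|\bm z_j\|$ and Cauchy–Schwarz.
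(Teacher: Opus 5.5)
Your proposal is correct and has the same skeleton as the paper. Cross-fitting centres screened ER neighbours in the averaged block. The screened-ER fraction is of order $n^{\gamma-\kappa-\alpha}$, which is where $\gamma<\alpha+\kappa$ enters. Screened geometric neighbours have block-$1$ mean $\approx\frac{t_n}{\sqrt d}\bm x_i^{(1)}$, and fluctuations have relative size $\sqrt{d/(t_n n^\alpha)}$.

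The execution differs from the paper's. The paper splits $\bm\lambda_i-\bm x_i$ into $\bm\lambda_i-\bbE[\bm\lambda_i\mid\bm x_i]$ and $\bbE[\bm\lambda_i\mid\bm x_i]-\bm x_i$. For the bias it does two things:
\begin{enumerate}
\item It uses the Gaussian regression formula for $\bbE[\bm z_j^{(1)}\mid\bm x_i,\bm\eta_i,\bm x_i^\top\bm x_j,\bm z_i^{(2)\top}\bm z_j^{(2)}]$, a ratio whose random denominator must be linearised.
\item It gets the conditional means of the two dot products on the joint event by comparing the \emph{unconditional} pair $(\bm x_i^\top\bm x_j,\bm z_i^{(2)\top}\bm z_j^{(2)})$ with a bivariate Gaussian, via a Wasserstein CLT and a boundary-strip argument. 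This is where its $t_n^3/\sqrt d$ error and the condition $d\gg(\log n)^3$ come from.
\end{enumerate}
Its variance part is a Rosenthal bound with inverse-binomial moments.

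Your decomposition $\bm x_j^{(1)}=c_{ij}\bm x_i^{(1)}+\bm r_j$, with conditioning on $(\bm x_i,\bm\eta_i)$, is more economical. The selection statistics become exactly jointly Gaussian, and $\bm r_j+\bm\eta_j^{(1)}$ is exactly centred and independent of selection. The whole bias reduces to the scalar $\bbE[\bm x_i^{(1)\top}\bm x_j^{(1)}\mid\text{joint event},\bm x_i,\bm\eta_i]$. You also treat $\{S_i=\emptyset\}$ explicitly, which the paper's inverse-moment lemma handles only implicitly. You need only an upper bound on the ER count, whereas the paper's count lemma assumes $\gamma>\kappa$. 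In your version, $d\gg(\log n)^3$ enters through the typical set. You need norm deviations $\delta_n$ with $t_n^2\delta_n\to0$ so the tail probabilities are sharp. You also need failure probability small enough to absorb the crude complement bound $\frac{d^2}{t_n^2}\bbP(B^c)^{1/2}$. Taking $\delta_n\asymp\sqrt{\log n/d}$ gives both exactly when $d\gg(\log n)^3$. In exchange, the paper's route yields explicit $L_r$ rates for all $r$, which it reuses in its covariance-concentration lemmas.

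Three points need tightening.
\begin{enumerate}
\item As you define $S_i^e$ (screened ER neighbours that are \emph{not} geometric), membership requires $\bm x_i^\top\bm x_j<\sigma_x^2t_n\sqrt d$. That condition involves $\bm x_j^{(1)}$, so $\bm z_j^{(1)}$ is not exactly independent of $\{j\in S_i^e\}$. Either split as the paper does (all screened ER neighbours versus screened neighbours joined by a geometric edge but no ER edge), so ER-part membership depends only on the ER indicator and block-$2$ screening. Or bound the resulting bias, which is polynomially small.
\item The counting must be sharp. Your bound on (b) is $o(d)$ exactly when $t_n^2|S_i^g|/d\to\infty$. Since the hypothesis is only $d=o(n^\alpha)$, you need $|S_i^g|\gtrsim n^\alpha/t_n$ from the Mills ratio on the typical set. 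A per-pair probability $n^{\alpha-1+o(1)}$, or an unspecified polylogarithmic loss, is not enough.
\item In your main step, the even-split heuristic must also account for block-$2$ screening. It tilts $\bm x_i^{(2)\top}\bm x_j^{(2)}$ upward by $O(\sqrt d)$, which moves $\bbE[\bm x_i^{(1)\top}\bm x_j^{(1)}\mid\cdot]$ by a relative $O(1/t_n)$. This is harmless and consistent with the $1/t_n$ term in the paper's rate.
\end{enumerate}
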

The proof of this lemma is in Appendix~\ref{lambda close to x appendix}. This lemma highlights how $\bm \lambda_i$ is a more efficient proxy for $\bm x_i$, as compared to $\bm z_i$, in the $L_2$ sense. This result is the main motivator to show how the regression estimator that uses $\bm \lambda_i$'s is more efficient than the one that uses $\bm z_i$'s. 

\section{Experiments}\label{numerical experiments}

We evaluate the proposed discretized-attention proxies in two complementary regimes. On synthetic graphs generated from the model in Section~\ref{model overview}, we isolate the estimation effect of the proxies by comparing the regression-coefficient error of proxy regression (Algorithm~\ref{alg:attn-proxy-regression}) to naive errors-in-variables OLS on the noisy covariates. On real graphs (OGBN-Products, OGBN-MAG (paper), and PyG-Reddit), we evaluate prediction MSE at scale. An important consideration in the real-data experiments is that a substantial fraction of nodes are low-degree. Since the proxy construction relies on averaging over screened neighbourhoods, its benefits are most reliable for nodes with sufficiently large neighbourhoods. We therefore report performance both overall and on a high-degree subset, and we make explicit when predictions use a low-degree fallback.

\subsection{Synthetic data}

\begin{figure*}[tb]
\vskip 0.2in
\begin{center}
\centerline{\includegraphics[width=1.9\columnwidth]{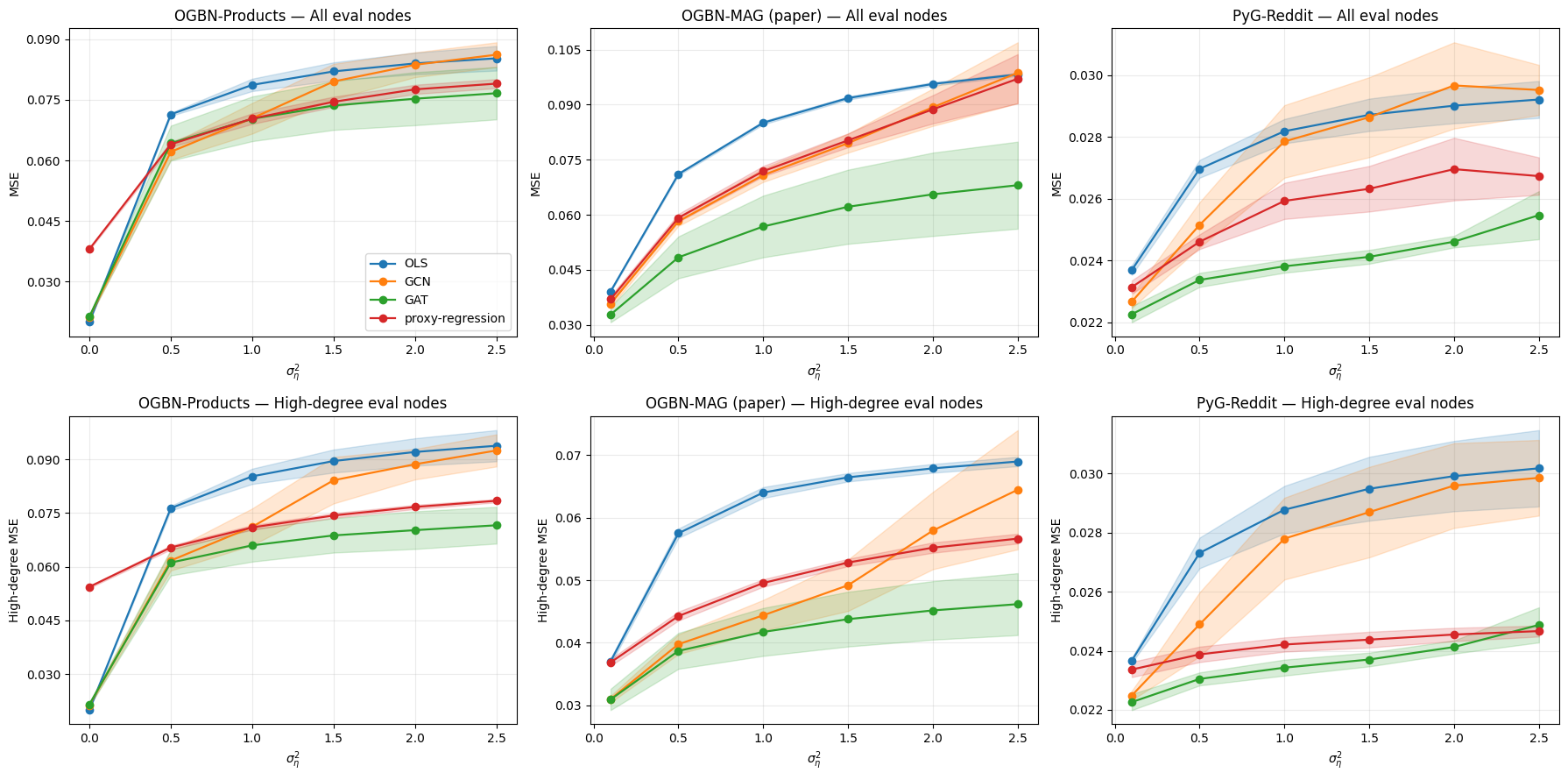}}
\caption{MSE comparison for graph-based regression prediction on OGBN-Products, OGBN-MAG (paper), and PyG-Reddit under ER edge contamination and covariate noise ($\sigma_\eta^2$).
Methods: OLS on noisy covariates, GCN, GAT, and proxy-regression (ours).
Curves show seed-averaged MSE (mean $\pm \,\mathrm{SD}$) across $10$ seeds.
Top row evaluates all nodes; bottom row evaluates high-degree nodes only. }
\label{pred error real}
\end{center}
\vskip -0.2in
\end{figure*}

\begin{table*}[tb]
\caption{Dataset statistics for the real-graph experiments.}
\begin{center}
\begin{small}
\begin{sc}
\begin{tabular}{lccccc}
\toprule
Dataset & \#nodes & \#edges & \#feature dim & \#held-out nodes & \#high-degree nodes \\
\midrule
ogbn-products    & 2{,}449{,}029 & 61{,}859{,}140 & 100 & 200{,}000 & $\sim$ 51\% \\
ogbn-mag (paper) & 736{,}839 & 5{,}396{,}336 & 128 & 80{,}000 & $\sim$ 21\% \\
pyg-reddit       &   232{,}965   & 57{,}307{,}946 & 602 & 50{,}000  &  $\sim$ 54\%\\
\bottomrule
\end{tabular}
\end{sc}
\end{small}
\end{center}
\vskip -0.1in
\label{dataset table}
\end{table*}

We simulate graphs exactly from the data-generating process in Section~\ref{model overview}. We compare the two estimators $\hat{\bm \beta}_z$ (Proposition~\ref{main text theorem naive ols}) and $\hat{\bm \beta}_\lambda$ (Algorithm~\ref{alg:attn-proxy-regression}). Performance is measured by the relative coefficient estimation error $\frac{\|\hat{\bm \beta} - \bm \beta\|}{\|\bm \beta\|}$, averaged across independent runs (with variability bands).

Across parameter sweeps, proxy regression consistently improves over naive OLS. As ER contamination increases, proxy quality degrades as expected, but proxy regression remains markedly more accurate than regressing on $\bm Z_n$. The benefits of proxy-regression are most pronounced at higher covariate-level noise levels $\sigma_\eta^2$, aligning with the goal of undoing attenuation and consistent with our theoretical predictions (Figure~\ref{L2 error beta}).
\subsection{Real data}

We next test whether the same denoising mechanism improves prediction on real-world graph datasets:
OGBN-Products, OGBN-MAG (paper) \cite{10.5555/3495724.3497579}, and PyG-Reddit \cite{10.5555/3294771.3294869} (Table~\ref{dataset table}). 
Since these benchmarks do not come with regression responses, we construct a controlled node-regression task via standard preprocessing. We preprocess the raw node features $\bm v_i$ into covariates $\bm x_i = f(\bm v_i)$, where $f$ is trained on the observed graph using an edge-prediction objective. We then
treat the $\{\bm x_i\}$ as latent covariates and generate noisy covariates $\bm z_i = \bm x_i + \bm \eta_i$ and
responses $y_i = \bm x_i^\top \bm \beta + \varepsilon_i$, sweeping the covariate-noise level $\sigma_\eta^2$. To probe
robustness to additional non-geometric structure and to vary effective neighbourhood sizes, we also augment the
observed graph by adding Erd\H{o}s--R\'enyi edges at controlled rates. 

We compare four predictors: OLS on $\bm z$
(ignoring the graph), a trained GCN regressor, a trained GAT regressor, and our proxy-regression predictor, which is a deliberately simplified GAT-style discrete-attention architecture designed to match the mechanism we can analyze
theoretically. Note that the GAT regressor we train uses a scaled dot-product attention rule (query–key dot-products with softmax normalization) in the style of \cite{NIPS2017_3f5ee243}, rather than the more commonly used additive MLP-based attention scoring used in the original GAT formulation \cite{veličković2018graphattentionnetworks}. Our goal is not to be state of the art (indeed, we expect a flexible learned GAT to be strong), but
to test whether our simple, analyzable attention-denoising rule yields systematic gains over non-attention message
passing and naive OLS under controlled covariate noise and edge contamination. 

Evaluation is conducted on held-out nodes, chosen by randomly subsampling nodes over multiple seeds, and then averaging errors. As a measure of performance, we report test MSE as a function of the covariate-noise level $\sigma_\eta^2$ under ER edge contamination (Figure~\ref{pred error real}). Since the proxy construction in our method relies on screened neighbourhood averages, its gains are concentrated on nodes with sufficiently large neighbourhoods. Accordingly, we apply Algorithm~\ref{alg:attn-proxy-prediction} only on a dataset-specific high-degree subset (while retaining low-degree neighbours), and use a GCN fallback for the remaining low-degree nodes (Appendix~\ref{experiment appdx}). We therefore report both overall MSE (top row of Figure~\ref{pred error real}), where ``proxy-regression'' should be read as ``proxy-regression on high-degree nodes + GCN elsewhere''; and MSE restricted to high-degree evaluation nodes (bottom row of Figure~\ref{pred error real}), where the curves isolate proxy regression without fallback and make the denoising effect most visible. In these high-degree plots, proxy-regression (our proposed method) shows more pronounced gains over OLS and the trained GCN baseline, while a fully trained GAT performs even better, as expected. Finally, at this high edge-contamination level, the relative advantage of our attention-based method over GCN becomes more pronounced as $\sigma_\eta^2$ increases, consistent with our theory’s prediction that attention helps more when feature-noise is larger. 

All experiments were run on a computing cluster using NVIDIA H200 GPUs (one GPU per job), with SLURM-managed resources (64 CPU cores and 50 GB RAM).

\section{Discussion}
In this paper, we study a node regression task on a random geometric graph model
and show that a specifically-designed attention-based graph neural network achieves consistent parameter estimation and provably outperforms traditional graph convolutional networks in terms of prediction error.
A natural extension is to move beyond the random dot-product graph model and develop analogous guarantees for other graph generative mechanisms.
While we focus on regression tasks in this paper, the same denoising proxies could potentially serve as a preprocessing step for other downstream node-level or edge-level tasks such as node classification and link prediction, and it would be interesting to understand when a similar ``attention vs. averaging'' separation persists.

\bibliography{gatreg}
\bibliographystyle{icml2026}

\newpage
\appendix
\onecolumn
\section{Model description and notations}

\subsection{Observation and regression model}

$G_0 = (\bm X_n, E)$ is a random dot-product graph with contaminating edges from an independent Erd\H{o}s--R\'enyi model. We observe $E$, but not the covariate matrix $\bm{X}_n$. Instead, we observe $\bm z_i$'s for each node $i\in [n]$, where
$$\bm z_i = \bm x_i + \bm \eta_i,$$
and assume $\bm x_i \sim N(\bm 0, \sigma_x^2\bm I_d), \bm \eta_i \sim N(\bm 0, \bm \sigma_\eta^2 I_d)$, independently for all $i\in[n]$. For each node $i\in [n]$, we also observe a response variable $y_i$, and assume a true model
$$y_i = \bm x_i^\top \bm \beta + \varepsilon_i,$$
where $\varepsilon_i \sim N(0, \sigma_\varepsilon^2)$, for all $i\in [n]$, mutually independently and independently of $\bm x_i$ and $\bm \eta_i$'s. We additionally assume
$$\sup_d \bm \beta^\top \bm \beta<\infty.$$

\subsection{Notation}

Recall that there is a geometric edge between nodes $i$ and $j$ if and only if
$$\bm x_i^\top \bm x_j \geq \sigma_x^2t_n\sqrt{d},$$
while independent Erd\H{o}s--Renyi edges occur independently, each with probability $p_n$.
We consider 
$$t_n = \sqrt{2(1-\alpha)\log n}, \quad p_n=n^{\gamma-1}, \qquad 0<\alpha,\gamma<1.$$
The variance parameters $\sigma_x^2, \sigma_\eta^2, \sigma_\varepsilon^2$, and the parameters $\alpha, \gamma$ are free of $n,d$.

For each $i \in [n]$, we use the notation $N_i$ to denote the neighbourhood of $i$ in the observed $G = (\bm Z_n, E)$, 
$$N_i := \{j \in [n]: (i,j)\in E\}.$$
We assume that $d$ is even, and for any vector $\bm v \in \mathbb{R}^d$, we denote by $\bm v^{(1)}$ the top $d/2$ coordinates of $\bm v$, and by $\bm v^{(2)}$ the bottom $d/2 $ coordinates of $\bm v$. To distinguish between notations for Euclidean length, and $L_r$ norms of random variables, we use the explicit notations
$$\|\bm v\| = \sqrt{\bm v^\top \bm v}, \qquad \|X\|_{L_r} = \left(\mathbb{E}[|X|^r]\right)^{1/r}.$$
We shall denote $\bm v_i = \bm \lambda_i - \bm x_i$. Define 
$$\bm X_n = \begin{bmatrix}
    \bm x_1, \ldots, \bm x_n
\end{bmatrix}^\top.$$ 
Similar to $\bm X_n$, construct matrices $\bm Z_n, \bm \Lambda_n, \bm N_n$ and $\bm V_n$ by stacking the vectors $\bm z_i, \bm \lambda_i, \bm \eta_i$ and $\bm v_i$ respectively. Finally, define $\bm Y_n = [y_1, \ldots, y_n]^T \in \bbR^n$ and $\bm E_n = [\varepsilon_1, \ldots, \varepsilon_n]^T \in \bbR^n$.
For any matrix $\bm M \in \mathbb{R}^{m\times m}$, we denote its operator norm by
$$\|\bm M\|_{\mathrm{op}} = \sup\left\{ \frac{\|\bm M \bm v\|}{\|\bm v\|}: \bm v \in \mathbb{R}^m \right\}.$$
We also use the standard order notations $o(\cdot), O(\cdot), \Theta(\cdot)$.

\section{Key Results}\label{key results appdx}

In this section, we begin with the main motivating result - that the attention-based message passing proxies $\bm \lambda_i$ are close to the latent covariates $\bm x_i$ in $L_2$ sense. It drives our main theorems which state that the OLS estimator of $\bm \beta$ based on $\bm \lambda_i$'s is consistent while the one based on the noisy observations $\bm z_i$'s is not, and also that the prediction of the response variable for an unlabelled node has lower MSE when one uses the attention-based network instead of a non-attention-based one. We prove the main theorems here following their corresponding statement.

\begin{lemma}\label{lambda close to x}
    Assume that $n^\alpha\gg d \gg (\log n)^3$ and $\alpha>\gamma+\frac{\sigma_x^4(\alpha-1)}{2(\sigma_x^2+\sigma_\eta^2)^2}$. Then for any fixed $r\geq 1$,
    $$\left\|\bm \lambda_i - \bm x_i\right\|_{L_r} = O\left(\sqrt{d}\left(n^{\gamma+\frac{\sigma_x^4(\alpha-1)}{2(\sigma_x^2+\sigma_\eta^2)^2}-\alpha}+\frac{1}{t_n}+\frac{t_n^3}{\sqrt{d}}+\frac{\sqrt{d}}{\sqrt{t_nn^\alpha}}\right)\right)=o(\sqrt{d}).$$
    \begin{proof}
        The proof is deferred to Appendix~\ref{lambda close to x appendix}.
    \end{proof}
\end{lemma}

\begin{theorem}\label{main theorem beta lambda}
    Suppose that $n^{\min\{\alpha,\frac{1}{3}\}}\gg d \gg (\log n)^3$ and $\alpha > \gamma + \frac{\sigma_x^4(\alpha-1)}{2(\sigma_x^2+\sigma_\eta^2)^2}$. Then
        $$\left\|\hat{\bm \beta}_\lambda - \bm \beta\right\|= O_\bbP\left(\sqrt{\frac{d}{n}} + \frac{d^{3/2}\sqrt{\log d}}{t_n\sqrt{n}} + n^{\gamma+\frac{\sigma_x^4(\alpha-1)}{2(\sigma_x^2+\sigma_\eta^2)^2}-\alpha}+\frac{1}{t_n}+\frac{t_n^3}{\sqrt{d}}+\frac{\sqrt{d}}{\sqrt{t_nn^\alpha}}  \right).$$
        \begin{proof}
            We shall use the abbreviation $$\text{err}_{n,d} = n^{\gamma+\frac{\sigma_x^4(\alpha-1)}{2(\sigma_x^2+\sigma_\eta^2)^2}-\alpha}+\frac{1}{t_n}+\frac{t_n^3}{\sqrt{d}}+\frac{\sqrt{d}}{\sqrt{t_nn^\alpha}}.$$ Also, let us use the notation
        $$\Tilde{A}_n = \frac{\bm \Lambda_n^\top \bm \Lambda_n}{n}, \qquad \Tilde{A}_0 = \sigma_x^2\bm I_{d}, \qquad \Tilde{B}_n = \frac{\bm \Lambda_n^\top \bm X_n}{n}, \qquad \Tilde{B}_0 = \sigma_x^2\bm I_d.$$
        Then, from Lemma~\ref{Conv of XtX and EtE}, Lemma~\ref{R diagonal}, Lemma~\ref{the freedman lemma} and Lemma~\ref{the freedman lemma 2},
        \begin{align*}
            \|\Tilde{A}_n - \Tilde{A}_0\|_{\mathrm{op}} &\leq \left\| \frac{\bm X_n^\top \bm X_n}{n} - \sigma_x^2\bm I_d\right\|_{\mathrm{op}} + \left\| \frac{\bm X_n^\top \bm V_n}{n}\right\|_{\mathrm{op}}+\left\| \frac{\bm V_n^\top \bm X_n}{n}\right\|_{\mathrm{op}}+\left\| \frac{\bm V_n^\top \bm V_n}{n}\right\|_{\mathrm{op}}\\
            &= O_\bbP\left(\sqrt{\frac{d}{n}} + \frac{d^{3/2}\sqrt{\log d}}{t_n\sqrt{n}} + \text{err}_{n,d}  \right),
        \end{align*}
        and
        \begin{align*}
            \|\Tilde{B}_n - \Tilde{B}_0\|_{\mathrm{op}} \leq \left\| \frac{\bm X_n^\top \bm X_n}{n} - \sigma_x^2 \bm I_d\right\|_{\mathrm{op}} + \left\| \frac{\bm V_n^\top \bm X_n}{n}\right\|_{\mathrm{op}} = O_\bbP\left(\sqrt{\frac{d}{n}} + \frac{d^{3/2}\sqrt{\log d}}{t_n\sqrt{n}} + \text{err}_{n,d} \right).
        \end{align*}
        
        Because $\|\Tilde{A}_0^{-1}\|_{\mathrm{op}} = \sigma_x^{-2}$ is constant and $\|\Tilde{A}_n- \Tilde{A}_0\|_{\mathrm{op}}=o_\bbP(1)$, Lemma~\ref{Neumann series} gives
        $$\|\Tilde{A}_n^{-1} - \Tilde{A}_0^{-1}\|_{\mathrm{op}} = O_\bbP\left(\sqrt{\frac{d}{n}} + \frac{d^{3/2}\sqrt{\log d}}{t_n\sqrt{n}} + \text{err}_{n,d}  \right), \qquad\text{and}\qquad \|\Tilde{A}_n^{-1}\|_{\mathrm{op}} = O_\bbP(1).$$
        We also have
        $$\|\Tilde{B}_n\|_{\mathrm{op}} \leq \|\Tilde{B}_0\|_{\mathrm{op}} + \|\Tilde{B}_n-\Tilde{B}_0\|_{\mathrm{op}} =\sigma_x^2+O_\bbP\left(\sqrt{\frac{d}{n}} + \frac{d^{3/2}\sqrt{log d}}{t_n\sqrt{n}} + \text{err}_{n,d} \right)=\Theta_\bbP(1). $$

        Now, note that
        \begin{equation}\label{attenuation decomp 2}
            \hat{\bm\beta}_\lambda - \bm\beta = \Tilde{A}_n^{-1}\left(\Tilde{B}_n\bm \beta + \frac{1}{n}\bm \Lambda_n^\top \bm E_n \right)-\bm \beta
            = \left(\Tilde{A}_n^{-1}\Tilde{B}_n - \Tilde{A}_0^{-1}\Tilde{B}_0\right)\bm \beta + \frac{1}{n}\Tilde{A}_n^{-1}\bm \Lambda_n^\top\bm E_n.
        \end{equation}
        
        Using $\sup_n \|\bm \beta\|<\infty$, we get
        \begin{align*}
            \left\|\left(\Tilde{A}_n^{-1}\Tilde{B}_n - \Tilde{A}_0^{-1}\Tilde{B}_0\right)\bm \beta\right\| &\leq \left( \left\|\Tilde{A}_n^{-1}-\Tilde{A}_0^{-1}\right\|_{\mathrm{op}}\|\Tilde{B}_n\|_{\mathrm{op}} + \|\Tilde{A}_0^{-1}\|_{\mathrm{op}}\|\Tilde{B}_n-\Tilde{B}_0\|_{\mathrm{op}} \right)\|\bm \beta\|\\
            &= O_\bbP\left(\sqrt{\frac{d}{n}} + \frac{d^{3/2}\sqrt{\log d}}{t_n\sqrt{n}} + \text{err}_{n,d}  \right).
        \end{align*}
        We also get using Lemma~\ref{conv of xtu} and Lemma~\ref{lambda t u},
        \begin{align*}
            &\left\|\frac{\bm \Lambda_n^\top \bm E_n}{n}\right\| \leq \left\|\frac{\bm X_n^\top \bm E_n}{n}\right\|+\left\|\frac{\bm V_n^\top \bm E_n}{n}\right\| = O_\bbP\left(\sqrt{\frac{d}{n}}\right) + O_\bbP\left( \sqrt{\frac{d}{n}}\sqrt{{\sqrt{\frac{d^3\log d}{nt_n^2}}}+\text{err}_{n,d}^2} \right)=O_\bbP\left(\sqrt{\frac{d}{n}}\right)\\
            \implies& \left\|\frac{1}{n}\Tilde{A}_n^{-1}\bm \Lambda_n^\top\bm E_n\right\|\leq \|\Tilde{A}_n^{-1}\|_{\mathrm{op}}\left\|\frac{1}{n}\bm \Lambda_n^T\bm E_n\right\|_{\mathrm{op}} = O_\bbP\left(\sqrt{\frac{d}{n}}\right).
        \end{align*}
        Plugging these orders into Eq.~\ref{attenuation decomp 2}, we get
        \begin{align*}
            \left\|\hat{\bm\beta}_\lambda - \bm\beta\right\| = O_\bbP\left(\sqrt{\frac{d}{n}} + \frac{d^{3/2}\sqrt{\log d}}{t_n\sqrt{n}} + \text{err}_{n,d}  \right).
        \end{align*}
        \end{proof}
\end{theorem}

    \begin{theorem}\label{main theorem beta y}
        Suppose that $d\ll n$. Then
        $$\left\|\hat{\bm \beta}_z - \bm \beta\right\|= \frac{\sigma_\eta^2}{\sigma_x^2+\sigma_\eta^2}\|\bm \beta\|+O_\bbP\left(\sqrt{\frac{d}{n}}\right).$$
        \begin{proof}
            Let us use the notation
        $$A_n = \frac{\bm Z_n^\top \bm Z_n}{n}, \qquad A_0 = (\sigma_x^2+\sigma_\eta^2)\bm I_{d}, \qquad B_n = \frac{\bm Z_n^\top \bm X_n}{n}, \qquad B_0 = \sigma_x^2\bm I_d.$$
        Then, from Lemma~\ref{Conv of XtX and EtE},
        $$\|A_n - A_0\|_{\mathrm{op}} \leq \left\| \frac{\bm X_n^\top \bm X_n}{n} - \sigma_x^2\bm I_d\right\|_{\mathrm{op}} + \left\| \frac{\bm X_n^\top \bm N_n}{n}\right\|_{\mathrm{op}}+\left\| \frac{\bm N_n^\top \bm X_n}{n}\right\|_{\mathrm{op}}+\left\| \frac{\bm N_n^T \bm N_n}{n} - \sigma_\eta^2\bm I_d\right\|_{\mathrm{op}}=O_\bbP\left(\sqrt{\frac{d}{n}}\right),$$
        $$\|B_n - B_0\|_{\mathrm{op}} \leq \left\| \frac{\bm X_n^\top \bm X_n}{n} - \sigma_x^2\bm I_d\right\|_{\mathrm{op}} + \left\| \frac{\bm N_n^\top \bm X_n}{n}\right\|_{\mathrm{op}} = O_\bbP\left(\sqrt{\frac{d}{n}}\right).$$
        Because $\|A_0^{-1}\|_{\mathrm{op}} = (\sigma_x^2+\sigma_\eta^2)^{-1}$ is constant and $\|A_n- A_0\|_{\mathrm{op}}=o_\bbP(1)$, Proposition~\ref{Neumann series} gives
        $$\|A_n^{-1} - A_0^{-1}\|_{\mathrm{op}} = O_\bbP\left( \sqrt{\frac{d}{n}} \right), \qquad\text{and}\qquad \|A_n^{-1}\|_{\mathrm{op}} = O_\bbP(1).$$
        We also have
        $$\|B_n\|_{\mathrm{op}} \leq \|B_0\|_{\mathrm{op}} + \|B_n-B_0\|_{\mathrm{op}} =\sigma_x^2 + O_\bbP\left(\sqrt{\frac{d}{n}}\right)=\Theta_\bbP(1). $$

        Now, note that
        \begin{equation}\label{attenuation decomp}
            \hat{\bm\beta}_z - \frac{\sigma_x^2}{\sigma_x^2+\sigma_\eta^2}\bm\beta = A_n^{-1}\left(B_n\bm \beta + \frac{1}{n}\bm Z_n^\top \bm E_n \right) - \frac{\sigma_x^2}{\sigma_x^2+\sigma_\eta^2}\bm \beta
            = \left(A_n^{-1}B_n - A_0^{-1}B_0\right)\bm \beta + \frac{1}{n}A_n^{-1}\bm Z_n^\top\bm E_n.
        \end{equation}
        
        Using $\sup_n \|\bm \beta\|<\infty$, we get
        \begin{align*}
            \left\|\left(A_n^{-1}B_n - A_0^{-1}B_0\right)\bm \beta\right\| &\leq \left( \left\|A_n^{-1}-A_0^{-1}\right\|_{\mathrm{op}}\|B_n\|_{\mathrm{op}} + \|A_0^{-1}\|_{\mathrm{op}}\|B_n-B_0\|_{\mathrm{op}} \right)\|\bm \beta\|\\
            &= O_\bbP\left(\sqrt{\frac{d}{n}}\right)\Theta_\bbP(1) + \frac{1}{\sigma_x^2+\sigma_\eta^2}O_\bbP\left(\sqrt{\frac{d}{n}}\right)=O_\bbP\left(\sqrt{\frac{d}{n}}\right).
        \end{align*}
        We also get using Lemma~\ref{conv of xtu},
        \begin{align*}
            &\left\|\frac{\bm Z_n^\top \bm E_n}{n}\right\| \leq \left\|\frac{\bm X_n^\top \bm E_n}{n}\right\|+\left\|\frac{\bm N_n^\top \bm E_n}{n}\right\| = O_\bbP\left(\sqrt{\frac{d}{n}}\right)\\
            \implies& \left\|\frac{1}{n}A_n^{-1}\bm Z_n^T\bm E_n\right\| \leq \|A_n^{-1}\|_{\mathrm{op}}\left\|\frac{1}{n}\bm Z_n^\top\bm E_n\right\| = O_\bbP\left(\sqrt{\frac{d}{n}}\right).
        \end{align*}
        Plugging these orders into Eq.~\ref{attenuation decomp}, we get
        \begin{align*}
            \left\|\hat{\bm\beta}_z - \frac{\sigma_x^2}{\sigma_x^2+\sigma_\eta^2}\bm\beta\right\| = O_\bbP\left(\sqrt{\frac{d}{n}}\right).
        \end{align*}
        Since 
        $$\hat{\bm \beta}_z - \bm \beta = -\frac{\sigma_\eta^2}{\sigma_x^2+\sigma_\eta^2}\bm \beta + \left(\hat{\bm\beta}_z - \frac{\sigma_x^2}{\sigma_x^2+\sigma_\eta^2}\bm\beta\right),$$
        the reverse triangle inequality gives
        $$\left|\left\|\hat{\bm \beta}_z - \bm \beta\right\|-\frac{\sigma_\eta^2}{\sigma_x^2+\sigma_\eta^2}\|\bm \beta\|\right|\leq \left\|\hat{\bm\beta}_z - \frac{\sigma_x^2}{\sigma_x^2+\sigma_\eta^2}\bm\beta\right\| = O_\bbP\left(\sqrt{\frac{d}{n}}\right).$$
        \end{proof}
    \end{theorem}

\begin{theorem}\label{attn lower MSE}
Suppose that $n^{\min\left\{\alpha, \frac{1}{3}\right\}} \gg d \gg (\log n)^3$ and $\alpha>\gamma + \frac{\sigma_x^4(\alpha-1)}{2(\sigma_x^2+\sigma_\eta^2)^2}$. Let $(\bm z_{n+1},y_{n+1})$ be the $(n+1)-$th node in the graph with neighbourhood $N_{n+1}$. Let $\bm \lambda_{n+1}$ be calculated as in Algorithm~\ref{alg:attn-proxy-regression} based on the full graph on $[n+1]$, and let $\hat{\bm \beta}_{\lambda,-(n+1)}$ be calculated according to Algorithm~\ref{alg:attn-proxy-regression} but based on the subgraph induced by nodes $[n]\setminus N_{n+1}$ instead of the full graph on $[n]$. Then,
$$\bbE\left(y_{n+1} - \bm \lambda_{n+1}^\top \hat{\bm \beta}_{\lambda,-(n+1)}\Bigr| \hat{\bm \beta}_{\lambda,-(n+1)}\right)^2= \sigma_\varepsilon^2 + o_\bbP(1).$$
    \begin{proof}
        Note that $\hat{\bm \beta}_{\lambda, -(n+1)}$ is independent of both $\bm x_{n+1}$ and $\bm \lambda_{n+1}$. Since, by Lemma~\ref{er ngbhd size} and Lemma~\ref{pure geo ngbhd size}, $|N_{n+1}|=O_\bbP(n^\alpha/t_n) = o(n)$, we have $|[n] \setminus N_{n+1}|=n(1-o(1))$ and we can still apply Theorem~\ref{main theorem beta lambda}, and get
        $$\|\hat{\bm \beta}_{\lambda,-(n+1)} - \bm \beta\| = o_\bbP(1).$$
        
        We start by decomposing
        \begin{equation}\label{decompose gann}
            \begin{split}
                &\bbE\left(y_{n+1} - \bm \lambda_{n+1}^\top \hat{\bm \beta}_{\lambda,-(n+1)}\Bigr|\hat{\bm \beta}_{\lambda,-(n+1)}\right)^2\\
                =& \bbE\left(\bm x_{n+1}^\top\bm\beta + \varepsilon_{n+1} - \bm x_{n+1}^\top\hat{\bm \beta}_{\lambda,-(n+1)} + \bm x_{n+1}^\top\hat{\bm \beta}_{\lambda,-(n+1)} - \bm \lambda_{n+1}^\top \hat{\bm \beta}_{\lambda,-(n+1)}\Bigr|\hat{\bm \beta}_{\lambda,-(n+1)}\right)^2\\
            =& \sigma_\varepsilon^2 + \bbE\left(\bm x_{n+1}^\top\bm\beta  - \bm x_{n+1}^\top\hat{\bm \beta}_{\lambda,-(n+1)} + \bm x_{n+1}^\top\hat{\bm \beta}_{\lambda,-(n+1)} - \bm \lambda_{n+1}^\top \hat{\bm \beta}_{\lambda,-(n+1)}\Bigr| \hat{\bm \beta}_{\lambda,-(n+1)}\right)^2.
            \end{split}
        \end{equation}
        Now, using independence of $\hat{\bm \beta}_{\lambda,-(n+1)}$ and $\bm x_{n+1}$, and Theorem~\ref{main theorem beta lambda}, we have
        \begin{align*}
            &\bm x_{n+1}^\top\bm\beta  - \bm x_{n+1}^\top\hat{\bm \beta}_{\lambda,-(n+1)}\Bigr| \hat{\bm \beta}_{\lambda,-(n+1)} \sim N\left(0, \sigma_x^2\left\|\bm \beta - \hat{\bm \beta}_{\lambda,-(n+1)}\right\|^2\right)\\
            \implies& \bbE\left[\left(\bm x_{n+1}^\top\bm\beta  - \bm x_{n+1}^\top\hat{\bm \beta}_{\lambda,-(n+1)}\right)^2\Bigr| \hat{\bm \beta}_{\lambda,-(n+1)}\right] = \sigma_x^2\left\|\bm \beta - \hat{\bm \beta}_{\lambda,-(n+1)}\right\|^2 = o_\bbP(1).
        \end{align*}
        On the other hand, using the independence  $(\bm x_{n+1},\bm \lambda_{n+1}) \indep \hat{\bm \beta}_{\lambda,-(n+1)}$, and Lemma~\ref{R diagonal},
        \begin{align*}
            \bbE\left[\left(\bm x_{n+1}^\top \hat{\bm \beta}_{\lambda,-(n+1)} - \bm \lambda_{n+1}^\top \hat{\bm \beta}_{\lambda,-(n+1)}\right)^2\Bigr|\hat{\bm \beta}_{\lambda,-(n+1)}\right] &\leq \|\hat{\bm \beta}_{\lambda,-(n+1)}\|^2\left\|\bbE(\bm v_{n+1}\bm v_{n+1}^\top)\right\|_{\mathrm{op}}\\
            &= \left(\|\bm \beta\|^2 + o_\bbP(1)\right) o(1) = o_\bbP(1).
        \end{align*}
        Thus it follows that
        \begin{align*}
             &\bbE\left(\bm x_{n+1}^\top\bm\beta  - \bm x_{n+1}^\top\hat{\bm \beta}_{\lambda,-(n+1)} + \bm x_{n+1}^\top\hat{\bm \beta}_{\lambda,-(n+1)} - \bm \lambda_{n+1}^\top \hat{\bm \beta}_{\lambda,-(n+1)}\Bigr| \hat{\bm \beta}_{\lambda,-(n+1)}\right)^2\\
             \leq & 2\left[\bbE\left(\bm x_{n+1}^\top\bm\beta  - \bm x_{n+1}^\top\hat{\bm \beta}_{\lambda,-(n+1)}\Bigr|\hat{\bm \beta}_{\lambda,-(n+1)}\right)^2 + \bbE\left(\bm x_{n+1}^\top\hat{\bm \beta}_{\lambda,-(n+1)} - \bm \lambda_{n+1}^\top \hat{\bm \beta}_{\lambda,-(n+1)}\Bigr|\hat{\bm \beta}_{\lambda,-(n+1)}\right)^2\right] = o_\bbP(1).
        \end{align*}
        Hence, combining the orders, we get from Eq.~\ref{decompose gann},
        $$\bbE\left(y_{n+1} - \bm \lambda_{n+1}^\top \hat{\bm \beta}_{\lambda,-(n+1)}\Bigr| \hat{\bm \beta}_{\lambda,-(n+1)}\right)^2= \sigma_\varepsilon^2 + o_\bbP(1).$$
    \end{proof}
\end{theorem}    

\begin{theorem}\label{non-attn higher MSE}
Assume that $(\psi_\ell)$ is a sequence of odd, $C_\psi$-Lipschitz $\bbR^d\mapsto \bbR^d$ activation functions and $(M_0^{(\ell)}, M_1^{(\ell)})$ is a sequence of $\bbR^{d\times d}$ matrices with their operator norm bounded above by $C_M$; and $(\psi_\ell, M_0^{(\ell)}, M_1^{(\ell)})$ are $\sigma\left(\left\{y_i\right\}_{i\in [n]},\left\{\bm z_i\right\}_{i\in [n+1]}\right)$ measurable. Let a $L$-layer network be defined as
$$\bm \xi_i^{(\ell+1)} = \psi_{\ell}\left(M_0^{(\ell)}\bm \xi_i^{(\ell)}+M_1^{(\ell)}\frac{1}{|N_i|}\sum_{j\in N_i}\bm \xi_j^{(\ell)}\right), \qquad \ell=0,1,\ldots,L-1,$$
and $\bm \xi_i^{(0)} = \bm z_i$ for all $i\in [n]$. 
If $\log n \ll d \ll n^\alpha$ and $\alpha<\gamma$, then
for any $\sigma\left(\left\{y_i\right\}_{i\in [n]},\left\{\bm z_i\right\}_{i\in [n+1]}\right)$ measurable $C$-Lipschitz function $g:\bbR^{d}\mapsto \bbR$,
$$\bbE\left(y_{n+1} - g\left(\bm \xi_{n+1}^{(L)}\right)\right)^2\geq \sigma_\varepsilon^2 + \frac{\sigma_x^2\sigma_\eta^2}{\sigma_x^2+\sigma_\eta^2}\|\bm \beta\|^2+o(1).$$
    \begin{proof}
        Consider any $\sigma(\left\{y_i,\bm z_i\right\}:1\leq i\leq n)$ measurable function $g:\bbR^d \mapsto \bbR$ such that $g$ is $C$-Lipschitz. First note that, we need to consider only those functions $g$ such that $\|g(\phi_L(\bm z_{n+1}))\|_{L_2} = O(1)$, because if instead $\|g(\phi_L(\bm z_{n+1}))\|_{L_2}\gg1$, then
        \begin{align*}
            &y_{n+1} - g\left(\bm \xi_{n+1}^{(L)}\right) = \left(y_{n+1} - g(\phi_L(\bm z_{n+1}))\right) + \left(g(\phi_L(\bm z_{n+1})) - g\left(\bm \xi_{n+1}^{(L)}\right)\right)\\
             \implies& \bbE\left|y_{n+1} - g\left(\bm \xi_{n+1}^{(L)}\right)\right|^2 \geq \left(\left\|g(\phi_L(\bm z_{n+1}))\right\|_{L_2} - \left\|y_{n+1}\right\|_{L_2} - \left\|g(\phi_L(\bm z_{n+1})) - g(\bm \xi_{n+1}^{(L)})\right\|_{L_2}\right)^2. 
        \end{align*}
        By assumption, $y_{n+1} \sim N(0, \sigma_x^2\|\bm \beta\|^2+\sigma_\varepsilon^2)$, hence $\|y_{n+1}\|_{L_2}=O(1)$ and using the $C$-Lipschitz property of $g$ and Lemma~\ref{GNN output close to non-GNN}, 
        $$\left\|g(\phi_L(\bm z_{n+1})) - g\left(\bm \xi_{n+1}^{(L)}\right)\right\|_{L_2}\leq C\left\|\left\| \phi_L(\bm z_{n+1}) - \bm \xi_{n+1}^{(L)}\right\|\right\|_{L_2}= O\left(\sqrt{d}n^{-\gamma/2}+n^{\alpha-\gamma}\right)=o(1).$$
        Plugging the orders in and recalling that for those $g$, we have $\|g(\phi_L(\bm z_{n+1}))\|_{L_2}\gg1$,
        $$\bbE \left|y_{n+1} - g\left(\bm \xi_{n+1}^{(L)}\right)\right|^2 \gg 1 .$$
        Because $\sigma_\varepsilon^2 + \frac{\sigma_x^2\sigma_\eta^2}{\sigma_x^2+\sigma_\eta^2}\|\bm \beta\|^2 = O(1)$, the claim of the lemma holds for all such $g$ for which we have $\|g(\phi_L(\bm z_{n+1}))\|_{L_2}\gg1$. So, for the rest of the proof we shall restrict ourselves to only such $g$ that $\|g(\phi_L(\bm z_{n+1}))\|_{L_2} = O(1)$.

        We write
        \begin{equation}\label{decompose gnn}
            \begin{split}
                \bbE\left(y_{n+1} - g\left(\bm \xi_{n+1}^{(L)}\right)\right)^2 &= \bbE\left(y_{n+1} - g\left(\phi_L(\bm z_{n+1})\right) + g\left(\phi_L(\bm z_{n+1})\right) - g\left(\bm \xi_{n+1}^{(L)}\right)\right)^2\\
            &= \bbE\left(y_{n+1} - g\left(\phi_L(\bm z_{n+1})\right)\right)^2 + \bbE\left(g\left(\phi_L(\bm z_{n+1})\right) - g\left(\bm \xi_{n+1}^{(L)}\right)\right)^2\\&\qquad\qquad + 2\bbE\left[\left(y_{n+1} - g\left(\phi_L(\bm z_{n+1})\right)\right)\left(g\left(\phi_L(\bm z_{n+1})\right) - g\left(\bm \xi_{n+1}^{(L)}\right)\right)\right].
            \end{split}
        \end{equation}
        Note that, for the first summand above,
        \begin{align*}
            \bbE\left(y_{n+1} - g\left(\phi_L(\bm z_{n+1})\right)\right)^2&= \bbE\left(\bm x_{n+1}^\top\beta +\varepsilon_{n+1} - g\left(\phi_L(\bm z_{n+1})\right)\right)^2\\
            &= \sigma_\varepsilon^2 + \bbE\left(\bm x_{n+1}^\top\beta  - g\left(\phi_L(\bm z_{n+1})\right)\right)^2\\
            &\geq \sigma_\varepsilon^2 +  \bbE\left(\bm x_{n+1}^\top\beta  - \bbE[\bm x_{n+1}^\top\beta | \bm z_{n+1}]\right)^2\\
            &= \sigma_\varepsilon^2 + \bbE\left[\operatorname{Var}\left(\bm x_{n+1}^\top\beta|\bm z_{n+1}\right)\right] = \sigma_\varepsilon^2 + \frac{\sigma_x^2\sigma_\eta^2}{\sigma_x^2+\sigma_\eta^2}\|\bm \beta\|^2.
        \end{align*}
        Using the $C$-Lipschitz property of $g$ and Lemma~\ref{GNN output close to non-GNN}, for the second summand,
        \begin{align*}
            \bbE\left(g\left(\phi_L(\bm z_{n+1})\right) - g\left(\bm \xi_{n+1}^{(L)}\right)\right)^2 \leq C^2\bbE\left\|\phi_L(\bm z_{n+1}) - \bm \xi_{n+1}^{(L)}\right\|^2 = o(1).
        \end{align*}
        And finally for the third summand,
        \begin{align*}
            &2\bbE\left[\left(y_{n+1} - g\left(\phi_L(\bm z_{n+1})\right)\right)\left(g\left(\phi_L(\bm z_{n+1})\right) - g\left(\bm \xi_{n+1}^{(L)}\right)\right)\right]\\
            \leq& 2\sqrt{\bbE\left[y_{n+1} - g\left(\phi_L(\bm z_{n+1})\right)\right]^2\bbE\left[g\left(\phi_L(\bm z_{n+1})\right) - g\left(\bm \xi_{n+1}^{(L)}\right)\right]^2} = o(1),
        \end{align*}
        which follows using $\bbE\left[g\left(\phi_L(\bm z_{n+1})\right) - g\left(\bm \xi_{n+1}^{(L)}\right)\right]^2=o(1)$ and $\|g(\phi_L(\bm z_{n+1}))\|_{L_2} = O(1)$, we have
        $$\sqrt{\bbE\left[y_{n+1} - g\left(\phi_L(\bm z_{n+1})\right)\right]^2}\leq \|y_{n+1}\|_2 + \left\|g\left(\phi_L(\bm z_{n+1})\right)\right\|_2 = O(1).$$
        Combining the above into Eq.~\ref{decompose gnn},
        $$\bbE\left(y_{n+1} - g\left(\bm \xi_{n+1}^{(L)}\right)\right)^2\geq \sigma_\varepsilon^2 + \frac{\sigma_x^2\sigma_\eta^2}{\sigma_x^2+\sigma_\eta^2}\|\bm \beta\|^2+o(1).$$
    \end{proof}
\end{theorem}

\section{Lemmas required for proof of Theorem~\ref{main theorem beta lambda}}\label{lemmas for thm b.2 appdx}

\subsection{Rotational invariance and population covariance structure}

In order to prove the concentration of the random matrices $\frac{\bm \Lambda_n^\top\bm \Lambda_n}{n}$ and $\frac{\bm \Lambda_n^\top\bm X_n}{n}$, the first step is to find the non-random matrices that they would concentrate around. For that, we identify that the joint distributions of $(\bm x_i, \bm \eta_i)$ are rotationally invariant and leverage that to prove that the above-mentioned covariance matrices concentrate around diagonal matrices.

For this section and the next, recall that we use the notation $\bm v_i = \bm \lambda_i - \bm x_i$ and stack them in the matrix $\bm V_n = \begin{bmatrix}
    \bm v_1, \ldots, \bm v_n
\end{bmatrix}^\top$.
\begin{lemma}\label{orthogonal matrix invariance}
    For any fixed orthogonal matrices $\bm Q_1, \bm Q_2 \in \bbR^{\frac{d}{2}\times \frac{d}{2}}$, define $\bm Q = \begin{bmatrix}
        \bm Q_1 & \bm 0 \\ \bm 0 & \bm Q_2
    \end{bmatrix}$. Then, one has
    $$(\bm x_i, \bm v_i) \stackrel{d}{=} (\bm Q\bm x_i, \bm Q \bm v_i).$$
    \begin{proof}
        Define the transformation
        $$T_{\bm Q}(\{\bm x_i\}_{i\in [n]}, \{\bm \eta_i\}_{i\in [n]}) = (\{\bm Q\bm x_i\}_{i\in [n]}, \{\bm Q\bm \eta_i\}_{i\in [n]}).$$
        Then it follows that $T_{\bm Q}(\{\bm z_i\}_{i\in [n]}) = \{\bm Q \bm z_i\}_{i\in [n]}$. 
By spherical symmetry of the distributions of $\bm x_i, \bm \eta_i$ and independence of $N_i^{(ER)}$, we get
        $$\left(\{\bm x_i\}_{i\in [n]}, \{\bm \eta_i\}_{i\in [n]}, N_i^{(ER)}\right) \stackrel{d}{=} T_{\bm Q}\left(\{\bm x_i\}_{i\in [n]}, \{\bm \eta_i\}_{i\in [n]}, N_i^{(ER)}\right)$$
        $$\implies (\bm x_i,\bm \lambda_i) \stackrel{d}{=} T_{\bm Q}(\bm x_i, \bm \lambda_i ) .$$
        
        Consider the notations $$w_{ij,2}^{(1)} = \indic\{\bm z_i^{(2)\top}\bm z_j^{(2)} \geq \sigma_x^2t_n\sqrt{d}/2\}, \qquad w_{ij,2}^{(2)} = \indic\{\bm x_i^\top \bm x_j\geq \sigma_x^2t_n\sqrt{d}, \bm z_i^{(2)\top}\bm z_j^{(2)} \geq \sigma_x^2t_n\sqrt{d}/2\},$$
        $$w_{ij,1}^{(1)} = \indic\{\bm z_i^{(1)\top}\bm z_j^{(1)} \geq \sigma_x^2t_n\sqrt{d}/2\}, \qquad w_{ij,1}^{(2)} = \indic\{\bm x_i^\top\bm x_j\geq \sigma_x^2t_n\sqrt{d}, \bm z_i^{(1)\top}\bm z_j^{(1)} \geq \sigma_x^2t_n\sqrt{d}/2\}.$$ 
        We immediately get
        $$\bm z_i^{(1)\top}\bm z_j^{(1)} = (\bm Q_1\bm z_i^{(1)\top})(\bm Q_1\bm z_j^{(1)}) = T_{\bm Q}(\bm z_i^{(1)\top}\bm z_j^{(1)}), \quad \bm z_i^{(2)\top}\bm z_j^{(2)} = (\bm Q_2\bm z_i^{(2)\top})(\bm Q_2\bm z_j^{(2)}) = T_{\bm Q}(\bm z_i^{(2)\top}\bm z_j^{(2)}),$$
        $$\bm x_i^{\top}\bm x_j = (\bm Q\bm x_i^{\top})(\bm Q\bm x_j)=T_{\bm Q}(\bm x_i^{\top}\bm x_j),$$
        implying $$(w_{ij,2}^{(1)}, w_{ij,2}^{(2)}, w_{ij,1}^{(1)}, w_{ij,1}^{(2)}) = T_{\bm Q}(w_{ij,2}^{(1)}, w_{ij,2}^{(2)}, w_{ij,1}^{(1)}, w_{ij,1}^{(2)}).$$ Consequently, it follows that
        \begin{align*}
            T_{\bm Q}\left(\bm \lambda_i^{(1)}\right) &= \frac{\sqrt{d}}{t_n} \cdot \frac{\sum_{j\in T_{\bm Q}\left(N_i^{(ER)}\right)} T_{\bm Q}\left(w_{ij,2}^{(1)}\right) T_{\bm Q}\left(\bm z_j^{(1)}\right) + \sum_{j\in [n]\setminus T_{\bm Q}\left(N_i^{(ER)}\right)} T_{\bm Q}\left(w_{ij,2}^{(2)}\right) T_{\bm Q}\left(\bm z_j^{(1)}\right)}{\sum_{j\in T_{\bm Q}\left(N_i^{(ER)}\right)} T_{\bm Q}\left(w_{ij,2}^{(1)}\right) + \sum_{j\in [n]\setminus T_{\bm Q}\left(N_i^{(ER)}\right)} T_{\bm Q}\left(w_{ij,2}^{(2)}\right)}\\[4mm]
            &\stackrel{d}{=} \frac{\sqrt{d}}{t_n} \cdot \frac{\sum_{j\in N_i^{(ER)}} w_{ij,2}^{(1)} \bm Q_1\bm z_j^{(1)} + \sum_{j\in [n]\setminus N_i^{(ER)}} w_{ij,2}^{(2)} \bm Q_1\bm z_j^{(1)}}{\sum_{j\in N_i^{(ER)}} w_{ij,2}^{(1)} + \sum_{j\in [n]\setminus N_i^{(ER)}} w_{ij,2}^{(2)}}\\[4mm]
            &= \bm Q_1 \bm \lambda_i^{(1)}.
        \end{align*}
        Similarly, we also get $T_{\bm Q}(\bm \lambda_i^{(2)}) \stackrel{d}{=} \bm Q_2 \bm \lambda_i^{(2)}$, which we combine with the above and obtain
        $$T_{\bm Q}\left(\bm \lambda_i\right) = T_{\bm Q}{\begin{pmatrix}
                \bm \lambda_i^{(1)} \\[3mm] \bm \lambda_i^{(2)}
            \end{pmatrix}} \stackrel{d}{=} \begin{pmatrix}
                \bm Q_1 \bm \lambda_i^{(1)} \\[3mm] \bm Q_2 \bm \lambda_i^{(2)}
            \end{pmatrix}= \bm Q \bm \lambda_i.$$
            
        Thus we have obtained
        $$(\bm x_i, \bm \lambda_i) \stackrel{d}{=} T_{\bm Q}(\bm x_i, \bm \lambda_i) = (\bm Q \bm x_i, \bm Q \bm \lambda_i) \implies (\bm x_i, \bm v_i) = (\bm x_i, \bm \lambda_i-\bm x_i) \stackrel{d}{=} (\bm Q\bm x_i, \bm Q(\bm \lambda_i-\bm x_i)) = (\bm Q\bm x_i, \bm Q\bm v_i).$$
        This concludes the proof.
    \end{proof}
\end{lemma}

\begin{lemma}\label{R diagonal}
    Suppose $n \gg d \gg (\log n)^3$. Then
    $$\left\| \bbE\left(\frac{\bm V_n^\top \bm V_n}{n}\right) \right\|_{\mathrm{op}}, \left\| \bbE\left(\frac{\bm X_n^\top \bm V_n}{n}\right) \right\|_{\mathrm{op}}^2, \left\| \bbE\left(\frac{\bm V_n^\top \bm X_n}{n}\right) \right\|_{\mathrm{op}}^2 = O\left(\text{err}_{n,d}^2\right),$$
    where
    $$\text{err}_{n,d} = n^{\gamma+\frac{\sigma_x^4(\alpha-1)}{2(\sigma_x^2+\sigma_\eta^2)^2}-\alpha}+\frac{1}{t_n}+\frac{t_n^3}{\sqrt{d}}+\frac{\sqrt{d}}{\sqrt{t_nn^\alpha}}.$$
    \begin{proof}
    We have by Lemma~\ref{orthogonal matrix invariance}, for any orthogonal matrix $\bm Q \in \bbR^{d\times d}$, 
    $$\bm Q \bbE\left[ \bm v_i \bm v_i^\top \right] \bm Q^\top = \bbE\left[ \bm Q \bm v_i \bm v_i^\top \bm Q^\top \right]
            = \bbE\left[ (\bm Q \bm v_i) (\bm Q \bm v_i)^\top \right]
            = \bbE[\bm v_i\bm v_i^\top].$$
            Then, by Proposition~\ref{orthogonal matrix linalg fact} and Lemma~\ref{lambda close to x},
        $$\left\|\bbE\left(\frac{\bm V_n^\top \bm V_n}{n}\right)\right\|_{\mathrm{op}} = \|\bbE(\bm v_i\bm v_i^\top)\|_{\mathrm{op}} \leq \frac{2\bbE\left(\sum_{j=1}^d \bm v_i(j)^2\right)}{d} = \frac{2\|\|\bm v_i\|\|_{L_2}^2}{d} = O\left(\text{err}_{n,d}^2\right).$$

        Similarly, we again have from Lemma~\ref{orthogonal matrix invariance}, for any orthogonal matrix $\bm Q \in \bbR^{d\times d}$, 
    $$\bm Q \bbE\left[ \bm x_i \bm v_i^\top \right] \bm Q^\top = \bbE\left[ \bm Q \bm x_i \bm v_i^\top \bm Q^\top \right]
            = \bbE\left[ (\bm Q \bm x_i) (\bm Q \bm v_i)^\top \right]
            = \bbE[\bm x_i\bm v_i^\top].$$
            Then, again, by Proposition~\ref{orthogonal matrix linalg fact} and Lemma~\ref{lambda close to x},
        $$\left\|\bbE\left(\frac{\bm X_n^\top \bm V_n}{n}\right)\right\|_{\mathrm{op}} = \|\bbE(\bm x_i\bm v_i^\top)\|_{\mathrm{op}} \leq \frac{2\bbE\left(\sum_{j=1}^d \bm |\bm x_i(j)\bm v_i(j)|\right)}{d} \leq \frac{2}{d}\sqrt{\|\|\bm x_i\|\|_{L_2}^2\|\|\bm v_i\|\|_{L_2}^2} = O\left(\text{err}_{n,d}\right).$$
        
        An identical argument works for $\left\| \bbE\left(\frac{\bm V_n^\top \bm X_n}{n}\right) \right\|_{op}$.
    \end{proof}
\end{lemma}

\subsection{Concentration of random covariance matrices}
While the previous section finds the expectation of certain covariance matrices, this section proves that the concentration indeed occurs for large values of $n,d$. There is dependence among the random vectors $\bm v_i$, which makes the proofs of concentrations less simple and we use Theorem 1.2 in \cite{10.1214/ECP.v16-1624} to work around it. As apparent from the statement of the lemmas that follow in this section, the concentrations require the additional condition $d\ll n^{1/3}$ that appear in Theorem~\ref{main theorem beta lambda}.

\begin{lemma}\label{the freedman lemma}
Assume that $n^\alpha\gg d\gg (\log n)^3$ and $\alpha>\gamma+\frac{\sigma_x^4(\alpha-1)}{2(\sigma_x^2+\sigma_\eta^2)^2}$. Then
$$\left\|\frac{1}{n}\bm V_n^\top\bm V_n - \bbE\left[\frac{1}{n}\bm V_n^\top\bm V_n\right]\right\|_{op} = O_\bbP\left(\frac{d^{3/2}\sqrt{\log d}}{t_n\sqrt{n}}\right).$$
    \begin{proof}
    
        Define the event
        $$\mathcal E_n = \left\{\max_{i\in[n]}\|\bm x_i\| = O(\sqrt{d}), \max_{i\in [n]}\|\bm \eta_i\| = O(\sqrt{d}), \max_{i\in[n],k=1,2}|\Tilde{N}_i^{(k)}| = \Theta\left(\frac{n^\alpha}{t_n}\right), \min_{i\in[n],k=1,2}|\Tilde{N}_i^{(k)}| = \Theta\left(\frac{n^\alpha}{t_n}\right)\right\}.$$
        By Lemma~\ref{norm x_i lemma} and Lemma~\ref{wij count}, 
        $\bbP(\mathcal E_n^c)\longrightarrow 0$. It follows that inside event $\mathcal E_n$, $\max_{i\in[n]}\|\bm z_i\| = O(\sqrt{d})$. Also, inside event $\mathcal E_n$, for all $i\in [n]$,
        $$\|\bm \lambda_i^{(1)}\| = \frac{\sqrt{d}}{t_n|\Tilde{N}_i^{(1)}|}\left\|\sum_{j\in \Tilde{N}_i^{(1)}} \bm z_j^{(1)}\right\| \leq \frac{\sqrt{d}}{t_n}\max_{j}\|\bm z_j^{(1)}\| = O\left(\frac{d}{t_n}\right).$$
        Similarly, $\|\bm \lambda_i^{(2)}\| = O\left(\frac{d}{t_n}\right)$ for all $i\in [n]$, and thus $\max_{i\in[n]}\|\bm \lambda_i\| = O\left(\frac{d}{t_n}\right)$  inside event $\mathcal E_n$. Recalling that $\|\bm v_i\| \leq \|\bm x_i\| + \|\bm \lambda_i\|$, it also follows that $\max_{i\in[n]}\|\bm v_i\| = O\left(\frac{d}{t_n}\right)$ inside event $\mathcal E_n$.

        Let $e_{ij}$ be the Erd\H{o}s--R\'enyi edge indicator between $i$ and $j$. Define $M_i := \{\bm x_i, \bm \eta_i, \{e_{ij}:j>i\}\}$, and the natural filtration $\mathcal F_i := \sigma(M_1,M_2,\ldots,M_i)$ for all $i\in [n]$. Write $Z = (M_1,\ldots,M_n)$, and for all $m\in [n]$,
        $$Z^{(m)} := (M_1,\ldots,M_{m-1},M_m',M_{m+1},\ldots,M_n),$$
        where $M_m' = \{\bm x_m', \bm \eta_m', \{e'_{mj}:j>m\}\}$ is an i.i.d. copy of $M_m$, i.e. $\bm x_m'$ is an i.i.d. copy of $\bm x_m$, and $\bm \eta_m'$ is an i.i.d. copy of $\eta_m$, and $\{e'_{mj}:j>m\}\}$ are i.i.d. copies of $\{e_{mj}:j>m\}\}$. We are allowed to resample a subset of Erd\H{o}s--R\'enyi edges since all such edges are mutually independent and also independent from everything else. Let
        $$F_n(Z) = \frac{1}{n}\bm V_n^\top\bm V_n.$$
        Then,
        $$\Delta_m^F = \bbE[F_n(Z) - F_n(Z^{(m)}) | \mathcal F_m]$$
        are Doob's martingale increments. 
        
        Fix $m\in [n]$. After resampling $M_m' = \{\bm x_m', \bm \eta_m', \{e'_{mj}:j>m\}\}$, let us denote 
        the new versions of $\bm \lambda_i$ as $\bm \lambda_i'$, of $\bm v_i$ as $\bm v_i'$, of $\Tilde{N}_i^{(k)}$ as $\Tilde{N}_i^{(k)'}$, for all $i\in [n]$ and $k=1,2$. Observe that
        $$F_n(Z) - F_n(Z^{(m)}) = \frac{1}{n}\sum_{i=1}^n (\bm v_i \bm v_i^\top - \bm v_i'\bm v_i'^{\top}).$$
        So, inside $\mathcal E_n$,
        $$\|F_n(Z) - F_n(Z^{(m)})\|_{\mathrm{op}} \leq \frac{1}{n}\sum_{i=1}^n \left\|\bm v_i \bm v_i^\top - \bm v_i'\bm v_i'^{\top}\right\|_{\mathrm{op}} \leq \frac{1}{n}\sum_{i=1}^n (\|\bm v_i\|+\|\bm v_i'\|)\left\|\bm v_i - \bm v_i'\right\|=O\left(\frac{d}{nt_n}\right)\sum_{i=1}^n \left\|\bm v_i - \bm v_i'\right\|.$$
        Note that $\left\|\bm v_i - \bm v_i'\right\|$
        is nonzero if either $i=m$ or $m$ is present in at least one of the old or new screened neighbourhoods of $i$. The second case can be written more precisely as $i\neq m$ but $m\in \bigcup_{j=1}^4 A_{ji}$, where
        $$A_{1i}=(\Tilde{N}_i^{(1)}\cap \Tilde{N}_i^{(1)'})\cup (\Tilde{N}_i^{(2)}\cap \Tilde{N}_i^{(2)'}), \quad A_{2i}=(\Tilde{N}_i^{(1)}\cap \Tilde{N}_i^{(1)'})\cup (\Tilde{N}_i^{(2)}\triangle \Tilde{N}_i^{(2)'}),$$
        $$A_{3i}=(\Tilde{N}_i^{(1)}\triangle \Tilde{N}_i^{(1)'})\cup (\Tilde{N}_i^{(2)}\cap \Tilde{N}_i^{(2)'}), \quad A_{4i}=(\Tilde{N}_i^{(1)}\triangle \Tilde{N}_i^{(1)'})\cup (\Tilde{N}_i^{(2)}\triangle \Tilde{N}_i^{(2)'}).$$
         
        \begin{itemize}
        \item[(i)] $i=m$\\[3mm]
        We use triangle inequality to get, inside event $\mathcal E_n$,
        $$\|\bm v_m - \bm v_m'\| \leq \|\bm v_m\| +\| \bm v_m'\| = O\left(\frac{d}{t_n}\right).$$
        
        Also, using Lemma~\ref{lambda close to x},
        \begin{align*}
            \bbE\left(\|\bm v_m \bm v_m^\top - \bm v_m'\bm v_m'^\top\|_{\mathrm{op}}^2\right) &\leq \bbE\left(\left(\|\bm v_m\|+\|\bm v_m'\|\right)^2\left(\|\bm v_m-\bm v_m'\|^2\right)\right)\\
            &\leq 2\left\|\|\bm v_m\|+\|\bm v_m'\|\right\|_{L_4}^4=O(d^2).
        \end{align*}
        
        \item[(ii)] $i\neq m$ but $m\in \Tilde{N}_i^{(1)}\cap \Tilde{N}_i^{(1)'}$\\[3mm]
        
        In this case, $\bm x_i = \bm x_i'$ and $\Tilde{N}_i^{(1)} = \Tilde{N}_i^{(1)'}$, so
        \begin{align*}
            \|\bm v_i^{(1)} - \bm v_i'^{(1)}\| = \|\bm \lambda_i^{(1)} - \bm \lambda_i'^{(1)}\| &= \frac{\sqrt{d}}{t_n}\left\|\frac{\sum_{j\in \Tilde{N}_i^{(1)}}\bm z_j^{(1)}}{|\Tilde{N}_i^{(1)}|} - \frac{\sum_{j\in \Tilde{N}_i^{(1)'}}\bm z_j'^{(1)}}{|\Tilde{N}_i^{(1)'}|}\right\|= \frac{\sqrt{d}}{t_n}\left\|\frac{\bm z_m^{(1)} - \bm z_m'^{(1)}}{|\Tilde{N}_i^{(1)}|}\right\|. 
        \end{align*}
        So, inside event $\mathcal E_n$,
        $$\|\bm v_i^{(1)} - \bm v_i'^{(1)}\| =\frac{\sqrt{d}}{t_n} \cdot \frac{O(\sqrt{d})}{\Theta\left(\frac{n^\alpha}{t_n}\right)} = O\left( \frac{d}{n^\alpha} \right).$$

        Also, for any fixed $r\geq 1$,
        $$\left\|\|\bm v_i^{(1)} - \bm v_i'^{(1)}\|\cdot \indic\{m\in \Tilde{N}_i^{(1)}\cap \Tilde{N}_i^{(1)'}\}\right\|_{L_r} \leq \frac{\sqrt{d}}{t_n}\left\|\bm z_m^{(1)} - \bm z_m'^{(1)}\right\|_{L_{2r}}\left\|\frac{1}{|\Tilde{N}_i^{(1)}|}\right\|_{L_{2r}} = O\left(\frac{\sqrt{d}}{t_n}\cdot\sqrt{d}\cdot \frac{t_n}{n^\alpha}\right)=O\left( \frac{d}{n^\alpha} \right),$$
        where the bounds follow from using Holder's inequality with Lemma~\ref{inverse wij count moments} and Lemma~\ref{wij count}.

        \item[(iii)] $i\neq m$ but $m \in \Tilde{N}_i^{(1)}\triangle \Tilde{N}_i^{(1)'}$\\[3mm]
        In this case $\bm x_i = \bm x_i'$. Also since only $M_m$ has been replaced by its independent copy $M_m'$, we must have $\Tilde{N}_i^{(1)}\triangle \Tilde{N}_i^{(1)'}=\{m\}$. Let us assume without loss of generality, $ \Tilde{N}_i^{(1)} \setminus \Tilde{N}_i'^{(1)} = \{m\}$. 
        Then,
        \begin{align*}
            \|\bm v_i^{(1)} - \bm v_i'^{(1)}\|=\|\bm \lambda_i^{(1)} - \bm \lambda_i'^{(1)}\| &= \frac{\sqrt{d}}{t_n}\left\| \frac{\sum_{j\in \Tilde{N}_i^{(1)}}\bm z_j^{(1)}}{|\Tilde{N}_i^{(1)}|} - \frac{\sum_{j\in \Tilde{N}_i^{(1)'}}\bm z_j'^{(1)}}{|\Tilde{N}_i^{(1)'}|} \right\|\\
                &= \frac{\sqrt{d}}{t_n}\cdot\frac{\left\| |\Tilde{N}_i^{(1)'}|\sum_{j\in \Tilde{N}_i^{(1)}}\bm z_j^{(1)} - |\Tilde{N}_i^{(1)}|\sum_{j\in \Tilde{N}_i^{(1)'}}\bm z_j'^{(1)} \right\|}{|\Tilde{N}_i^{(1)}||\Tilde{N}_i^{(1)'}|}\\
                &\leq \frac{\sqrt{d}}{t_n}\cdot\frac{\left(\sum_{j\in \Tilde{N}_i^{(1)}\setminus \{m\}}\left\| \bm z_j^{(1)}\right\|\right) + |\Tilde{N}_i^{(1)'}|\|\bm z_m^{(1)} \|}{|\Tilde{N}_i^{(1)}||\Tilde{N}_i^{(1)'}|}.,
        \end{align*}
        Then, inside event $\mathcal E_n$,
        $$\|\bm v_i^{(1)} - \bm v_i'^{(1)}\|\leq \frac{\sqrt{d}}{t_n}\left( \frac{O\left(\frac{n^\alpha}{t_n}\sqrt{d}\right)}{\Theta\left( \frac{n^\alpha}{t_n}\right)^2} \right) = O\left(\frac{d}{n^\alpha}\right).$$
        Also, for any fixed $r\geq 1$,
        \begin{align*}
            \left\|\|\bm v_i^{(1)} - \bm v_i'^{(1)}\|\cdot \indic\{m\in \Tilde{N}_i^{(1)}\triangle\Tilde{N}_i^{(1)'}\}\right\|_{L_r} &\leq \frac{\sqrt{d}}{t_n}\left\|\left(\sum_{j\in \Tilde{N}_i^{(1)}\setminus \{m\}}\left\| \bm z_j^{(1)}\right\|\right) + |\Tilde{N}_i^{(1)'}|\|\bm z_m^{(1)} \|\right\|_{L_{2r}}\left\|\frac{1}{|\Tilde{N}_i^{(1)}|}\right\|_{L_{4r}}\left\|\frac{1}{|\Tilde{N}_i^{(1)'}|}\right\|_{L_{4r}} \\
            &= \frac{\sqrt{d}}{t_n}O\left( \frac{n^\alpha}{t_n}\cdot \sqrt{d}\right)O\left(\left(\frac{t_n}{n^\alpha}\right)^2\right) = O\left(\frac{d}{n^\alpha}\right),
        \end{align*}
        where the bounds follow by repeated application of Holder's inequality and Lemma~\ref{inverse wij count moments} and Lemma~\ref{wij count}.
        \end{itemize}

        Observe that same bounds hold for $\left\|\|\bm v_i^{(2)} - \bm v_i'^{(2)}\|\right\|$ as well when $i\neq m$ but $m\in \Tilde{N}_i^{(2)}\cap \Tilde{N}_i^{(2)'}$ or $m\in \Tilde{N}_i^{(2)}\triangle \Tilde{N}_i^{(2)'}$. So, we get that, if $i\neq m$ but $m\in \bigcup_{j=1}^4 A_{ji}$, then inside event $\mathcal E_n$,
        $$\max_{i\in [n]}\left\|\|\bm v_i - \bm v_i'\|\right\|=O\left(\frac{d}{n^\alpha}\right).$$
        Also, by similar calculation, $\left\|\|\bm v_i^{(k)} - \bm v_i'^{(k)}\|\cdot\indic\{m\in A_{ji}\}\right\|_{L_r}=O(d/n^\alpha)$ holds for $k=1,2$ and $j=1,\ldots,4$. So, using Lemma~\ref{lambda close to x},
        \begin{align*}
            \bbE\left[\left\| \bm v_i\bm v_i^\top - \bm v_i'\bm v_i'^{\top}\right\|_{\mathrm{op}}^2\cdot\indic\left\{m\in \bigcup_{j=1}^4 A_{ij}\right\}\right]&\leq \left\|\|\bm v_i\|+\|\bm v_i'\|\right\|_{L_4}^2 \left\|\|\bm v_i - \bm v_i'\|\cdot\indic\left\{m\in \bigcup_{j=1}^4 A_{ij}\right\}\right\|_{L_4}^2\\
            &= O\left(d\cdot \frac{d^2}{n^{2\alpha}}\right)=O\left(\frac{d^3}{n^{2\alpha}}\right).
        \end{align*}

        Let $S_m$ denote the indices in $[n]$ such that $\|\bm v_i-\bm v_i'\|$ is nonzero. Combine all the  cases, and observe that $$S_m\setminus\{m\}\subseteq \bigcup_{k=1,2}\left(\Tilde{N}_i^{(k)}\cup \Tilde{N}_i^{(k)'}\right).$$
        So $|S_m\setminus \{m\}|$ is at most $\Theta(n^\alpha/t_n)$, thus we get inside event $\mathcal E_n$,
        \begin{align*}
            &\sum_{i=1}^n \|\bm v_i - \bm v_i'\| \leq O\left( \frac{d}{t_n} + \frac{n^\alpha}{t_n} \cdot \frac{d}{n^\alpha}\right)=O\left(\frac{d}{t_n}\right)\\
            \implies & \|F_n(Z) - F_n(Z^{(m)})\|_{op} = O\left(\frac{d^2}{nt_n^2}\right).
        \end{align*}
        Hence it follows that inside event $\mathcal E_n$, for all $m\in [n]$,
        \begin{equation}\label{term R}
            \|\Delta_m^F\|_{\mathrm{op}} = O\left(\frac{d^2}{nt_n^2}\right).
        \end{equation}

        On the other hand,
        \begin{align*}
            \sum_{m=1}^n \bbE\left[(\Delta_m^F)^2 | \mathcal{F}_{m-1}\right]  \preceq \left( \sum_{m=1}^n \bbE\left[\|\Delta_m^F\|_{\mathrm{op}}^2 \Bigr| \mathcal{F}_{m-1}\right] \right) \bm I_d.
        \end{align*}
        Let us bound an individual summand $\bbE\left[\|\Delta_m^F\|_{\mathrm{op}}^2 \Bigr| \mathcal{F}_{m-1}\right]$. Note that, using Cauchy-Schwarz inequality,
        \begin{align*}
            \bbE\left[\|\Delta_m^F\|_{\mathrm{op}}^2 \right] &= \bbE\left[\left\|\frac{1}{n}\sum_{i=1}^n \left(\bm v_i\bm v_i^\top - \bm v_i' \bm v_i'^\top\right)\right\|_{\mathrm{op}}^2 \right]\\
            &\leq \frac{1}{n^2}\left(2\bbE\left[\left\| \left(\bm v_m\bm v_m^\top - \bm v_m' \bm v_m'^\top\right)\right\|_{\mathrm{op}}^2 \right]+2\bbE\left[\left\|\sum_{i\in S_m\setminus\{m\}} \left(\bm v_i\bm v_i^\top - \bm v_i' \bm v_i'^\top\right)\right\|_{\mathrm{op}}^2 \right]\right).
        \end{align*}
        Now, again using Cauchy-Schwarz inequality,
        \begin{align*}
            \bbE\left[\left\|\sum_{i\in S_m\setminus\{m\}} \left(\bm v_i\bm v_i^\top - \bm v_i' \bm v_i'^\top\right)\right\|_{\mathrm{op}}^2 \Bigr| S_m\setminus \{m\} \right] &\leq \bbE\left[|S_m\setminus\{m\}|\sum_{i\in S_m\setminus\{m\}}\left\| \left(\bm v_i\bm v_i^\top - \bm v_i' \bm v_i'^\top\right)\right\|_{\mathrm{op}}^2 \Bigr| S_m\setminus \{m\} \right]\\
            &\leq |S_m\setminus \{m\}|^2\max_{i\in S_m\setminus\{m\}} \bbE\left\| \left(\bm v_i\bm v_i^\top - \bm v_i' \bm v_i'^\top\right)\right\|_{\mathrm{op}}^2\\
            &= O\left(|S_m\setminus \{m\}|^2\cdot \frac{d^3}{n^{2\alpha}}\right).
        \end{align*}
        Taking a further expectation recalling that $S_m\setminus\{m\}\subseteq \bigcup_{k=1,2}\left(\Tilde{N}_i^{(k)}\cup \Tilde{N}_i^{(k)'}\right)$, it follows using Lemma~\ref{inverse wij count moments},
        $$\bbE\left[\left\|\sum_{i\in S_m\setminus\{m\}} \left(\bm v_i\bm v_i^\top - \bm v_i' \bm v_i'^\top\right)\right\|_{\mathrm{op}}^2 \right] \leq O\left(\bbE\left(|\Tilde{N}_i\cup \Tilde{N}_i'|^2\right)\cdot\frac{d^3}{n^{2\alpha}}\right) = O\left(\frac{d^3}{t_n^2}\right).$$
        Plugging this in, we get
        $$\bbE\left[\|\Delta_m^F\|_{\mathrm{op}}^2 \right]\leq \frac{1}{n^2}\left(2\bbE\left[\left\| \left(\bm v_m\bm v_m^\top - \bm v_m' \bm v_m'^\top\right)\right\|_{\mathrm{op}}^2 \right]+2\bbE\left[\left\|\sum_{i\in S_m\setminus\{m\}} \left(\bm v_i\bm v_i^\top - \bm v_i' \bm v_i'^\top\right)\right\|_{\mathrm{op}}^2 \right]\right)=O\left(\frac{d^3}{n^2t_n^2}\right).$$
        Summing over $m$ yields
        $$\sum_{m=1}^n \bbE\left[\|\Delta_m^F\|_{\mathrm{op}}^2 \right]=O\left(\frac{d^3}{nt_n^2}\right).$$
        More specifically, let $C>0$ be a constant such that, using the above and Eq.~\ref{term R},
        $$\|\Delta_m^F\|_{\mathrm{op}} \leq C\frac{d^{2}}{nt_n^2} \quad \forall m\in[n] \text{ inside event $\mathcal E_n$}, \qquad \sum_{m=1}^n\bbE\left[\|\Delta_m^F\|^2 \right]\leq C\frac{d^3}{nt_n^2}.$$
        Then, using Markov's inequality we get
        $$\bbP\left( \sum_{m=1}^n \bbE\left[\|\Delta_m^F\|_{\mathrm{op}}^2 \Bigr| \mathcal{F}_{m-1}\right] \geq CK\cdot \frac{d^3}{nt_n^2}\right) \leq  \frac{\sum_{m=1}^n \bbE\left[\|\Delta_m^F\|_{\mathrm{op}}^2 \right]}{CK\cdot \frac{d^3}{nt_n^2}} \leq \frac{1}{K}.$$

        Then, by Theorem 1.2 in \cite{10.1214/ECP.v16-1624}, 
        \begin{align*}
            \bbP\left(\left\|\frac{1}{n}\bm V_n^\top\bm V_n - \bbE\left[\frac{1}{n}\bm V_n^\top\bm V_n\right]\right\|_{op} \geq a\right)&= \bbP\left( \left\{\left\|\frac{1}{n}\bm V_n^\top\bm V_n - \bbE\left[\frac{1}{n}\bm V_n^\top\bm V_n\right]\right\|_{op} \geq a\right\} \cap \mathcal E_n \right)+\bbP(\mathcal E_n^c) \\
            &\leq 2d\exp\left(\frac{-a^2/2}{CK\frac{d^3}{nt_n^2} + C\frac{ad^{2}}{3nt_n^2}}\right) + \bbP\left( \sum_{m=1}^n \bbE\left[\|\Delta_m^F\|_{\mathrm{op}}^2 \Bigr| \mathcal{F}_{m-1}\right] \geq CK\cdot \frac{d^3}{nt_n^2}\right) + o(1).
        \end{align*}

        Choose $a = c\,\frac{d^{3/2}\sqrt{\log d}}{t_n\sqrt{n}}$ for some $c>0$. Then,
        $$CK\frac{d^3}{nt_n^2} + C\frac{ad^{2}}{3nt_n^2} = CK\frac{d^3}{nt_n^2} + cC\cdot \frac{d^{3/2}\sqrt{\log d}}{t_n\sqrt{n}}\cdot \frac{d^{2}}{3nt_n^2} \leq C'\cdot \frac{d^3}{nt_n^2},$$
        for a constant $C'>0$ independent of $c$, since the first summand clearly dominates the second eventually. Hence,
        \begin{align*}
            2d\exp\left(\frac{-a^2/2}{CK\frac{d^3}{nt_n^2} + C\frac{ad^{2}}{3nt_n^2}}\right) + \bbP\left( \sum_{m=1}^n \bbE\left[\|\Delta_m^F\|_{\mathrm{op}}^2 \Bigr| \mathcal{F}_{m-1}\right] \geq CK\cdot \frac{d^3}{nt_n^2}\right) &\leq  2\exp\left( \log d - \frac{c^2d^3\log d/nt_n^2}{C'd^3/nt_n^2} \right) + \frac{1}{K}\\
            &\leq 2\exp\left(\log d - \frac{c^2\log d}{C'}\right) + \frac{1}{K}.
        \end{align*}
        Because the above can be made arbitrarily small by choosing $K, c$ appropriately, we get that
        $$\left\|\frac{1}{n}\bm V_n^\top\bm V_n - \bbE\left[\frac{1}{n}\bm V_n^\top\bm V_n\right]\right\|_{\mathrm{op}} = O_\bbP\left(\frac{d^{3/2}\sqrt{\log d}}{t_n\sqrt{n}}\right).$$
    \end{proof}
\end{lemma}

\begin{lemma}\label{the freedman lemma 2}
Assume that $n^\alpha\gg d\gg (\log n)^3$ and $\alpha>\gamma+\frac{\sigma_x^4(\alpha-1)}{2(\sigma_x^2+\sigma_\eta^2)^2}$. Then
$$\left\|\frac{1}{n}\bm X_n^\top\bm V_n - \bbE\left[\frac{1}{n}\bm X_n^\top\bm V_n\right]\right\|_{op}, \left\|\frac{1}{n}\bm V_n^\top\bm X_n - \bbE\left[\frac{1}{n}\bm V_n^\top\bm X_n\right]\right\|_{op} = O_\bbP\left(\frac{d^{3/2}\sqrt{\log d}}{t_n\sqrt{n}}\right).$$
    \begin{proof}
    
        Define the event
        $$\mathcal E_n = \left\{\max_{i\in[n]}\|\bm x_i\| = O(\sqrt{d}), \max_{i\in[n]}\|\bm \eta_i\| = O(\sqrt{d}), \max_{i\in[n],k=1,2}|\Tilde{N}_i^{(k)}| = \Theta(n^\alpha/t_n), \min_{i\in[n],k=1,2}|\Tilde{N}_i^{(k)}| = \Theta(n^\alpha/t_n)\right\}.$$
        By Lemma~\ref{norm x_i lemma} and Lemma~\ref{wij count}, 
        $\bbP(\mathcal E_n^c)\longrightarrow 0$. It follows that inside event $\mathcal E_n$,  $\max_{i\in [n]}\|\bm z_i\| = O(\sqrt{d})$. Also, inside event $\mathcal E_n$, for all $i\in [n]$,
        $$\|\bm \lambda_i^{(1)}\| = \frac{\sqrt{d}}{t_n|\Tilde{N}_i^{(1)}|}\left\|\sum_{j\in \Tilde{N}_i^{(1)}} \bm z_j^{(1)}\right\| \leq \frac{\sqrt{d}}{t_n}\max_{j}\|\bm z_j^{(1)}\| = O\left(\frac{d}{t_n}\right).$$
        Similarly, $\|\bm \lambda_i^{(2)}\| = O\left(\frac{d}{t_n}\right)$ for all $i\in [n]$, and thus $\max_{i\in[n]}\|\bm \lambda_i\| = O\left(\frac{d}{t_n}\right)$ inside event $\mathcal E_n$. Recalling that $\|\bm v_i\| \leq \|\bm x_i\| + \|\bm \lambda_i\|$, it also follows that $\max_{i\in [n]}\|\bm v_i\| = O\left(\frac{d}{t_n}\right)$ inside event $\mathcal E_n$.
        
        Let $e_{ij}$ be the Erd\H{o}s--R\'enyi edge indicator between $i$ and $j$. Define $M_i := \{\bm x_i, \bm \eta_i, \{e_{ij}:j>i\}\}$, and the natural filtration $\mathcal F_i := \sigma(M_1,M_2,\ldots,M_i)$ for all $i\in [n]$. Write $Z = (M_1,\ldots,M_n)$, and for all $m\in [n]$,
        $$Z^{(m)} := (M_1,\ldots,M_{m-1},M_m',M_{m+1},\ldots,M_n),$$
        where $M_m' = \{\bm x_m', \bm \eta_m', \{e'_{mj}:j>m\}\}$ is an i.i.d. copy of $M_m$, i.e. $\bm x_m'$ is an i.i.d. copy of $\bm x_m$, and $\bm \eta_m'$ is an i.i.d. copy of $\eta_m$, and $\{e'_{mj}:j>m\}\}$ are i.i.d. copies of $\{e_{mj}:j>m\}\}$. We are allowed to resample a subset of Erd\H{o}s--R\'enyi edges since all such edges are mutually independent and also independent from everything else. Let
        $$G_n(Z) = \frac{1}{n}\bm X_n^\top \bm V_n.$$
        Then,
        $$\Delta_m^G = \bbE[G_n(Z) - G_n(Z^{(m)}) | \mathcal F_m]$$
        are Doob's martingale increments. 
        
        Fix $m\in [n]$. After resampling $M_m' = \{\bm x_m', \bm \eta_m', \{e'_{mj}:j>m\}\}$, let us denote new copies of $\bm \lambda_i$ as $\bm \lambda_i'$, of $\bm v_i$ as $\bm v_i'$, of $\Tilde{N}_i^{(k)}$ as $\Tilde{N}_i^{(k)'}$, for all $i\in [n]$ and $k=1,2$. Observe that
        $$\|G_n(Z) - G_n(Z^{(m)})\|_{\mathrm{op}} = \left\|\frac{1}{n}\sum_{i=1}^n (\bm x_i \bm v_i^\top - \bm x_i'\bm v_i'^{\top})\right\|_{\mathrm{op}} \leq \frac{1}{n}\sum_{i=1}^n \left\|\bm x_i \bm v_i^\top - \bm x_i'\bm v_i'^{\top}\right\|_{\mathrm{op}}.$$
        
        Note that $\left\|\bm x_i \bm v_i^T - \bm x_i'\bm v_i'^{T}\right\|_{op}$
         is nonzero if either $i=m$ or $m$ is present in at least one of the old or new screened neighbourhoods of $i$. The second case can be written more precisely as $i\neq m$ but $m\in \bigcup_{j=1}^4 A_{ji}$, where
         $$A_{1i}=(\Tilde{N}_i^{(1)}\cap \Tilde{N}_i^{(1)'})\cup (\Tilde{N}_i^{(2)}\cap \Tilde{N}_i^{(2)'}), \quad A_{2i}=(\Tilde{N}_i^{(1)}\cap \Tilde{N}_i^{(1)'})\cup (\Tilde{N}_i^{(2)}\triangle \Tilde{N}_i^{(2)'}),$$
         $$A_{3i}=(\Tilde{N}_i^{(1)}\triangle \Tilde{N}_i^{(1)'})\cup (\Tilde{N}_i^{(2)}\cap \Tilde{N}_i^{(2)'}), \quad A_{4i}=(\Tilde{N}_i^{(1)}\triangle \Tilde{N}_i^{(1)'})\cup (\Tilde{N}_i^{(2)}\triangle \Tilde{N}_i^{(2)'}).$$
         
        \begin{itemize}
        \item[(i)] $i=m$\\[3mm]
        We have inside $\mathcal E_n$,
        $$\|\bm v_m - \bm v_m'\| \leq \|\bm v_m\| + \|\bm v_m'\| = O\left( \frac{d}{t_n} \right),$$
        and
        $$\|\bm x_m \bm v_m^\top - \bm x_m' \bm v_m'^{\top}\|_{op} \leq \|\bm x_m\|\|\bm v_m - \bm v_m'\| +\| \bm x_m - \bm x_m'\|\|\bm v_m'\| = O\left(\sqrt{d}\cdot\frac{d}{t_n}\right) = O\left(\frac{d^{3/2}}{t_n}\right).$$
        
        Then, we also have, from Lemma~\ref{lambda close to x},
        \begin{align*}
            \bbE\left(\|\bm x_m \bm v_m^\top - \bm x_m'\bm v_m'^\top\|_{\mathrm{op}}^2\right) &\leq 2\bbE\left(\|\bm x_m\|^2\|\bm v_m - \bm v_m'\|^2 +\| \bm x_m - \bm x_m'\|^2\|\bm v_m'\|^2\right)\\
            &\leq 2\left\|\|\bm x_m\|\right\|_{L_4}^2\left\|\|\bm v_m-\bm v_m'\|\right\|_{L_4}^2 + 2\left\|\|\bm v_m'\|\right\|_{L_4}^2 \left\|\|\bm x_m - \bm x_m'\|\right\|_{L_4}^2\\
            &\leq 2\left\|\|\bm x_m\|\right\|_{L_4}^2\left\|\|\bm v_m-\bm v_m'\|\right\|_{L_4}^2 + 4\left\|\|\bm v_m'\|\right\|_{L_4}^2 \left(\left\|\|\bm x_m\|\right\|_{L_4}^2 + \left\|\|\bm x_m'\|\right\|_{L_4}^2\right)\\
            & = O\left( d^2\right).
        \end{align*}
        
        \item[(ii)] $i\neq m$ but $m\in \Tilde{N}_i^{(1)}\cap \Tilde{N}_i^{(1)'}$\\[3mm]
        In this case, $\bm x_i = \bm x_i'$ and $\Tilde{N}_i^{(1)} = \Tilde{N}_i^{(1)'}$, so
        \begin{align*}
            \|\bm v_i^{(1)} - \bm v_i'^{(1)}\| = \|\bm \lambda_i^{(1)} - \bm \lambda_i'^{(1)}\| &= \frac{\sqrt{d}}{t_n}\left\|\frac{\sum_{j\in \Tilde{N}_i^{(1)}}\bm z_j^{(1)}}{|\Tilde{N}_i^{(1)}|} - \frac{\sum_{j\in \Tilde{N}_i^{(1)'}}\bm z_j'^{(1)}}{|\Tilde{N}_i^{(1)'}|}\right\|= \frac{\sqrt{d}}{t_n}\left\|\frac{\bm z_m^{(1)} - \bm z_m'^{(1)}}{|\Tilde{N}_i^{(1)}|}\right\|. 
        \end{align*}
        So, inside event $\mathcal E_n$,
        $$\|\bm v_i^{(1)} - \bm v_i'^{(1)}\| =\frac{\sqrt{d}}{t_n} \cdot \frac{O(\sqrt{d})}{\Theta\left(\frac{n^\alpha}{t_n}\right)} = O\left( \frac{d}{n^\alpha} \right).$$

        Also, for any fixed $r\geq 1$,
        $$\left\|\|\bm v_i^{(1)} - \bm v_i'^{(1)}\|\cdot\indic\{m\in \Tilde{N}_i^{(1)}\cap\Tilde{N}_i^{(1)'}\}\right\|_{L_r} \leq \frac{\sqrt{d}}{t_n}\left\|\bm z_m^{(1)} - \bm z_m'^{(1)}\right\|_{L_{2r}}\left\|\frac{1}{|\Tilde{N}_i^{(1)}|}\right\|_{L_{2r}} = O\left(\frac{\sqrt{d}}{t_n}\cdot\sqrt{d}\cdot \frac{t_n}{n^\alpha}\right)=O\left( \frac{d}{n^\alpha} \right),$$
        where the bounds follow from using Holder's inequality with Lemma~\ref{inverse wij count moments} and Lemma~\ref{wij count}.

        \item[(iii)] $i\neq m$ but $m \in \Tilde{N}_i^{(1)}\triangle \Tilde{N}_i^{(1)'}$\\[3mm]
        In this case $\bm x_i = \bm x_i'$. Also since only $M_m$ has been replaced by its independent copy $M_m'$, we must have $\Tilde{N}_i^{(1)}\triangle \Tilde{N}_i^{(1)'}=\{m\}$. Let us assume without loss of generality, $ \Tilde{N}_i^{(1)} \setminus \Tilde{N}_i'^{(1)} = \{m\}$. 
        Then,
        \begin{align*}
            \|\bm v_i^{(1)} - \bm v_i'^{(1)}\|=\|\bm \lambda_i^{(1)} - \bm \lambda_i'^{(1)}\| &= \frac{\sqrt{d}}{t_n}\left\| \frac{\sum_{j\in \Tilde{N}_i^{(1)}}\bm z_j^{(1)}}{|\Tilde{N}_i^{(1)}|} - \frac{\sum_{j\in \Tilde{N}_i^{(1)'}}\bm z_j'^{(1)}}{|\Tilde{N}_i^{(1)'}|} \right\|\\
                &= \frac{\sqrt{d}}{t_n}\cdot\frac{\left\| |\Tilde{N}_i^{(1)'}|\sum_{j\in \Tilde{N}_i^{(1)}}\bm z_j^{(1)} - |\Tilde{N}_i^{(1)}|\sum_{j\in \Tilde{N}_i^{(1)'}}\bm z_j'^{(1)} \right\|}{|\Tilde{N}_i^{(1)}||\Tilde{N}_i^{(1)'}|}\\
                &\leq \frac{\sqrt{d}}{t_n}\cdot\frac{\left(\sum_{j\in \Tilde{N}_i^{(1)}\setminus \{m\}}\left\| \bm z_j^{(1)}\right\|\right) + |\Tilde{N}_i^{(1)'}|\|\bm z_m^{(1)} \|}{|\Tilde{N}_i^{(1)}||\Tilde{N}_i^{(1)'}|}.,
        \end{align*}
        Then, inside event $\mathcal E_n$,
        $$\|\bm v_i^{(1)} - \bm v_i'^{(1)}\|\leq \frac{\sqrt{d}}{t_n}\left( \frac{O\left(\frac{n^\alpha}{t_n}\sqrt{d}\right)}{\Theta\left( \frac{n^\alpha}{t_n}\right)^2} \right) = O\left(\frac{d}{n^\alpha}\right).$$
        Also, for any fixed $r\geq 1$,
        \begin{align*}
            \left\|\|\bm v_i^{(1)} - \bm v_i'^{(1)}\|\cdot\indic\{m\in \Tilde{N}_i^{(1)}\triangle\Tilde{N}_i^{(1)'}\}\right\|_{L_r} &\leq \frac{\sqrt{d}}{t_n}\left\|\left(\sum_{j\in \Tilde{N}_i^{(1)}\setminus \{m\}}\left\| \bm z_j^{(1)}\right\|\right) + |\Tilde{N}_i^{(1)'}|\|\bm z_m^{(1)} \|\right\|_{L_{2r}}\left\|\frac{1}{|\Tilde{N}_i^{(1)}|}\right\|_{L_{4r}}\left\|\frac{1}{|\Tilde{N}_i^{(1)'}|}\right\|_{L_{4r}} \\
            &= \frac{\sqrt{d}}{t_n}O\left( \frac{n^\alpha}{t_n}\cdot \sqrt{d}\right)O\left(\left(\frac{t_n}{n^\alpha}\right)^2\right) = O\left(\frac{d}{n^\alpha}\right),
        \end{align*}
        where the bounds follow by repeated application of Holder's inequality and Lemma~\ref{inverse wij count moments} and Lemma~\ref{wij count}.
        \end{itemize}

        Observe that same bounds hold for $\|\bm v_i^{(2)} - \bm v_i'^{(2)}\|$ as well when  $i\neq m$ but $m\in \Tilde{N}_i^{(2)}\cap \Tilde{N}_i^{(2)'}$ or $m\in \Tilde{N}_i^{(2)}\triangle \Tilde{N}_i^{(2)'}$. So, we get that, if $i\neq m$ but $i\in \bigcup_{j=1}^4 A_{ji}$, then inside event $\mathcal E_n$,
        $$\max_{i\in [n]}\|\bm x_i\bm v_i^\top - \bm x_i'\bm v_i'^\top\|_{op} \leq \|\bm x_i\|\|\bm v_i - \bm v_i'\|=O\left(\frac{d^{3/2}}{n^\alpha}\right).$$
        Also, by similar calculation,  $\left\|\|\bm v_i^{(k)} - \bm v_i'^{(k)}\|\cdot\indic\{m\in A_{ji}\}\right\|_{L_r} = O(d/n^\alpha)$ holds for $k=1,2$ and $j=1,\ldots,4$. So,
        $$\bbE\left[\left\| \bm x_i\bm v_i^\top - \bm x_i'\bm v_i'^{\top}\right\|_{\mathrm{op}}^2\cdot\indic\left\{m\in \bigcup_{j=1}^4 A_{ji}\right\}\right]\leq \left\|\|\bm x_i\|\right\|_{L_4}^2 \left\|\|\bm v_i - \bm v_i'\|\cdot\indic\left\{m\in \bigcup_{j=1}^4 A_{ji}\right\}\right\|_{L_4}^2 = O\left(d\cdot \frac{d^2}{n^{2\alpha}}\right)=O\left(\frac{d^3}{n^{2\alpha}}\right).$$

        Let $S_m$ denote the indices in $[n]$ such that $\|\bm x_i \bm v_i^\top- \bm x_i' \bm v_i'^\top\|_{\mathrm{op}}$ is nonzero. Combine all the  cases, and observe that $$S_m\setminus\{m\}\subseteq \bigcup_{k=1,2}\left(\Tilde{N}_i^{(k)}\cup \Tilde{N}_i^{(k)'}\right).$$
        So $|S_m\setminus \{m\}|$ is at most $\Theta(n^\alpha/t_n)$, thus we get inside event $\mathcal E_n$,
        \begin{align*}
             \|G_n(Z) - G_n(Z^{(m)})\|_{\mathrm{op}} = O\left(\frac{d^{3/2}}{nt_n} + \frac{n^\alpha}{nt_n}\cdot\frac{d^{3/2}}{n^\alpha}\right) = O\left(\frac{d^{3/2}}{nt_n}\right).
        \end{align*}
        Hence it follows that inside event $\mathcal E_n$, for all $m\in [n]$,
        \begin{equation}\label{term R 2}
            \|\Delta_m^G\|_{\mathrm{op}} = O\left(\frac{d^{3/2}}{nt_n}\right).
        \end{equation}

        On the other hand,
        \begin{align*}
            \sum_{m=1}^n \bbE\left[(\Delta_m^G)^2 | \mathcal{F}_{m-1}\right]  \preceq \left( \sum_{m=1}^n \bbE\left[\|\Delta_m^G\|_{\mathrm{op}}^2 \Bigr| \mathcal{F}_{m-1}\right] \right) \bm I_d.
        \end{align*}
        Let us bound an individual summand $\bbE\left[\|\Delta_m^G\|_{\mathrm{op}}^2 \Bigr| \mathcal{F}_{m-1}\right]$. Note that, using Cauchy-Schwarz inequality,
        \begin{align*}
            \bbE\left[\|\Delta_m^G\|_{\mathrm{op}}^2 \right] &= \bbE\left[\left\|\frac{1}{n}\sum_{i=1}^n \left(\bm x_i\bm v_i^\top - \bm x_i' \bm v_i'^\top\right)\right\|_{\mathrm{op}}^2 \right]\\
            &\leq \frac{1}{n^2}\left(2\bbE\left[\left\| \left(\bm x_m\bm v_m^\top - \bm x_m' \bm v_m'^T\right)\right\|_{\mathrm{op}}^2 \right]+2\bbE\left[\left\|\sum_{i\in S_m\setminus\{m\}} \left(\bm x_i\bm v_i^\top - \bm x_i' \bm v_i'^\top\right)\right\|_{\mathrm{op}}^2 \right]\right).
        \end{align*}
        Now, again using Cauchy-Schwarz inequality,
        \begin{align*}
            \bbE\left[\left\|\sum_{i\in S_m\setminus\{m\}} \left(\bm x_i\bm v_i^\top - \bm x_i' \bm v_i'^\top\right)\right\|_{\mathrm{op}}^2 \Bigr| S_m\setminus \{m\} \right] &\leq \bbE\left[|S_m\setminus\{m\}|\sum_{i\in S_m\setminus\{m\}}\left\| \left(\bm x_i\bm v_i^\top - \bm x_i' \bm v_i'^\top\right)\right\|_{\mathrm{op}}^2 \Bigr| S_m\setminus \{m\} \right]\\
            &\leq |S_m\setminus \{m\}|^2\max_{i\in S_m\setminus\{m\}} \bbE\left\| \left(\bm x_i\bm v_i^\top - \bm x_i' \bm v_i'^\top\right)\right\|_{\mathrm{op}}^2\\
            &= O\left(|S_m\setminus \{m\}|^2\cdot \frac{d^3}{n^{2\alpha}}\right).
        \end{align*}
        Taking a further expectation recalling that $S_m\setminus\{m\}\subseteq \bigcup_{k=1,2}\left(\Tilde{N}_i^{(k)}\cup \Tilde{N}_i^{(k)'}\right)$, it follows using Lemma~\ref{inverse wij count moments},
        $$\bbE\left[\left\|\sum_{i\in S_m\setminus\{m\}} \left(\bm x_i\bm v_i^\top - \bm x_i' \bm v_i'^\top\right)\right\|_{\mathrm{op}}^2 \right] \leq O\left(\bbE\left(|\Tilde{N}_i\cup \Tilde{N}_i'|^2\right)\cdot\frac{d^3}{n^{2\alpha}}\right) = O\left(\frac{d^3}{t_n^2}\right).$$
        Plugging this in, we get
        $$\bbE\left[\|\Delta_m^G\|_{\mathrm{op}}^2 \right]\leq \frac{1}{n^2}\left(2\bbE\left[\left\| \left(\bm x_m\bm v_m^\top - \bm x_m' \bm v_m'^\top\right)\right\|_{\mathrm{op}}^2 \right]+2\bbE\left[\left\|\sum_{i\in S_m\setminus\{m\}} \left(\bm x_i\bm v_i^\top - \bm x_i' \bm v_i'^\top\right)\right\|_{\mathrm{op}}^2 \right]\right)=O\left(\frac{d^3}{n^2t_n^2}\right).$$
        Summing over $m$, we get 
        $$\sum_{m=1}^n \bbE\left[\|\Delta_m^G\|_{\mathrm{op}}^2 \right]=O\left(\frac{d^3}{nt_n^2}\right).$$
        More specifically, let $C>0$ be a constant such that, using the above and Eq.~\ref{term R 2},
        $$\|\Delta_m^G\|_{\mathrm{op}} \leq C\frac{d^{3/2}}{nt_n} \quad  \forall m\in[n] \text{ inside event $\mathcal E_n$}, \qquad \sum_{m=1}^n\bbE\left[\|\Delta_m^G\|_{\mathrm{op}}^2 \right]\leq C\frac{d^3}{nt_n^2}.$$
        The martingale increments $\Delta_m^G$ are not necessarily symmetric, so construct
        $$\Tilde{\Delta}_m^G = \begin{bmatrix}
            \bm 0 & \Delta_m^G \\ (\Delta_m^G)^\top & \bm 0
        \end{bmatrix},$$
        and observe that 
        $$\|\Tilde{\Delta}_m^G\|_{\mathrm{op}} = \|{\Delta}_m^G\|_{\mathrm{op}}\leq C\frac{d^{3/2}}{nt_n} \quad  \forall m\in[n]\text{ inside event $\mathcal E_n$}, \qquad \sum_{m=1}^n\bbE\left[\|\Tilde{\Delta}_m^G\|_{\mathrm{op}}^2 \right]=\sum_{m=1}^n\bbE\left[\|\Delta_m^G\|_{\mathrm{op}}^2 \right]\leq C\frac{d^3}{nt_n^2}.$$
        Also, observe that
        $$\sum_{m=1}^n \Tilde{\Delta}_m^G = \begin{bmatrix}
            \bm 0 & \frac{1}{n}\bm X_n^\top \bm V_n - \frac{1}{n}\bbE\left[\bm X_n^\top \bm V_n\right]\\ \frac{1}{n}\bm V_n^\top \bm X_n - \frac{1}{n}\bbE\left[\bm V_n^\top \bm X_n\right] & \bm 0
        \end{bmatrix}\implies \left\| \sum_{m=1}^n \Tilde{\Delta}_m^G \right\|_{\mathrm{op}} = \left\|\frac{1}{n}\bm X_n^\top \bm V_n - \frac{1}{n}\bbE\left[\bm X_n^\top \bm V_n\right]\right\|_{\mathrm{op}}.$$
        
        Then, using Markov's inequality we get,
        $$\bbP\left( \sum_{m=1}^n \bbE\left[\|\Tilde{\Delta}_m^G\|_{\mathrm{op}}^2 \Bigr| \mathcal{F}_{m-1}\right] \geq CK\cdot \frac{d^3}{nt_n^2}\right) = \bbP\left( \sum_{m=1}^n \bbE\left[\|{\Delta}_m^G\|_{\mathrm{op}}^2 \Bigr| \mathcal{F}_{m-1}\right] \geq CK\cdot \frac{d^3}{nt_n^2}\right) \leq  \frac{\sum_{m=1}^n \bbE\left[\|\Delta_m^G\|_{\mathrm{op}}^2 \right]}{CK\cdot \frac{d^3}{nt_n^2}} \leq \frac{1}{K}.$$

        Then, by Theorem 1.2 in \cite{10.1214/ECP.v16-1624}, 
        \begin{align*}
            \bbP\left( \left\|\frac{1}{n}\bm X_n^\top \bm V_n - \bbE\left[\frac{1}{n}\bm X_n^\top \bm V_n\right]\right\|_{\mathrm{op}} \geq a \right) &=  \bbP\left( \left\{\left\|\sum_{m=1}^n \Tilde{\Delta}_m^G\right\|_{\mathrm{op}} \geq a\right\} \cap \mathcal E_n \right) + \bbP(\mathcal E_n^c)\\
            &\leq 2d\exp\left(\frac{-a^2/2}{CK\frac{d^3}{nt_n^2} + C\frac{ad^{3/2}}{3nt_n}}\right)+\bbP\left( \sum_{m=1}^n \bbE\left[\|\Tilde{\Delta}_m^G\|_{\mathrm{op}}^2 \Bigr| \mathcal{F}_{m-1}\right] \geq CK\cdot \frac{d^3}{nt_n^2}\right) +o(1).
        \end{align*}

        Choose $a = c\,\frac{d^{3/2}\sqrt{\log d}}{t_n\sqrt{n}}$ for some $c>0$. Then,
        $$CK\frac{d^3}{nt_n^2} + C\frac{ad^{2}}{3nt_n^2} = CK\frac{d^3}{nt_n^2} + cC\cdot \frac{d^{3/2}\sqrt{\log d}}{t_n\sqrt{n}}\cdot \frac{d^{3/2}}{3nt_n} \leq C'\cdot \frac{d^3}{nt_n^2},$$
        for a constant $C'>0$ independent of $c$, since the first summand clearly dominates the second eventually. Hence,
        \begin{align*}
            2d\exp\left(\frac{-a^2/2}{CK\frac{d^3}{nt_n^2} + C\frac{ad^{3/2}}{3nt_n}}\right) + \bbP\left( \sum_{m=1}^n \bbE\left[\|\Tilde{\Delta}_m^G\|_{\mathrm{op}}^2 \Bigr| \mathcal{F}_{m-1}\right] \geq CK\cdot \frac{d^3}{nt_n^2}\right) &\leq   2\exp\left( \log d - \frac{c^2d^3\log d/nt_n^2}{C'd^3/nt_n^2} \right)+\frac{1}{K}\\
            &\leq   2\exp\left(\log d - \frac{c^2\log d}{C'}\right)+\frac{1}{K}.
        \end{align*}
        Because the above can be made arbitrarily small by choosing $K, c$ appropriately, we get that
        $$\left\|\frac{1}{n}\bm X_n^\top\bm V_n - \bbE\left[\frac{1}{n}\bm X_n^\top\bm V_n\right]\right\|_{\mathrm{op}} = O_\bbP\left(\frac{d^{3/2}\sqrt{\log d}}{t_n\sqrt{n}}\right).$$

        A similar proof works for $\left\|\frac{1}{n}\bm V_n^\top\bm X_n - \bbE\left[\frac{1}{n}\bm V_n^\top\bm X_n\right]\right\|_{\mathrm{op}}$.
    \end{proof}
\end{lemma}

\begin{lemma}\label{lambda t u}
Assume that $n\gg d \gg (\log n)^3$. Then
$$\left\|\frac{1}{n}\bm V_n^\top \bm E_n\right\| = O_\bbP\left( \sqrt{\frac{d}{n}}\sqrt{{\sqrt{\frac{d^3\log d}{nt_n^2}}}+\text{err}_{n,d}^2} \right),$$
where
$$\text{err}_{n,d} = n^{\gamma+\frac{\sigma_x^4(\alpha-1)}{2(\sigma_x^2+\sigma_\eta^2)^2}-\alpha}+\frac{1}{t_n}+\frac{t_n^3}{\sqrt{d}}+\frac{\sqrt{d}}{\sqrt{t_nn^\alpha}}.$$
    \begin{proof}
        Note that $\bm V_n^\top \bm E_n | \bm V_n \sim N(\bm 0, \sigma_\varepsilon^2\bm V_n^\top \bm V_n)$. So, letting $\bm \zeta \sim N(\bm 0, \sigma_\varepsilon^2\bm I_d) \indep \bm V_n$,
        $$\bm V_n^\top \bm E_n \stackrel{d}{=} (\bm V_n^\top \bm V_n)^{1/2}\bm \zeta$$
        $$\implies \left\| \bm V_n^\top \bm E_n \right\| \stackrel{d}{=} \left\|(\bm V_n^\top \bm V_n)^{1/2}\bm \zeta \right\|.$$
        We shall bound the term on the right hand side. We have
        $$\left\|(\bm V_n^\top \bm V_n)^{1/2}\bm \zeta\right\| \leq \left\|(\bm V_n^\top \bm V_n)^{1/2}\right\|_{\mathrm{op}}\|\bm \zeta\|.$$
        By Lemma~\ref{norm x_i lemma}, we have $\|\bm \zeta \|=O_\bbP(\sqrt{d})$. On the other hand,
        \begin{align*}
            \left\|(\bm V_n^\top \bm V_n)^{1/2}\right\|_{\mathrm{op}} &= \sqrt{\left\|(\bm V_n^\top \bm V_n)\right\|_{\mathrm{op}}}\\
            &\leq \sqrt{\left\|\bm V_n^\top\bm V_n - \bbE(\bm V_n^\top \bm V_n)\right\|_{\mathrm{op}} + \left\| \bbE(\bm V_n^\top\bm V_n) \right\|_{\mathrm{op}}}\\
            &= \sqrt{O_\bbP\left(\frac{d^{3/2}\sqrt{n}\sqrt{\log d}}{t_n}+n\,\text{err}_{n,d}^2\right)},
        \end{align*}
        where it follows from Lemma~\ref{R diagonal},
        $$\text{err}_{n,d} = n^{\gamma+\frac{\alpha-1}{2(1+\sigma^2)^2}-\alpha}+\frac{1}{t_n}+\frac{t_n^3}{\sqrt{d}}+\frac{\sqrt{d}}{\sqrt{t_nn^\alpha}}.$$
        Thus, we have
        \begin{align*}
            \left\|\frac{1}{n}\bm V_n^\top \bm E_n\right\| = O_\bbP\left( \sqrt{\frac{d}{n}}\sqrt{{\sqrt{\frac{d^3\log d}{nt_n^2}}}+\text{err}_{n,d}^2} \right).
        \end{align*}
    \end{proof}
\end{lemma}

\section{Lemmas required for proof of Theorem~\ref{non-attn higher MSE}}

The main motivation for Theorem~\ref{non-attn higher MSE} is that when the average Erd\H{o}s--R\'enyi degree of a node is much higher than its geometric degree, i.e. $\alpha
<\gamma$, the averaged message from the neighbours in each layer of the non-attention-based network, i.e. the $M_1^{(\ell)}\frac{1}{|N_i|}\sum_{j\in N_i} \bm \xi_j^{(\ell)}$ term is insignificant. Consequently, the output $\bm \xi_i^{(L)}$ of the $L$-layer network is very close to a function only based on $\bm z_i$. Building on Lemma~\ref{mean of lipschitz of y is O(tn)} and Lemma~\ref{vanishing msg lemma}, that is exactly what Lemma~\ref{GNN output close to non-GNN} proves.

\begin{lemma}\label{mean of lipschitz of y is O(tn)}
Let $g:\bbR^d \mapsto \bbR^d$ be an odd, $C$-Lipschitz function where $C>0$ is a global constant. Then, we have uniformly in $i,j\in [n]$,
$$\left\|\bbE\left(g(\bm z_j)| \bm x_i^\top\bm x_j\geq \sigma_x^2 t_n\sqrt{d}\right)\right\| = O(t_n),$$
$$\bbE\left[\|g(\bm z_j)\|^2| \bm x_i^\top\bm x_j\geq \sigma_x^2t_n\sqrt{d}\right] = O\left(d+t_n^2\right).$$
    \begin{proof}
        Define the random unit vector $\bm u = \frac{\bm x_i}{\|\bm x_i\|}$. Then, we can write
        $$\bm z_j = S\bm u + R, \qquad S:=\frac{\bm x_i^\top \bm x_j}{\|\bm x_i\|}\Bigr|\bm x_i\sim N(0,\sigma_x^2),\quad R:=W+\bm \eta_j \indep S,$$
        where $W|\bm x_i \sim N(\bm 0, \bm \sigma_x^2(I_d - \bm u\bm u^\top)), \bm \eta_j\sim N(\bm 0, \sigma_\eta^2\bm I_d)$. Also let the random variable $S_t$ have the conditional distribution given $\bm x_i$,
        $$S_t\Bigr|\bm x_i \stackrel{d}{=} S\Bigr|\bm x_i,\left\{ S\geq \frac{\sigma_x^2t_n\sqrt{d}}{\|\bm x_i\|}\right\}.$$
        Then,
        $$\bm z_j| \{\bm x_i^\top\bm x_j\geq \sigma_x^2 t_n\sqrt{d}\}\,\stackrel{d}{=}\,S_t\bm u + R.$$
        Let $S' \sim N(0,\sigma_x^2)$, independently of $\bm x_i,R$. Then, by oddness of $g$,
        $$\bbE[g(S'u+R)]=0.$$
        Then, we get
        \begin{align*}
            \left\|\bbE\left(g(\bm z_j)| \bm x_i^\top \bm x_j\geq \sigma_x^2 t_n\sqrt{d}\right)\right\| &= \left\|\bbE\left(g(S_t\bm u+R)\right)-\bbE\left(g(S'\bm u+R)\right)\right\| \\
            &\leq \bbE\left\|g(S_t\bm u+R)-g(S'\bm u+R)\right\| \leq C\bbE\left|S_t-S'\right|,
        \end{align*}
        where the last step follows by using the $C$-Lipschitz property of $g$. Now, note that
        \begin{align*}
            \bbE\left|S_t-S'\right|\leq & \bbE|S_t| + \bbE|S'|
            \leq  \bbE\left|\frac{\bm x_i^\top\bm x_j}{\|\bm x_i\|}\Bigr| \frac{\bm x_i^\top\bm x_j}{\|\bm x_i\|}\geq \frac{\sigma_x^2 t_n\sqrt{d}}{\|\bm x_i\|}\right| + \sigma_x^2\sqrt{\frac{2}{\pi}}\\
            \leq & \bbE\left[\frac{\varphi\left(t_n\sqrt{d}/\|\bm x_i\|\right)}{\Phi^c\left(t_n\sqrt{d}/\|\bm x_i\|\right)}\right] + \sqrt{\frac{2}{\pi}}
            \leq  \bbE\left[\frac{t_n\sqrt{d}}{\|\bm x_i\|} + \frac{\|\bm x_i\|}{t_n\sqrt{d}}\right] + \sqrt{\frac{2}{\pi}}
            \leq  ct_n,
        \end{align*}
        for a uniform constant $c>0$, where the penultimate step follows by Mill's ratio bound and the final step follows using Lemma~\ref{q norm of chi sq}, Lemma~\ref{q norm of inverted chi sq} and that $t_n \longrightarrow \infty$ as $n\longrightarrow \infty$. Thus, we conclude that uniformly in $i,j\in[n]$,
        $$\left\|\bbE\left(g(\bm z_j)| \bm x_i^\top\bm x_j\geq \sigma_x^2 t_n\sqrt{d}\right)\right\| = O(t_n).$$

        \medskip
        
        For the second part, first note that
        \begin{align*}
            \bbE\left[\|\bm z_j\|^2| \bm x_i^\top \bm x_j\geq \sigma_x^2 t_n\sqrt{d}\right] = \bbE\left[S_t^2\right] + \bbE\|R\|^2 &= \operatorname{Var}(S_t) + \left(\bbE(S_t)\right)^2 + \bbE\|W\|^2 + \bbE\|\bm \eta_j\|^2\\
            &\leq \sigma_x^2 + c^2 t_n^2 + (d-1)\sigma_x^2 + \sigma_\eta^2 d.
        \end{align*}
        In the above calculation, cross terms vanish by orthogonality of $\bm u$ and $W$, and independence of $\bm \eta_j$ from all other terms. We also use that $\operatorname{Var}(S_t)\leq \operatorname{Var}(S) = \sigma_x^2$ and as proved above, $\bbE|S_t|\leq ct_n$. Then, recalling that $g$ is $C$-Lipschitz,
        $$\bbE\left[\|g(\bm z_j)\|^2| \bm x_i^\top\bm x_j\geq \sigma_x^2t_n\sqrt{d}\right] \leq C^2\bbE\left[\|\bm z_j\|^2| \bm x_i^\top\bm x_j\geq \sigma_x^2 t_n\sqrt{d}\right] = O\left(d+t_n^2\right).$$
    \end{proof}
\end{lemma}

\begin{lemma}\label{vanishing msg lemma}
    Assume that $\log n \ll d \ll n^\alpha$ and $\alpha<\gamma$. Let $g:\bbR^d \mapsto \bbR^d$ be any odd, $C$-Lipschitz function such that $g(\bm 0)=\bm 0$. Then, uniformly in $i\in [n]$,
    $$\bbE\left\|\frac{1}{|N_i|}\sum_{j\in N_i} g(\bm y_j)\right\|^2=O\left(n^{2(\alpha-\gamma)}+{d}n^{-\gamma}\right) .$$
    \begin{proof}
        Recall that $\bm z_j \sim N(\bm 0, (\sigma_x^2+\sigma_\eta^2)\bm I_d)$ and $g$ is an odd, $C$-Lipschitz function. So $\{g(\bm z_j):j\in N_i^{(ER)}\}$ are i.i.d. centered sub-gaussians with $\psi_2$ norm $O\left(C\sqrt{\sigma_x^2+\sigma_\eta^2}\right)$. So, leveraging independence of Erd\H{o}s--R\'enyi edges, and using Holder's inequality with Lemma~\ref{er ngbhd size} and Lemma~\ref{pure geo ngbhd size},
        \begin{equation}\label{er part}
            \bbE\left\|\frac{1}{|N_i|}\sum_{j\in N_i^{(ER)}}g(\bm z_j)\right\|^2 = O\left(\frac{C^2(\sigma_x^2+\sigma_\eta^2)d}{n^\gamma}\right).
        \end{equation}
        On the other hand,
        \begin{align*}
            &\frac{1}{|N_i|}\sum_{j\in N_i\setminus N_i^{(ER)}} g(\bm z_j) = \frac{|N_i\setminus N_i^{(ER)}|}{|N_i|}\cdot \frac{1}{|N_i\setminus N_i^{(ER)}|}\sum_{j\in N_i\setminus N_i^{(ER)}} g(\bm z_j),
        \end{align*}
        so conditionally given $N_i, N_i^{ER}$,
        \begin{align*}
            &\bbE\left[\left\|\frac{1}{|N_i|}\sum_{j\in N_i\setminus N_i^{(ER)}} g(\bm z_j)\right\|^2 \Bigr| N_i,N_i^{(ER)}\right] = \left(\frac{|N_i\setminus N_i^{(ER)}|}{|N_i|}\right)^2\bbE\left[\left\|\frac{1}{|N_i\setminus N_i^{(ER)}|}\sum_{j\in N_i\setminus N_i^{(ER)}} g(\bm z_j)\right\|^2 \Bigr| N_i,N_i^{(ER)}\right]\\
            =& \left(\frac{|N_i\setminus N_i^{(ER)}|}{|N_i|}\right)^2\left\|\bbE\left(g(\bm z_j)\Bigr|\bm x_i^\top\bm x_j\geq \sigma_x^2 t_n\sqrt{d}\right)\right\|^2 \\  &\qquad\qquad\quad+\left(\frac{|N_i\setminus N_i^{(ER)}|}{|N_i|}\right)^2\bbE\left(\left\|\frac{1}{|N_i\setminus N_i^{(ER)}|}\sum_{j\in N_i\setminus N_i^{(ER)}} g(\bm z_j)-\bbE\left(g(\bm z_j)\Bigr|\bm x_i^\top\bm x_j\geq \sigma_x^2t_n\sqrt{d}\right)\right\|^2\Bigr|N_i,N_i^{(ER)}\right).
        \end{align*}
        We have using Lemma~\ref{mean of lipschitz of y is O(tn)}, $\left\|\bbE\left(g(\bm y_j)\Bigr|\bm x_i^T\bm x_j\geq t_n\sqrt{d}\right)\right\|^2 = O(t_n^2)$, and
        \begin{align*}
            &\bbE\left(\left\|\frac{1}{|N_i\setminus N_i^{(ER)}|}\sum_{j\in N_i\setminus N_i^{(ER)}} g(\bm z_j)-\bbE\left(g(\bm z_j)\Bigr|\bm x_i^\top\bm x_j\geq \sigma_x^2 t_n\sqrt{d}\right)\right\|^2\Bigr|N_i,N_i^{(ER)}\right)\\
            =& \frac{1}{|N_i\setminus N_i^{(ER)}|}\,\bbE\left(\left\|g(\bm z_j) - \bbE\left(g(\bm z_j)\Bigr|\bm x_i^\top \bm x_j\geq \sigma_x^2 t_n\sqrt{d}\right)\right\|^2\Bigr| \bm x_i^\top\bm x_j\geq \sigma_x^2 t_n\sqrt{d}\right)\\
            \leq& \frac{2}{|N_i\setminus N_i^{(ER)}|}\left(\bbE\left[\|g(\bm z_j)\|^2\Bigr| \bm x_i^\top\bm x_j \geq \sigma_x^2 t_n\sqrt{d}\right] + \left\|\bbE\left[g(\bm z_j)\Bigr| \bm x_i^\top \bm x_j\geq \sigma_x^2 t_n\sqrt{d}\right]\right\|^2\right)=\frac{2}{|N_i\setminus N_i^{(ER)}|}O\left(d+t_n^2\right).
        \end{align*}
        Plugging it back in we get 
        \begin{align*}
            \bbE\left[\left\|\frac{1}{|N_i|}\sum_{j\in N_i\setminus N_i^{(ER)}} g(\bm z_j)\right\|^2 \Bigr| N_i,N_i^{(ER)}\right] &= \left(\frac{|N_i\setminus N_i^{(ER)}|}{|N_i|}\right)^2\left(O(t_n^2) + \frac{2}{|N_i\setminus N_i^{(ER)}|}O\left(d+t_n^2\right)\right).
        \end{align*}
        Using Lemma~\ref{er ngbhd size}, Lemma~\ref{pure geo ngbhd size} and Lemma~\ref{inverse wij count moments}, we get
        \begin{equation}\label{geo part}
            \bbE\left[\left\|\frac{1}{|N_i|}\sum_{j\in N_i\setminus N_i^{(ER)}} g(\bm z_j)\right\|^2 \right] = O\left(\frac{n^{2\alpha} }{t_n^2n^{2\gamma}}\cdot t_n^2 + \frac{n^\alpha}{t_nn^{2\gamma}} (d+t_n^2)\right) = O\left(n^{2(\alpha-\gamma)} + \frac{d}{t_n}n^{\alpha-2\gamma}\right) = O(n^{2(\alpha-\gamma)}),
        \end{equation}
        because by assumption $d\gg t_n^2$ and $n^\alpha\gg d$. Finally, combining Eq.~\ref{er part} and Eq.~\ref{geo part} with a Cauchy-Schwarz inequality,
        \begin{align*}
            \bbE\left\|\frac{1}{|N_i|}\sum_{j\in N_i} g(\bm z_j)\right\|^2 \leq 2\left(\bbE\left\|\frac{1}{|N_i|}\sum_{j\in N_i^{(ER)}}g(\bm z_j)\right\|^2+\bbE\left\|\frac{1}{|N_i|}\sum_{j\in N_i\setminus N_i^{(ER)}} g(\bm z_j)\right\|^2 \right) = O\left(dn^{-\gamma}+n^{2(\alpha-\gamma)} \right).
        \end{align*}
    \end{proof}
\end{lemma}

\begin{lemma}\label{GNN output close to non-GNN}
Assume that $(\psi_\ell)$ is a sequence of odd, $C_\psi$-Lipschitz $\bbR^d\mapsto \bbR^d$ activation functions and $(M_0^{(\ell)}, M_1^{(\ell)})$ is a sequence of $\bbR^{d\times d}$ matrices with their operator norm bounded above by $C_M$. Let a $L$-layer network be defined as
$$\bm \xi_i^{(\ell+1)} = \psi_{\ell}\left(M_0^{(\ell)}\bm \xi_i^{(\ell)}+M_1^{(\ell)}\frac{1}{|N_i|}\sum_{j\in N_i}\bm \xi_j^{(\ell)}\right), \qquad \ell=0,1,\ldots,L-1,$$
and $\bm \xi_i^{(0)} = \bm z_i$ for all $i\in [n]$. Also let a self-only recursion be defined as $\phi_0(\bm u) = \bm u$, and $\phi_{\ell+1}(\bm u) = \psi_{\ell}\left( M_0^{(\ell)}\phi_\ell(\bm u)\right)$. If $\log n \ll d \ll n^\alpha$ and $\alpha<\gamma$, then
$$\frac{1}{n}\sum_{i=1}^n \left\|\bm\xi_i^{(L)} - \phi_L(\bm z_i)\right\|^2 = O_\bbP(dn^{-\gamma}+n^{2(\alpha-\gamma)}).$$
    \begin{proof}
        Define the notation $\bm e_i^{(\ell)}:= \bm\xi_i^{(\ell)} - \phi_\ell(\bm z_i)$. The activation functions $\psi_\ell$ are assumed to be odd and $C_\psi$-Lipschitz, so $\phi_\ell$'s are also odd and $(C_\psi C_M)^\ell$-Lipschitz. In particular,
        $$\bbE[\phi_\ell(\bm z_j)]=0.$$
        \medskip

        Now, observe that
        \begin{align*}
            \bm e_i^{(\ell+1)} = \bm \xi_i^{(\ell+1)} - \phi_{\ell+1}(\bm z_i) &= \psi_{\ell}\left(M_0^{(\ell)}\bm \xi_i^{(\ell)}+M_1^{(\ell)}\frac{1}{|N_i|}\sum_{j\in N_i}\bm \xi_j^{(\ell)}\right) - \psi_\ell\left(M_0^\ell \phi_\ell(\bm z_i)\right),
        \end{align*}
        so using $C_\psi$-Lipschitz property of $\eta_\ell$ and triangle inequality,
        $$\|\bm e_i^{(\ell+1)}\| \leq C_\psi\left(\left\|M_0^{(\ell)}\left(\bm \xi_i^{(\ell)}-\phi_\ell(\bm z_i)\right)\right\|+\left\|M_1^{(\ell)}\frac{1}{|N_i|}\sum_{j\in N_i}\bm \xi_j^{(\ell)}\right\|\right)\leq C_\psi C_M\left(\|\bm e_i^{(\ell)}\| + \left\|\frac{1}{|N_i|}\sum_{j\in N_i}\bm \xi_j^{(\ell)}\right\| \right).$$
        Then, decompose
        $$\frac{1}{|N_i|}\sum_{j\in N_i}\bm \xi_j^{(\ell)} = \frac{1}{|N_i|}\sum_{j\in N_i}\phi_\ell(\bm z_j) + \frac{1}{|N_i|}\sum_{j\in N_i}\bm e_j^{(\ell)}.$$
        Plugging it back in the earlier inequality, we get by squaring and taking expectation, and applying Cauchy-Schwarz inequality once,
        \begin{equation}\label{recursion pre}
            \bbE\|\bm e_i^{(\ell+1)}\|^2 \leq K_\ell\left(\bbE\left\|\bm e_i^{(\ell)}\right\|^2 + \bbE\left\|\frac{1}{|N_i|}\sum_{j\in N_i}\phi_\ell(\bm z_j)\right\|^2 + \bbE\left\|\frac{1}{|N_i|}\sum_{j\in N_i}\bm e_j^{(\ell)}\right\|^2\right),
        \end{equation}
        where $K_\ell$ is a constant depending only on $C_\psi,C_M, \ell$. From Lemma~\ref{vanishing msg lemma}, we get
        $$\bbE\left\|\frac{1}{|N_i|}\sum_{j\in N_i}\phi_\ell(\bm y_j)\right\|^2 = O\left(dn^{-\gamma}+n^{2(\alpha-\gamma)}\right).$$
        For the third summand, using the fact that $\{\bm e_j^{(\ell)}: j\in N_i\}$ are all identically distributed,
        \begin{align*}
            \bbE\left\|\frac{1}{|N_i|}\sum_{j\in N_i}\bm e_j^{(\ell)}\right\|^2 &= \bbE\left(\bbE\left[\left\|\frac{1}{|N_i|}\sum_{j\in N_i}\bm e_j^{(\ell)}\right\|^2\Bigr| N_i\right]\right)\leq \bbE\left(\frac{1}{|N_i|^2}\sum_{j\in N_i}\bbE\|\bm e_j^{(\ell)}\|^2\right).
        \end{align*}
        Let $J_i\sim \operatorname{Unif}(N_i)$ be a uniformly random neighbour of $i$, then
        $$\bbE\left(\frac{1}{|N_i|^2}\sum_{j\in N_i}\bbE\|\bm e_j^{(\ell)}\|^2\right) = \bbE\left(\frac{1}{|N_i|}\bbE\left[\|\bm e_{J_i}^{(\ell)}\|^2\Bigr| N_i\right]\right) = \bbE\left(\frac{1}{|N_i|}\bbE\|\bm e_{J_i}^{(\ell)}\|^2\right).$$
        Observe that given the size $|N_i|=k$, $N_i$ is uniform over all $k$-size subsets of $[n]\setminus\{i\}$, and that $J_i$ is uniform over $N_i$ by assumption, so given $|N_i|=k$ for any $k$, $J_i$ is uniform over $[n]\setminus \{i\}$. Thus by exchangeability of node labels,
        $$\bbE\left[\|\bm e_{J_i}^{(\ell)}\|^2\Bigr| |N_i|=k \right] = \bbE\|\bm e_1^{(\ell)}\|^2 = \bbE\|\bm e_i^{(\ell)}\|^2 \qquad \forall k=1,\ldots,n-1.$$
        Plugging this back in, gives for the third summand,
        $$\bbE\left\|\frac{1}{|N_i|}\sum_{j\in N_i}\bm e_j^{(\ell)}\right\|^2\leq \bbE\left(\frac{1}{|N_i|}\right)\bbE\|\bm e_i^{(\ell)}\|^2 = O(n^{-\gamma})\bbE\|\bm e_i^{(\ell)}\|^2,$$
        where we use Lemma~\ref{er ngbhd size}, Lemma~\ref{pure geo ngbhd size} and Lemma~\ref{inverse wij count moments} for the order of $\bbE\left(\frac{1}{|N_i|}\right)$. Then from Eq.~\ref{recursion pre}, we obtain the recursion
        $$\bbE\|\bm e_i^{(\ell+1)}\|^2 \leq (K_\ell+O(n^{-\gamma}))\bbE\|\bm e_i^{(\ell)}\|^2 + O\left(dn^{-\gamma} + n^{2(\alpha-\gamma)}\right).$$
        Since $n^{-\gamma}\longrightarrow 0$ as $n\longrightarrow\infty$, iterating the above $L$ times yields,
        $$\bbE\|\bm e_i^{(L)}\|^2 = O\left(dn^{-\gamma}+n^{2(\alpha-\gamma)}\right), \qquad \text{uniformly for all $i\in [n]$},$$
        since the starting condition gives $\bm e_i^{(0)}=\bm 0$ for all $i\in [n]$. Then, 
        Markov's inequality yields that
        $$\frac{1}{n}\sum_{i=1}^n \|\bm e_i^{(L)}\|^2 = O_\bbP(dn^{-\gamma}+n^{2(\alpha-\gamma)}).$$
    \end{proof}
\end{lemma}

\section{Matrix preliminaries required for proof of Theorem~\ref{main theorem beta lambda} and Theorem~\ref{main theorem beta y}}

In this section, we prove the concentration of the covariance matrices $\frac{\bm X_n^\top \bm X_n}{n}$, $\frac{\bm N_n^\top \bm N_n}{n}$ and $\frac{\bm X_n^\top N_n}{n}$. The proofs in this section follow simply from concentration theorems of covariance matrix of i.i.d. random vectors.

\begin{proposition}[Theorem 4.7.1 and Exercise 4.7.3 in \cite{Vershynin_2018}]\label{covariance matrix LLN}
Let $\bm X_1,\ldots, \bm X_n$ are i.i.d. sub-gaussian random vector in $\bbR^d$. More precisely, assume that there exists $K\geq 1$ such that 
$$\|\langle \bm X_1, \bm x \rangle \|_{\psi_2} \leq K\|\langle \bm X_1, \bm x \rangle \|_{2} \quad \text{for any } \bm x \in \bbR^d. $$
Then, denoting $\bm \Sigma = \bbE \bm X_1 \bm X_1^\top$ and $\hat{\bm\Sigma}_n = \frac{1}{n}\sum_{i=1}^n \bm X_i \bm X_i^\top$, one has
    $$\bbE\|\hat{\bm \Sigma}_n - \bm \Sigma\|_{op} \leq CK^2 \left(\sqrt{\frac{d}{n}}+\frac{d}{n} \right)\|\bm \Sigma\|_{op},$$
    where $C>0$ is an absolute constant. Further for any $u>0$, one has
    $$\|\hat{\bm \Sigma}_n - \bm \Sigma\|_{op} \leq CK^2\left(\sqrt{\frac{d+u}{n}}+\frac{d+u}{n} \right)\|\bm \Sigma\|_{op},$$
    with probability at least $1-2e^{-u}$.
\end{proposition}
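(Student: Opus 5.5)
This is the matrix-Bernstein-type sample covariance bound of Vershynin, used here as a black box. My plan is to reduce it to the standard two-step argument: a net discretization followed by a scalar Bernstein inequality.

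\emph{Normalization.} First I would normalize. Replacing $\bm X_i$ by $\bm \Sigma^{-1/2}\bm X_i$ (on the range of $\bm\Sigma$) produces isotropic vectors. The hypothesis $\|\langle \bm X_1,\bm x\rangle\|_{\psi_2}\le K\|\langle \bm X_1,\bm x\rangle\|_{2}$ is exactly what makes the whitened vectors $K$-sub-gaussian uniformly in direction. Moreover, $\|\hat{\bm\Sigma}_n-\bm\Sigma\|_{op}\le \|\bm\Sigma\|_{op}\,\|\bm\Sigma^{-1/2}\hat{\bm\Sigma}_n\bm\Sigma^{-1/2}-\bm I\|_{op}$. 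So it suffices to prove the isotropic case with $\|\bm\Sigma\|_{op}=1$.

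\emph{Fixed direction.} Fix a unit vector $\bm x$. Then $\bm x^\top(\hat{\bm\Sigma}_n-\bm I)\bm x=\frac1n\sum_i(\langle \bm X_i,\bm x\rangle^2-1)$ is an average of i.i.d. centered sub-exponential variables with $\psi_1$ norm $O(K^2)$. Bernstein's inequality gives, for $\varepsilon>0$,
$$\bbP\left(\Bigl|\bm x^\top(\hat{\bm\Sigma}_n-\bm I)\bm x\Bigr|\ge \varepsilon/2\right)\le 2\exp\left(-cn\min\{\varepsilon^2/K^4,\varepsilon/K^2\}\right).$$

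\emph{Net and union bound.} Next I take a $1/4$-net $\mathcal N$ of the unit sphere with $|\mathcal N|\le 9^d$. The standard quadratic-form net lemma gives $\|\bm A\|_{op}\le 2\max_{\bm x\in\mathcal N}|\bm x^\top \bm A\bm x|$ for symmetric $\bm A$. I then choose $\varepsilon = CK^2\bigl(\sqrt{(d+u)/n}+(d+u)/n\bigr)$. With this choice, $n\min\{\varepsilon^2/K^4,\varepsilon/K^2\}\ge C^2(d+u)$, so the union bound over $9^d$ points yields failure probability at most $2\cdot 9^d e^{-cC^2(d+u)}\le 2e^{-u}$ for $C$ large. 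This establishes the high-probability statement.

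\emph{Expectation bound.} The expectation bound follows by integrating the tail. Writing $\delta(u)=\sqrt{(d+u)/n}+(d+u)/n$, I use
$$\bbE\|\hat{\bm\Sigma}_n-\bm I\|_{op}=\int_0^\infty \bbP(\|\hat{\bm\Sigma}_n-\bm I\|_{op}>s)\,ds$$
and substitute $s=CK^2\delta(u)$. Since $\int_0^\infty e^{-u}\,d\delta(u)=O(\delta(0))$, this gives the $\sqrt{d/n}+d/n$ rate.

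\emph{Main obstacle.} The only delicate point is the whitening step when $\bm\Sigma$ is singular. There I would restrict to the range of $\bm\Sigma$, which is valid since $\bm X_1\in\operatorname{range}(\bm\Sigma)$ almost surely. I also need to track that $K\ge1$ absorbs the constants in the $\psi_1$ norm of $\langle \bm X_i,\bm x\rangle^2-1$. In the paper the vectors are Gaussian, for which $K$ is an absolute constant, so for the intended application this step is harmless.
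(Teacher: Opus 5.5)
Your proof is correct and follows the standard argument in the cited source. The paper gives no proof of its own; Vershynin's proof of Theorem 4.7.1 whitens to the isotropic case and applies the two-sided sub-gaussian matrix bound, which is proved by the same $1/4$-net, Bernstein and union-bound steps you use, and the tail and expectation forms are related by integration as you describe.

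One small slip does not affect the result. The linear branch gives only $n\varepsilon/K^2 = C\bigl(\sqrt{n(d+u)}+(d+u)\bigr)\ge C(d+u)$ when $C\ge 1$, not $C^2(d+u)$. This still yields $2\cdot 9^d e^{-cC(d+u)}\le 2e^{-u}$ once $C$ is large.
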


\begin{lemma}\label{Conv of XtX and EtE}
    Assume that $d\ll n$. Then, we have
    $$\left\|\frac{\bm X_n^\top \bm X_n}{n} - \sigma_x^2\bm I_d \right\|_{op} ,  \left\|\frac{\bm N_n^\top \bm N_n}{n} - \sigma_\eta^2\bm I_d \right\|_{op}, \left\|\frac{\bm X_n^\top \bm N_n}{n} \right\|_{op}, \left\|\frac{\bm N_n^\top \bm X_n}{n}  \right\|_{op} = O_\bbP\left(\sqrt{\frac{d}{n}}\right).$$
    \begin{proof}
        Write for each $i\in[n]$,
        $$\bm u_i := \begin{pmatrix}
            \frac{\bm x_i}{\sigma_x} \\[1.5mm] \frac{\bm \eta_i}{\sigma_\eta}
        \end{pmatrix}\in \bbR^{2d} \sim N(\bm 0, \bm I_{2d}).$$
        We stack the $\bm u_i$'s in a $n\times 2d$ matrix $\bm U_n$ such that
        $$\bm U_n = \begin{bmatrix}
            \frac{\bm X_n}{\sigma_x} & \frac{\bm N_n}{\sigma_\eta}
        \end{bmatrix}.$$
        We have
        $$\frac{\bm U_n^\top \bm U_n}{n} = \begin{bmatrix}
            \frac{\bm X_n^\top \bm X_n}{\sigma_x^2n} & \frac{\bm X_n^\top \bm N_n}{\sigma_x\sigma_\eta n} \\[2mm] \frac{\bm N_n^\top \bm X_n}{\sigma_x\sigma_\eta n} & \frac{\bm N_n^\top\bm N_n}{\sigma_\eta^2 n}
        \end{bmatrix} = \frac{1}{n}\sum_{i=1}^n \bm u_i\bm u_i^\top.$$
        Since, $\bm u_i \stackrel{iid}{\sim} N(\bm 0, \bm I_d)$, it fits into the framework of Proposition~\ref{covariance matrix LLN} with $K=1$ and $\bm \Sigma = \bm I_{2d}$. Choose $u=2d$ and we have
        \begin{align*}
            \bbP\left(\left\|\frac{\bm U_n^\top \bm U_n}{n} - \bm I_{2d}\right\|_{op} \geq C(\sqrt{4d/n}+4d/n)\|\bm I_{2d}\|_{op} \right) \leq 2e^{-2d}.
        \end{align*}
        Since $2e^{-2d} \longrightarrow 0$ as $d\longrightarrow \infty$, we have that
        $$\left\|\frac{\bm U_n^\top \bm U_n}{n} - \bm I_{2d} \right\|_{op} = O_\bbP\left(\sqrt{\frac{d}{n}}\right).$$
        Since the operator norm of any block is bounded by the operator norm of the whole matrix, we get that
        $$\left\|\frac{\bm X_n^\top \bm X_n}{n} - \sigma_x^2\bm I_d \right\|_{op} ,  \left\|\frac{\bm N_n^\top \bm N_n}{n} - \sigma_\eta^2\bm I_d \right\|_{op}, \left\|\frac{\bm X_n^\top \bm 
        N_n}{ n} \right\|_{op}, \left\|\frac{\bm N_n^\top \bm X_n}{ n}  \right\|_{op} = O_\bbP\left(\sqrt{\frac{d}{n}}\right).$$
        
    \end{proof}
\end{lemma}

    \begin{lemma}\label{conv of xtu}
        Assume that $d\ll n$. Then,
        $$\left\| \frac{\bm X_n^\top \bm E_n}{n} \right\| = O_\bbP\left(\sqrt{\frac{d}{n}}\right), \quad \text{and} \quad \left\| \frac{\bm N_n^\top \bm E_n}{n} \right\| = O_\bbP\left(\sqrt{\frac{d}{n}}\right).$$
        \begin{proof}
            Note that $\frac{\bm X_n^\top \bm E_n}{n} \in \bbR^d$, and its $j$-th coordinate is given by
            $$\left(\frac{\bm X_n^\top \bm E_n}{n}\right)_j = \frac{1}{n}\sum_{i=1}^n (\bm X_n)_{ij}(\bm E_n)_i.$$
            So, using underlying independence, we obtain that
            $$\left\{\left(\frac{\bm X_n^\top \bm E_n}{n}\right)_{j}: j\in [d]\right\} \Bigr| \bm E_n \stackrel{i.i.d.}{\sim} N\left(0, \frac{\sigma_x^2\|\bm E_n\|^2}{n^2} \right).$$
            Hence $\frac{\bm X_n^\top \bm E_n}{n} \stackrel{d}{=} \frac{\sigma_x\|\bm E_n\|}{n}\Tilde{g}_d,$ where $\Tilde{g}_d \sim N(\bm 0, \bm I_d) \indep \bm G_n$. Using Theorem 3.1.1 in \cite{Vershynin_2018}, there exists a global constant $c>0$ such that, 
            $$\bbP\left(\|\Tilde{g}_d\|\geq \sqrt{d}+t\right) \leq 2e^{-ct^2}, \qquad \bbP\left(\|\bm G_n\|\geq \sigma_\varepsilon(\sqrt{n}+t) \right) \leq 2e^{-ct^2}.$$
            Taking $t = \Theta(1)$ we get that, with probability at least $1-4e^{-ct}$,
            $$\left\|\frac{\bm X_n^\top \bm E_n}{n}\right\| \leq \frac{\sigma_x\sigma_\varepsilon(\sqrt{d}+t)(\sqrt{n}+t)}{n} = \sigma_x\sigma_\varepsilon\left( 
\sqrt{\frac{d}{n}} + \frac{t}{\sqrt{n}} + \frac{\sqrt{d}t}{n}+ \frac{t^2}{n} \right).$$
Recalling that $d\ll n$, we get that
$$\left\|\frac{\bm X_n^\top \bm E_n}{n}\right\| = O_\bbP\left( \sqrt{\frac{d}{n}} \right).$$
A similar argument works for $\left\|\frac{\bm N_n^T \bm E_n}{n}\right\|$.
        \end{proof}
    \end{lemma}

\section{Lemmas required for proof of Theorem~\ref{lambda close to x}}\label{lambda close to x appendix}

This section of the appendix is devoted to showing that the attention-based proxies $\bm \lambda_i$ are close to the latent node covariates $\bm x_i$ in $L_r$ norm $(r \geq 1)$. We proceed by showing that the conditional expectation of the node covariate of a geometric neighbour of $\bm x_i$ is closely aligned with $\bm x_i$ up to some scaling factor, and that the corresponding conditional expectation for an Erd\H{o}s--R\'enyi neighbour is $\bm 0$. We then show that the attention-filtered average message concentrates around its expectation and that the signal from the geometric neighbours are recoverable over the noise from the Erd\H{o}s--R\'enyi neighbours under certain regime of the parameters $n,d,\alpha,\gamma,\sigma_x^2,\sigma_\eta^2.$

First note that 
\begin{align*}
    \begin{bmatrix}
        \bm z_j^{(1)} \\[1.5mm] \bm x_i^\top\bm x_j \\[1.5mm] \bm z_i^{(2)\top}\bm z_j^{(2)}
    \end{bmatrix} \,\Bigr|\, \bm x_i, \bm \eta_i \sim N\left(\bm 0, \begin{bmatrix}
        (\sigma_x^2+\sigma_\eta^2)\bm I_{d/2} & \sigma_x^2\bm x_i^{(1)} & \bm 0 \\[1.5mm] \sigma_x^2\bm x_i^{(1)\top} & \sigma_x^2\|\bm x_i\|^2 & \sigma_x^2\bm x_i^{(2)\top} \bm z_i^{(2)} \\[1.5mm] \bm 0 & \sigma_x^2\bm x_i^{(2)\top} \bm z_i^{(2)} & (\sigma_x^2+\sigma_\eta^2)\|\bm z_i^{(2)}\|^2
    \end{bmatrix} \right).
\end{align*}
Then by conditional distribution property of multivariate normals, we obtain that
$$\bm z_j^{(1)} \,|\,  \bm x_i, \bm \eta_i, \bm x_i^\top \bm x_j, \bm z_i^{(2)\top}\bm z_j^{(2)} \sim N(\bar{\bm\mu}, \bar{\bm\Sigma}),$$
where
\begin{align*}
    \bar{\bm\mu} &= \begin{bmatrix}
        \bm x_i^{(1)} & \bm 0
    \end{bmatrix} \begin{bmatrix}
        \sigma_x^2\|\bm x_i\|^2 & \sigma_x^2\bm x_i^{(2)\top}\bm z_i^{(2)}\\[1.5mm] \sigma_x^2\bm x_i^{(2)\top} \bm z_i^{(2)} & (\sigma_x^2+\sigma_\eta^2)\|\bm z_i^{(2)}\|^2
    \end{bmatrix}^{-1} \begin{bmatrix}
        \bm x_i^\top \bm x_j \\[1.5mm] \bm z_i^{(2)\top} \bm z_j^{(2)}
    \end{bmatrix}\\
    &= \begin{bmatrix}
        \bm x_i^{(1)} & \bm 0
    \end{bmatrix} \frac{1}{\sigma_x^2(\sigma_x^2+\sigma_\eta^2)\|\bm x_i\|^2\|\bm z_i^{(2)}\|^2 - \sigma_x^4(\bm x_i^{(2)\top} \bm z_i^{(2)})^2}\begin{bmatrix}
        (\sigma_x+\sigma_\eta^2)\|\bm z_i^{(2)}\|^2 & -\sigma_x^2\bm x_i^{(2)\top}\bm z_i^{(2)}\\[1.5mm] -\sigma_x^2\bm x_i^{(2)\top} \bm z_i^{(2)} & \sigma_x^2\|\bm x_i\|^2
    \end{bmatrix} \begin{bmatrix}
        \bm x_i^\top \bm x_j \\[1.5mm] \bm y_i^{(2)\top} \bm y_j^{(2)}
    \end{bmatrix}\\
    &= \frac{(\sigma_x^2+\sigma_\eta^2)\|\bm z_i^{(2)}\|^2 \bm x_i^\top \bm x_j - \sigma_x^2(\bm x_i^{(2)\top} \bm z_i^{(2)})(\bm z_i^{(2)\top}\bm z_j^{(2)})}{\sigma_x^2(\sigma_x^2+\sigma_\eta^2)\|\bm x_i\|^2\|\bm z_i^{(2)}\|^2 - \sigma_x^4(\bm x_i^{(2)\top} \bm z_i^{(2)})^2}\sigma_x^2\bm x_i^{(1)} ,
\end{align*}
and
\begin{align*}
    \bar{\bm \Sigma} &= (\sigma_x^2+\sigma_\eta^2)\bm I_{d/2} - \begin{bmatrix}
        \sigma_x^2\bm x_i^{(1)} & \bm 0
    \end{bmatrix} \begin{bmatrix}
        \sigma_x^2\|\bm x_i\|^2 & \sigma_x^2\bm x_i^{(2)\top}\bm z_i^{(2)}\\[1.5mm] \sigma_x^2\bm x_i^{(2)\top} \bm z_i^{(2)} & (\sigma_x^2+\sigma_\eta^2)\|\bm z_i^{(2)}\|^2
    \end{bmatrix}^{-1} \begin{bmatrix}
        \sigma_x^2\bm x_i^{(1)\top} \\[1.5mm] \bm 0^T
    \end{bmatrix} \preceq \bm I_{d/2},
\end{align*}
since the product of the three matrices above is positive semidefinite.

\begin{lemma}\label{denom lemma}
Define
$$X_d = \left\{(\sigma_x^2+\sigma_\eta^2)\|\bm z_i^{(2)}\|^2 \bm x_i^\top \bm x_j - \sigma_x^2(\bm x_i^{(2)\top} \bm z_i^{(2)})(\bm z_i^{(2)\top}\bm z_j^{(2)})\right\}\sigma_x^2\bm x_i^{(1)}.$$
Define the sigma field $\mathcal{F}_{ij} = \sigma\{\bm x_i, \bm x_i^\top \bm x_j, \bm z_i^{(2)\top}\bm z_j^{(2)}\}$. We have for every $i\neq j$ that
{$$\left\| \left\|\bbE(\bm z_j^{(1)} | \mathcal{F}_{ij}) - \frac{4\bbE[X_d|\mathcal{F}_{ij}]}{\sigma_x^4(\sigma_x^4+4\sigma_x^2\sigma_\eta^2+2\sigma_\eta^4)d^2} \right\|\right\|_{L_q}=O(d^{-1/2}),$$}
for any fixed $q\geq 1$.
\begin{proof}
We shall use the notation
$$Y_d = \sigma_x^2(\sigma_x^2+\sigma_\eta^2)\|\bm x_i\|^2\|\bm z_i^{(2)}\|^2 - \sigma_x^4(\bm x_i^{(2)\top} \bm z_i^{(2)})^2.$$
    We start by writing
    $$\Delta_d = \bbE[\bm z_j^{(1)}|\mathcal{F}_{ij}] - \frac{4\bbE[X_d|\mathcal{F}_{ij}]}{\sigma_x^4(\sigma_x^4+4\sigma_x^2\sigma_\eta^2+2\sigma_\eta^4)d^2} = \bbE\left[\frac{X_d}{Y_d}\Bigr|\mathcal{F}_{ij}\right] - \frac{4\bbE[X_d|\mathcal{F}_{ij}]}{\sigma_x^4(\sigma_x^4+4\sigma_x^2\sigma_\eta^2+2\sigma_\eta^4)d^2} = \bbE\left[\frac{X_dR_d'}{d^2}\Bigr| \mathcal{F}_{ij}\right],$$
    where $R_d' = \frac{d^2}{Y_d} - \frac{4}{\sigma_x^4(\sigma_x^4+4\sigma_x^2\sigma_\eta^2+2\sigma_\eta^4)}$. It then suffices to prove that
    $$\left\| \left\|\frac{X_d}{d^2} \right\|\right\|_{L_{2q}} = O(1), \quad \|R_d'\|_{L_
    {2q}} = O(d^{-1/2}),$$
    since Holder's inequality will then yield the required statement.

    To that end, recall that
    $$X_d = \left[(\sigma_x^2+\sigma_\eta^2)\|\bm z_i^{(2)}\|^2 \bm x_i^\top \bm x_j - \sigma_x^2(\bm x_i^{(2)\top}\bm z_i^{(2)})(\bm z_i^{(2)\top} \bm z_j^{(2)})\right]\sigma_x^2\bm x_i^{(1)}.$$
    Then with \cref{q norms bounds} along with Holder's inequality, we get that $\|\|X_d\|\|_{L_{2q}} = O(d^2)$, or equivalently $\| \|{X_d}/{d^2}\| \|_{L_{2q}} = O(1)$. 

    \medskip

    Now observe
    \begin{align*}
        Y_d - \frac{d^2}{4}\sigma_x^4(\sigma_x^4+4\sigma_x^2\sigma_\eta^2+2\sigma_\eta^4) = \underbrace{\sigma_x^2(\sigma_x^2+\sigma_\eta^2)\left[\|\bm x_i\|^2\|\bm z_i^{(2)}\|^2 - \sigma_x^2(\sigma_x^2+\sigma_\eta^2)\frac{d^2}{2}\right]}_{=:T_1} - \underbrace{\sigma_x^4\left[(\bm x_i^{(2)\top}\bm z_i^{(2)})^2 - \sigma_x^4\frac{d^2}{4}\right]}_{=:T_2}.
    \end{align*}
    Write
    $$\delta_1 = \|\bm x_i\|^2 - \sigma_x^2d, \qquad \delta_2 = \|\bm x_i^{(2)}\|^2 - \sigma_x^2\frac{d}{2}, \qquad \delta_3 = \|\bm z_i^{(2)}\|^2 - (\sigma_x^2+\sigma_\eta^2)\frac{d}{2}.$$
    Then, by \cref{q norms bounds}, we know that $\|\delta_1\|_{L_q}, \|\delta_2\|_{L_q}, \|\delta_3\|_{L_q} = O(\sqrt{d})$ for any $q\geq 1$. Thus, we get that
    {\small\begin{align*}
        &T_1 = \sigma_x^2(\sigma_x^2+\sigma_\eta^2)\left[(\sigma_x^2 d+\delta_1)\left((\sigma_x^2+\sigma_\eta^2)\frac{d}{2} + \delta_3 \right) - \sigma_x^2 (\sigma_x^2+\sigma_\eta^2)\frac{d^2}{2} \right] = \sigma_x^2(\sigma_x^2+\sigma_\eta^2)\left[\delta_3(\sigma_x^2d+\delta_1) + \frac{\sigma_x^2+\sigma_\eta^2}{2}d\delta_1\right] \\
        \implies & \|T_1\|_{L_q} = O(d\sqrt{d}),
    \end{align*}}
    where the last step follows using Minkowski's inequality. Similarly,
    {\small\begin{align*}
        &T_2 = \sigma_x^4\left[(\bm x_i^{(2)\top} \bm x_i^{(2)} + \bm x_i^{(2)\top} \bm \eta_i^{(2)})^2 - \sigma_x^4\frac{d^2}{4}\right] = \sigma_x^4\left[\|\bm x_i^{(2)}\|^4 + 2\|\bm x_i^{(2)}\|^2(\bm x_i^{(2)\top} \bm \eta_i^{(2)}) + (\bm x_i^{(2)\top} \bm \eta_i^{(2)})^2 - \sigma_x^4\frac{d^2}{4}\right]\\
        \implies & T_2 = \sigma_x^4\left[\left(\sigma_x^2\frac{d}{2} + \delta_2 \right)^2 + 2\|\bm x_i^{(2)}\|^2(\bm x_i^{(2)\top} \bm \eta_i^{(2)}) + (\bm x_i^{(2)\top} \bm \eta_i^{(2)})^2 - \sigma_x^4\frac{d^2}{4}\right]\\
        \implies & T_2 = \sigma_x^6d\delta_2+\sigma_x^4\delta_2^2 + 2\|\bm x_i^{(2)}\|^2(\bm x_i^{(2)\top} \bm \eta_i^{(2)}) + (\bm x_i^{(2)\top} \bm \eta_i^{(2)})^2\implies \|T_2\|_{L_q} = O(d\sqrt{d}).
    \end{align*}}
    Then combining the two with Minkowski's inequality, we get that
    $$\left\| Y_d - \frac{d^2}{4}\sigma_x^4(\sigma_x^4+4\sigma_x^2\sigma_\eta^2+2\sigma_\eta^4) \right\|_{L_q} = O(d\sqrt{d}).$$

    On the other hand, also note that
    {\small\begin{align*}
        Y_d &= \sigma_x^2(\sigma_x^2+\sigma_\eta^2)\|\bm x_i\|^2\|\bm z_i^{(2)}\|^2 - \sigma_x^4(\bm x_i^{(2)\top} \bm z_i^{(2)})^2\\
        &\geq \sigma_x^2(\sigma_x^2+\sigma_\eta^2)\|\bm x_i^{(1)}\|^2\|\bm z_i^{(2)}\|^2 + \sigma_x^2(\sigma_x^2+\sigma_\eta^2)\|\bm x_i^{(2)}\|^2\|\bm z_i^{(2)}\|^2 - \sigma_x^4\|\bm x_i^{(2)}\|^2\|\bm z_i^{(2)}\|^2 \geq \sigma_x^2(\sigma_x^2+\sigma_\eta^2)\|\bm x_i^{(1)}\|^2\|\bm z_i^{(2)}\|^2,
    \end{align*}}
    where the first inequality above follows by expanding $\|\bm x_i\|^2=\|\bm x_i^{(1)}\|^2+\|\bm x_i^{(2)}\|^2$ and using Cauchy-Schwarz inequality on the $(\bm x_i^{(2)\top} \bm z_i^{(2)})^2$ term. It then follows that
    $$\frac{1}{Y_d} \leq \frac{1}{\sigma_x^2(\sigma_x^2+\sigma_\eta^2)\|\bm x_i^{(1)}\|^2\|\bm z_i^{(2)}\|^2} \implies \left\|\frac{1}{Y_d}\right\|_{L_q} = O\left(\left\| \frac{1}{\|\bm x_i^{(1)}\|^2} \right\|_{L_{2q}} \left\| \frac{1}{\|\bm y_i^{(2)}\|^2} \right\|_{L_{2q}} \right) = O\left(\frac{1}{d^2}\right).$$
    The last two steps above follows using Holder's inequality and Proposition~\ref{q norm of inverted chi sq}. We now combine the norm orders obtained above to obtain for any $q\geq 1$,
    \begin{align*}
        \|R_d'\|_{L_q} = \left\| \frac{d^2}{Y_d} - \frac{4}{\sigma_x^4(\sigma_x^4+4\sigma_x^2\sigma_\eta^2+2\sigma_\eta^4)}  \right\|_{L_q} = \left\| \frac{d^2\sigma_x^4(\sigma_x^4+4\sigma_x^2\sigma_\eta^2+2\sigma_\eta^4) - 4Y_d}{Y_d\cdot\sigma_x^4(\sigma_x^4+4\sigma_x^2\sigma_\eta^2+2\sigma_\eta^4)}\right\|_{L_q} = O\left(\frac{d\sqrt{d}}{d^2}\right) = O(d^{-1/2}).
    \end{align*}
    The last follows, again, by Holder's inequality, and this concludes the proof.
\end{proof}
\end{lemma}

\begin{lemma}
    Define $\mathcal{F}_{ij}' = \sigma\{\bm x_i, \bm x_i^\top\bm x_j, \bm z_i^{(1)\top}\bm z_j^{(1)}\}$. Then, one has for every $i\neq j$,
    $$\left\|\left\| \bbE(\bm z_j^{(2)}|\mathcal{F}_{ij}') - \frac{4\bbE\left[\left\{(\sigma_x^2+\sigma_\eta^2)\|\bm z_i^{(1)}\|^2 \bm x_i^\top \bm x_j - \sigma_x^2(\bm x_i^{(1)\top} \bm z_i^{(1)})(\bm z_i^{(1)\top}\bm z_j^{(1)})\right\}\sigma_x^2\bm x_i^{(2)}\Bigr| \mathcal{F}_{ij}'\right]}{\sigma_x^4(\sigma_x^4+4\sigma_x^2\sigma_\eta^2+2\sigma_\eta^4)d^2} \right\|\right\|_{L_q} = O(d^{-1/2}),$$
    for any fixed $q\geq 1$.
    \begin{proof}
        The proof is similar to that of Lemma~\ref{denom lemma}.
    \end{proof}
\end{lemma}

\begin{lemma}\label{cond mean lemma}
For any $q\geq 1$, we have for any $i\neq j$,
$$\left\|\left\| \bbE[\bm z_j^{(1)}|\mathcal{F}_{ij}] - \frac{4}{d\sigma_x^2(\sigma_x^4+4\sigma_x^2\sigma_\eta^2+2\sigma_\eta^4)}\left[ \frac{(\sigma_x^2+\sigma_\eta^2)^2}{2}\bm x_i^\top\bm x_j - \frac{\sigma_x^4}{2} \bm z_i^{(2)\top}\bm z_j^{(2)}\right] \bm  x_i^{(1)} \right\|\right\|_{L_q} = O(d^{-1/2}).$$

\begin{proof}
Recall that $\bbE[\bm z_j^{(1)}|\mathcal{F}_{ij}] = \bbE\left[\frac{X_d}{Y_d}\Bigr|\mathcal{F}_{ij}\right]$, where $X_d,Y_d,\mathcal{F}_{ij}$ are notations as defined in Lemma~\ref{denom lemma}. Now write,
\begin{align*}
    &X_d - d\left[ \frac{(\sigma_x^2+\sigma_\eta^2)^2}{2}\bm x_i^\top \bm x_j - \frac{\sigma_x^4}{2}\bm z_i^{(2)\top}\bm z_j^{(2)} \right]\sigma_x^2\bm x_i^{(1)}\\
    =& \left\{ (\sigma_x^2+\sigma_\eta^2)\|\bm z_i^{(2)}\|^2 - \frac{(\sigma_x^2+\sigma_\eta^2)^2d}{2} \right\}\bm x_i^\top \bm x_j \cdot \sigma_x^2\bm x_i^{(1)} - \left\{ \sigma_x^2\bm x_i^{(2)\top}\bm z_i^{(2)} - \frac{\sigma_x^4 d}{2} \right\}\bm z_i^{(2)\top}\bm z_j^{(2)}\cdot \sigma_x^2 \bm x_i^{(1)}\\
    =& (\sigma_x^2+\sigma_\eta^2)\underbrace{\left\{ \|\bm z_i^{(2)}\|^2 - \frac{(\sigma_x+\sigma_\eta^2)d}{2} \right\}}_{V_1}\underbrace{\bm x_i^\top\bm x_j}_{V_2} \cdot \underbrace{\sigma_x^2\bm x_i^{(1)}}_{V_3} - \left\{ \underbrace{\sigma_x^2\left[\bm x_i^{(2)\top}\bm x_i^{(2)} - \frac{\sigma_x^2d}{2}\right]}_{V_4} + \underbrace{\sigma_x^2\bm x_i^{(2)\top}\bm \eta_i^{(2)}}_{V_5} \right\}\underbrace{\bm z_i^{(2)\top}\bm z_j^{(2)}}_{V_6}\cdot \underbrace{\sigma_x^2\bm x_i^{(1)}}_{V_3}.
\end{align*}
We know from Lemma~\ref{q norms bounds} that the $L_q$ norms of all the $V_1,V_2,\ldots,V_6$ terms are $O(\sqrt{d})$. Hence, with Holder's and Minkowski's inequality, we get that
\begin{equation}\label{equation in cond mean lemma}
    \left\|\left\|X_d - d\left[ \frac{(\sigma_x^2+\sigma_\eta^2)^2}{2}\bm x_i^\top \bm x_j - \frac{\sigma_x^4}{2}\bm z_i^{(2)\top}\bm z_j^{(2)} \right]\sigma_x^2\bm x_i^{(1)} \right\|\right\|_{L_q} = O(d\sqrt{d}).
\end{equation}

Thus, we get that
{\small\begin{align*}
    &\bbE[\bm z_j^{(1)}|\mathcal{F}_{ij}] - \frac{4}{d\sigma_x^4(\sigma_x^4+4\sigma_x^2\sigma_\eta^2+2\sigma_\eta^4)}\left[ \frac{(\sigma_x^2+\sigma_\eta^2)^2}{2}\bm x_i^\top\bm x_j - \frac{\sigma_x^4}{2} \bm z_i^{(2)\top}\bm z_j^{(2)}\right]  \sigma_x^2 \bm x_i^{(1)}\\
    =& \left[ \bbE(\bm z_j^{(1)} | \mathcal{F}_{ij}) - \frac{4\bbE[X_d|\mathcal{F}_{ij}]}{\sigma_x^4(\sigma_x^4+4\sigma_x^2\sigma_\eta^2+2\sigma_\eta^4)d^2}  \right] + \frac{4}{\sigma_x^4(\sigma_x^4+4\sigma_x^2\sigma_\eta^2+2\sigma_\eta^4)d^2}\left[ \bbE[X_d|\mathcal{F}_{ij}] - d\left(\frac{(\sigma_x^2+\sigma_\eta^2)^2}{2}\bm x_i^\top\bm x_j - \frac{\sigma_x^4}{2}\bm z_i^{(2)\top}\bm z_j^{(2)}\right)\sigma_x^2\bm x_i^{(1)} \right],
\end{align*}}
and by Lemma~\ref{denom lemma}, 
$$\left\| \left\|\bbE(\bm z_j^{(1)} | \mathcal{F}_{ij}) - \frac{4\bbE[X_d|\mathcal{F}_{ij}]}{\sigma_x^4(\sigma_x^4+4\sigma_x^2\sigma_\eta^2+2\sigma_\eta^4)d^2} \right\|\right\|_{L_q} = O(d^{-1/2}).$$

Combining with Eq.~\ref{equation in cond mean lemma}, we get that
$$\left\|\left\| \bbE[\bm z_j^{(1)}|\mathcal{F}_{ij}] - \frac{4}{d\sigma_x^2(\sigma_x^4+4\sigma_x^2\sigma_\eta^2+2\sigma_\eta^4)}\left[ \frac{(\sigma_x^2+\sigma_\eta^2)^2}{2}\bm x_i^\top\bm x_j - \frac{\sigma_x^4}{2} \bm z_i^{(2)\top}\bm z_j^{(2)}\right] \bm  x_i^{(1)} \right\|\right\|_{L_q} = O(d^{-1/2}).$$
\end{proof}
\end{lemma}

\begin{lemma}
    For any fixed $q\geq 1$, we have for any $i\neq j$,
$$\left\|\left\| \bbE[\bm z_j^{(2)}|\mathcal{F}_{ij'}] - \frac{4}{d\sigma_x^2(\sigma_x^4+4\sigma_x^2\sigma_\eta^2+2\sigma_\eta^4)}\left[ \frac{(\sigma_x^2+\sigma_\eta^2)^2}{2}\bm x_i^\top\bm x_j - \frac{\sigma_x^4}{2} \bm z_i^{(1)\top}\bm z_j^{(1)}\right] \bm  x_i^{(2)}\right\|\right\|_{L_q} = O(d^{-1/2}).$$
\begin{proof}
    The proof is similar to that of Lemma~\ref{cond mean lemma}.
\end{proof}
\end{lemma}

\begin{lemma}\label{E u v lemma}
Assume $n\gg d\gg (\log n)^3$. Let $Z_1$ and $Z_2$ are standard normals with correlation $\frac{\sigma_x^2}{\sqrt{2}(\sigma_x^2+\sigma_\eta^2)}$. Then
$$\left|\frac{\bbE\left[\bm x_i^\top\bm x_j \indic\left\{\bm x_i^\top \bm x_j \geq \sigma_x^2 t_n\sqrt{d}, \bm z_i^{(2)\top}\bm z_j^{(2)} \geq \frac{\sigma_x^2 t_n\sqrt{d}}{2}\right\}\right]}{\sigma_x^2\sqrt{d}\bbE\left[Z_1\indic\left\{Z_1 \geq t_n, Z_2 \geq \frac{\sigma_x^2 t_n}{\sqrt{2}(\sigma_x^2+\sigma_\eta^2)}\right\}\right]} - 1\right| = O\left( \frac{t_n^3}{\sqrt{d}}\right).$$    
\begin{proof}
    Let us write 
    $$U = \frac{\bm x_i^\top \bm x_j}{\sigma_x^2\sqrt{d}}, \qquad V = \frac{\bm z_i^{(2)\top}\bm z_j^{(2)}}{\sqrt{\frac{d}{2}}(\sigma_x^2+\sigma_\eta^2)}, \qquad \rho = \frac{\sigma_x^2}{\sqrt{2}(\sigma_x^2+\sigma_\eta^2)}.$$
    Then, by writing
    $$f(s) = \bbP\left(U\geq s, V \geq \rho t_n\right), \qquad g(s) = \bbP\left(Z_1\geq s, Z_2 \geq \rho t_n\right),$$
    we obtain
    $$\frac{1}{\sigma_x^2\sqrt{d}}\bbE\left[\bm x_i^\top\bm x_j \indic\left\{\bm x_i^\top \bm x_j \geq \sigma_x^2t_n\sqrt{d}, \bm z_i^{(2)\top}\bm z_j^{(2)} \geq \frac{\sigma_x^2 t_n\sqrt{d}}{2}\right\}\right] = \bbE\left[U\indic\{U\geq t_n, V\geq \rho t_n\}\right] = t_nf(t_n)+\int_{t_n}^\infty f(s)ds,$$
    $$\bbE\left[Z_1\indic\left\{Z_1 \geq t_n, Z_2 \geq \rho t_n\right\}\right] = t_ng(t_n) + \int_{t_n}^\infty g(s)ds.$$
    \begin{itemize}
        \item[1.] When $s\in[t_n,2t_n)$.\\[3mm]
        By Lemma~\ref{P u v lemma}, in this range of $s$,
        $$|f(s) - g(s)| \leq C_1\left(g(s)\frac{s^3}{\sqrt{d}}\right) = C_1'\left(g(s)\frac{t_n^3}{\sqrt{d}}\right).$$
        Also,
        $$\int_{t_n}^{2t_n} g(s)ds = \int_{t_n}^{2t_n} \bbP(Z_1\geq s, Z_2\geq \rho t_n)ds \leq \int_{t_n}^{\infty} \bbP(Z_1\geq s)ds =\Phi^c(t_n) \leq \frac{\varphi(t_n)}{t_n}.$$
        Thus, plugging it in,
        \begin{equation}\label{range 1 of s}
            \left|t_nf(t_n) - t_ng(t_n) + \int_{t_n}^{2t_n}(f(s)-g(s))ds\right| \leq C_1'\frac{t_n^3}{\sqrt{d}}\left(t_ng(t_n)+\int_{t_n}^{2t_n}g(s)ds\right)\leq C_1'\frac{t_n^3}{\sqrt{d}}\left(t_ng(t_n)+\frac{\varphi(t_n)}{t_n}\right).
        \end{equation}

        \item [2.] When $s\in[2t_n,d^{1/6})$.\\[3mm]
        Again, by Lemma~\ref{P u v lemma}, in this range of $s$,
        $$\frac{f(s)}{g(s)} \leq 1 + C_2\frac{s^3}{\sqrt{d}}\leq C_2'.$$
        Hence,
        \begin{align*}
            \int_{2t_n}^{d^{1/6}} f(s)ds \leq C_2'\int_{2t_n}^{d^{1/6}} g(s)ds &= C_2'\int_{2t_n}^{d^{1/6}} \bbP\left(Z_1\geq s, Z_2 \geq \rho t_n\right) ds\\
            &\leq C_2'\int_{2t_n}^{\infty} \bbP\left(Z_1\geq s\right) ds = C_2'\Phi^c(2t_n).
        \end{align*}
        Then using Mill's ratio inequality, we get
        \begin{equation}\label{range 2 of s}
            \int_{2t_n}^{d^{1/6}} f(s)ds \leq C_2'\Phi^c(2t_n) \leq C_2'\frac{\varphi(2t_n)}{2t_n}.
        \end{equation}

        \item[3.] When $s\in[d^{1/6},\sqrt{d})$.\\[3mm]
        The moment generating function of $U$ at $t$ is given by
        $$\bbE(e^{tU}) = (1-t^2/d)^{-d/2}, \qquad |t|<\sqrt{d}.$$
        Then, Chernoff's bound gives
        $$\bbP(U\geq s)\leq \sup_{t\in(0,\sqrt{d})}\exp(-ts)\left( 1-t^2/d\right)^{-d/2} = \sup_{t\in(0,\sqrt{d})}\exp\left[ 
        -\left(ts + \frac{d}{2}\log(1-t^2/d)\right) \right].$$
        Recall the elementary inequality 
        $$\log(1-y) \geq -y-2y^2, \qquad 0\leq y\leq \frac{1}{2}.$$
        We choose $t = \frac{s}{2}$ so that $t^2/d\leq 1/2$, then applying the above inequality gives
        \begin{align*}
            \bbP(U\geq s)&\leq  \exp\left[-\frac{s^2}{2}-\frac{d}{2}\left( -\frac{t^2}{d} -\frac{2t^4}{d^2}\right)\right] = \exp\left[-\frac{3s^2}{8} + \frac{s^4}{16d}\right]= \exp\left[ -\frac{5s^2}{16} \right],
        \end{align*}
        Thus, we get
        \begin{equation}\label{range 3 of s}
            \begin{split}
                \int_{d^{1/6}}^{\sqrt{d}} f(s)ds\leq \int_{d^{1/6}}^{\sqrt{d}} \bbP(U\geq s)ds\leq \int_{d^{1/6}}^{\infty} e^{-5s^2/16}ds\leq \frac{8}{5d^{1/6}}e^{-5d^{1/3}/16}, 
            \end{split}
        \end{equation}
        where the last step follows using Mill's ratio inequality.
        
        \item[4.] When $s\in[\sqrt{d},\infty)$.\\[3mm]
        Recall that 
        $$\bbP(U\geq s)\leq \sup_{t\in(0,\sqrt{d})}\exp\left[ 
        -\left(ts + \frac{d}{2}\log(1-t^2/d)\right) \right].$$
        We choose $t = \frac{\sqrt{d}}{2}$ to get
        $$\bbP(U\geq s)\leq \exp\left[ 
        -\frac{s\sqrt{d}}{2} - \frac{d}{2}\log(3/4)\right].$$
        Also writing $s = \sqrt{d}+u$, for $u\geq 0$,
        $$\bbP(U\geq s)\leq \exp\left[-\frac{1}{2}(\sqrt{d}+u)\sqrt{d} - \frac{d}{2}\log(3/4)\right] = \exp\left[-c_1d-\frac{1}{2}\sqrt{d}u\right],$$
        where $c_1 = \frac{1}{2}(1+\log(3/4))>0$. Then, we get by integrating
        \begin{equation}\label{range 4 of s}
            \int_{\sqrt{d}}^\infty \bbP(U\geq s) ds \leq \int_0^\infty \exp\left[-c_1d-\frac{1}{2}\sqrt{d}u\right] du = e^{-c_1d}\cdot \frac{2}{\sqrt{d}}.
        \end{equation}
    \end{itemize}

    Now we shall combine the bounds for the different regimes. 
    Plugging in Eq.~\ref{range 1 of s}, \ref{range 2 of s}, \ref{range 3 of s} and \ref{range 4 of s}, we get
    \begin{align*}
        &\left|\bbE\left[U\indic\{U\geq t_n, V\geq \rho t_n\}\right] - \bbE\left[Z_1\indic\{Z_1\geq t_n, Z_2\geq \rho t_n\}\right] \right|\\
        \leq & \left|t_nf(t_n) - t_ng(t_n) + \int_{t_n}^{2t_n}(f(s)-g(s))ds\right|+\int_{2t_n}^\infty f(s)ds+\int_{2t_n}^\infty g(s)ds\\
        \leq & C_1'\frac{t_n^3}{\sqrt{d}}\left(t_ng(t_n)+\frac{\varphi(t_n)}{t_n}\right) + \left[C_2'\frac{\varphi(2t_n)}{2t_n}+\frac{8}{5d^{1/6}}e^{-5d^{1/3}/16}+\frac{2}{\sqrt{d}}e^{-c_1d}\right]+\int_{2t_n}^\infty \varphi(s)ds\\
        \leq&  C_1'\frac{t_n^3}{\sqrt{d}}\left(t_ng(t_n)+\frac{\varphi(t_n)}{t_n}\right) + \left[C_2''\frac{\varphi(2t_n)}{2t_n}+\frac{8}{5d^{1/6}}e^{-5d^{1/3}/16}+\frac{2}{\sqrt{d}}e^{-c_1d}\right].
    \end{align*}
    The last step is simply another application of the Mill's ratio inequality, with $C_2''=C_2'+1$, a new constant. Recall from Lemma~\ref{gaussian tail} that,
    $$g(t_n) = \bbP\left(Z_1\geq t_n, Z_2\geq \rho t_n\right) = \frac{\varphi(t_n)}{2t_n}\left(1+o(1)\right).$$
    Hence,
    \begin{align*}
        &\left|\bbE\left[U\indic\{U\geq t_n, V\geq \rho t_n\}\right] - \bbE\left[Z_1\indic\{Z_1\geq t_n, Z_2\geq \rho t_n\}\right] \right|\\
        \leq & \frac{\varphi(t_n)}{t_n}\left[C_1'' \frac{t_n^4}{\sqrt{d}}+ C_2''\frac{\varphi(2t_n)}{2\varphi(t_n)} + \frac{8t_n}{5d^{1/6}}\cdot\frac{e^{-5d^{1/3}/16}}{\varphi(t_n)} + \frac{2t_n}{\sqrt{d}}\cdot \frac{e^{-c_1d}}{\varphi(t_n)}\right] \leq \frac{\varphi(t_n)}{t_n}\cdot C_1'''\frac{t_n^4}{\sqrt{d}},
    \end{align*}
    for a constant $C_1'''>0$ because as we shall see, the $\frac{t_n^4}{\sqrt{d}}$ is the leading order term among the four summands inside the square bracket above. In fact,
    $$\frac{\varphi(2t_n)}{\varphi(t_n)}\cdot \frac{\sqrt{d}}{t_n^4} = \frac{1}{t_n^4}\cdot \exp\left(\frac{1}{2}\log d - \frac{3}{2}t_n^2\right)=o(1)\cdot \exp\left(\frac{1}{2}\log d - 3(1-\alpha)\log n\right) = o(1),$$
    $$\frac{t_n}{d^{1/6}}\cdot\frac{e^{-5d^{1/3}/16}}{\varphi(t_n)} \cdot \frac{\sqrt{d}}{t_n^4} = \frac{1}{t_n^3d^{1/6}} \cdot \exp\left( -\frac{5d^{1/3}}{16}+(1-\alpha)\log n+\frac{1}{2}\log d \right) = o(1),$$
    $$\frac{t_n}{\sqrt{d}}\cdot \frac{e^{-c_1d}}{\varphi(t_n)}\cdot\frac{\sqrt{d}}{t_n^4} = \frac{1}{t_n^3\sqrt{d}}\cdot\exp\left(-c_1d+(1-\alpha)\log n+\frac{1}{2}\log d\right) = o(1).$$
    Also, recall from Lemma~\ref{P u v lemma} that,
    $$\bbE[Z_1\indic\{Z_1\geq t_n, Z_2 \geq \rho t_n\}] = \frac{\varphi(t_n)}{2}(1+o(1)).$$
    So, it follows that
    \begin{align*}
        \left| \frac{\bbE[U\indic\{U\geq t_n, V \geq \rho t_n\}]}{\bbE[Z_1\indic\{Z_1\geq t_n, Z_2 \geq \rho t_n\}]} - 1 \right| &= \frac{\left|\bbE[U\indic\{U\geq t_n, V \geq \rho t_n\}] - \bbE[Z_1\indic\{Z_1\geq t_n, Z_2 \geq \rho t_n\}] \right|}{\bbE[Z_1\indic\{Z_1\geq t_n, Z_2 \geq \rho t_n\}]}\\
        &\leq \frac{C_1'''\frac{t_n^3\varphi(t_n)}{\sqrt{d}}}{\frac{\varphi(t_n)}{2}(1+o(1))} = O\left( \frac{t_n^3}{\sqrt{d}} \right).
    \end{align*}
    This concludes the proof.
\end{proof}
\end{lemma}

\begin{lemma}\label{the difficult errors}
    Assume that $n\gg d \gg (\log n)^3$. Then we have for any $i\neq j$,
    \begin{itemize}
        \item $\frac{\bbE[\bm x_i^\top \bm x_j|\bm x_i^\top \bm x_j\geq \sigma_x^2 t_n\sqrt{d}, \bm z_i^{(2)\top}\bm z_j^{(2)}\geq \sigma_x^2 t_n\sqrt{d}/2]}{\sigma_x^2 t_n\sqrt{d}}-1=O\left(\frac{1}{t_n^2}+\frac{t_n^3}{\sqrt{d}}\right)$
        \item $\frac{\bbE[\bm z_i^{(2)\top} \bm z_j^{(2)}|\bm x_i^\top \bm x_j\geq \sigma_x^2 t_n\sqrt{d}, \bm z_i^{(2)\top}\bm z_j^{(2)}\geq \sigma_x^2 t_n\sqrt{d}/2]}{\sigma_x^2 t_n\sqrt{d}/2}-1=O\left(\frac{1}{t_n}+\frac{t_n^3}{\sqrt{d}}\right)$
    \end{itemize}
    \begin{proof}
        Define standard normal random variables $Z_1,Z_2$ with correlation $\rho = \frac{\sigma_x^2}{\sqrt{2}(\sigma_x^2+\sigma_\eta^2)}$. Then, write $k=\frac{\rho}{\sqrt{1-\rho^2}}$ and we get from Lemma~\ref{gaussian tail},
        \begin{align*}
            \bbE[Z_1|{Z_1\geq t_n, Z_2 \geq \rho t_n}] &= \frac{\bbE[Z_1\indic\{Z_1\geq t_n, Z_2 \geq \rho t_n\}]}{\bbP({Z_1\geq t_n, Z_2 \geq \rho t_n})}\\
            &= \frac{\frac{1}{2}+\frac{k}{\sqrt{2\pi}t_n}+O\left(\frac{1}{t_n^3}\right)}{\frac{1}{2t_n}+\frac{k}{\sqrt{2\pi}t_n^2}-\frac{1}{2t_n^3}+O\left(\frac{1}{t_n^4}\right)}\\
            &= 2t_n\left(\frac{1}{2}+\frac{k}{\sqrt{2\pi}t_n}+O\left(\frac{1}{t_n^3}\right)\right)\left(1+\frac{2k}{\sqrt{2\pi}t_n}-\frac{1}{t_n^2}+O\left(\frac{1}{t_n^3}\right)\right)^{-1}\\
            &= 2t_n\left(\frac{1}{2}+\frac{k}{\sqrt{2\pi}t_n}+O\left(\frac{1}{t_n^3}\right)\right)\left(1 - \frac{2k}{\sqrt{2\pi}t_n} + \left(\left(\frac{2k}{\sqrt{2\pi}}\right)+1\right)^2\frac{1}{t_n^2} + O\left(\frac{1}{t_n^3}\right) \right)\\
            &= 2t_n\left[\frac{1}{2} + \left(-\frac{k}{\sqrt{2\pi}}+\frac{k}{\sqrt{2\pi}}\right)\frac{1}{t_n} + \left(\frac{1}{2}\left(\frac{2k}{\sqrt{2\pi}}\right)^2+\frac{1}{2} - \frac{2k^2}{2\pi}\right)\frac{1}{t_n^2}+O\left(\frac{1}{t_n^3}\right)\right]\\
            &= t_n + \frac{1}{t_n} + O\left(\frac{1}{t_n^2}\right).
        \end{align*}
        Then write 
        $$U = \frac{\bm x_i^\top \bm x_j}{\sigma_x^2\sqrt{d}}, \qquad V = \frac{\bm z_i^{(2)\top}\bm z_j^{(2)}}{\sqrt{\frac{d}{2}}(\sigma_x^2+\sigma_\eta^2)},$$
        so that
        \begin{align*}
            \frac{\bbE[\bm x_i^\top \bm x_j|\bm x_i^\top \bm x_j\geq \sigma_x^2 t_n\sqrt{d}, \bm z_i^{(2)\top}\bm z_j^{(2)}\geq \sigma_x^2 t_n\sqrt{d}/2]}{\sigma_x^2t_n\sqrt{d}}-1 &= \frac{\bbE[U|U\geq t_n, V\geq \rho t_n]}{t_n} - 1\\
            &= \frac{\bbE[U\indic\{U\geq t_n, V\geq \rho t_n\}]}{t_n\bbP(U\geq t_n, V\geq \rho t_n)} - 1\\
            (\text{from Lemmas~\ref{P u v lemma},~\ref{E u v lemma}})\,\,&= \frac{\bbE[Z_1\indic\{Z_1\geq t_n, Z_2\geq \rho t_n\}]\left(1+O\left(\frac{t_n^3}{\sqrt{d}}\right)\right)}{t_n\bbP(Z_1\geq t_n, Z_2\geq \rho t_n)\left(1+O\left(\frac{t_n^3}{\sqrt{d}}\right)\right)} - 1\\
            &= \left(1+\frac{1}{t_n^2}+O\left( \frac{1}{t_n^3}\right)\right)\left(1+O\left(\frac{t_n^3}{\sqrt{d}}\right)\right)-1\\
            &= \frac{1}{t_n^2} + O\left(\frac{t_n^3}{\sqrt{d}}\right).
        \end{align*}
        Similar proofs can be written for the other case by modifying the required lemmas accordingly.
    \end{proof}
\end{lemma}

\begin{lemma}\label{dot prod cond exp appx 1}
We have for all $i,j\in [n]$,
$$\bbE\left[\bm z_j^{(1)} \Bigr| \bm x_i, w_{ij,2}=1, j\in N_i^{(ER)}\right] = 0, \qquad \bbE\left[\bm z_j^{(2)} \Bigr| \bm x_i, w_{ij,1}=1, j\in N_i^{(ER)}\right] = 0.$$
    \begin{proof}
        Observe that all Erd\H{o}s--R\'enyi edges are independent of everything else. Also, the event $\{w_{ij,2}=1\} = \left\{\bm z_i^{(2)\top}\bm z_j^{(2)} \geq \frac{\sigma_x^2t_n\sqrt{d}}{2}\right\}$ is measurable with respect to $\bm z_i^{(2)},\bm z_j^{(2)}$. Since $\bm z_j^{(1)}$ is independent of $\bm x_i, \bm z_i$ and $\bm z_j^{(2)}$, we get that
        $$\bbE\left[\bm z_j^{(1)} \Bigr| \bm x_i, w_{ij,2}=1, j\in N_i^{(ER)}\right] = \bbE\left[\bm z_j^{(1)}\right]= 0.$$
        By similar argument, we also get that
        $$\bbE\left[\bm z_j^{(2)} \Bigr| \bm x_i, w_{ij,1}=1, j\in N_i^{(ER)}\right] = 0.$$
    \end{proof}
\end{lemma}

\begin{lemma}\label{dot prod cond exp appx 2}
Assume that $n\gg d \gg (\log n)^3$. We have for any fixed $r\geq 1$,
$$\left\|\left\|\bbE[\bm z_j^{(1)}| \bm x_i, w_{ij,2}=1, j\in N_i\setminus N_i^{(ER)}] - \frac{\bm x_i^{(1)}}{d}\cdot t_n\sqrt{d} \right\|\right\|_{L_r} = O\left(1+\frac{t_n^4}{\sqrt{d}}\right),$$
$$\left\|\left\|\bbE[\bm z_j^{(2)}| \bm x_i, w_{ij,1}=1, j\in N_i\setminus N_i^{(ER)}] - \frac{\bm x_i^{(2)}}{d}\cdot t_n\sqrt{d} \right\|\right\|_{L_r} = O\left(1+\frac{t_n^4}{\sqrt{d}}\right).$$

    \begin{proof}

        For any fixed $r\geq 1$, with Holder's inequality, using Lemma~\ref{the difficult errors},
        \begin{align*}
            &\left\|\left\|\frac{\bm x_i^{(1)}}{d}\bbE\left(\bm x_i^\top\bm x_j| \bm x_i^\top \bm x_j \geq \sigma_x^2 t_n\sqrt{d},  \bm z_i^{(2)\top}\bm z_j^{(2)}\geq \frac{\sigma_x^2t_n\sqrt{d}}{2}\right)- \frac{\bm x_i^{(1)}}{d} \cdot \sigma_x^2 t_n\sqrt{d} \right\|\right\|_{L_r} \\
            \leq& \sigma_x^2t_n\sqrt{d}\left\|\left\|\frac{\bm x_i^{(1)}}{d}\right\|\right\|_{L_{2r}}\left\| \frac{1}{\sigma_x^2 t_n\sqrt{d}}\bbE\left(\bm x_i^\top\bm x_j| \bm x_i^\top\bm x_j\geq \sigma_x^2 t_n\sqrt{d},  \bm z_i^{(2)\top}\bm z_j^{(2)}\geq \frac{\sigma_x^2 t_n\sqrt{d}}{2}\right)- 1\right\|_{L_{2r}}\\
            =& \sigma_x^2 t_n\sqrt{d} \times O(d^{-1/2}) \times O\left(\frac{1}{t_n^2}+\frac{t_n^3}{\sqrt{d}}\right) = O\left(\frac{1}{t_n}+\frac{t_n^4}{\sqrt{d}}\right).
        \end{align*}

        Similarly, we also get
        \begin{align*}
            &\left\|\left\|\frac{\bm x_i^{(1)}}{d}\bbE\left(\bm z_i^{(2)\top}\bm z_j^{(2)}| \bm x_i^\top \bm x_j \geq \sigma_x^2 t_n\sqrt{d},  \bm z_i^{(2)\top}\bm z_j^{(2)}\geq \frac{\sigma_x^2 t_n\sqrt{d}}{2}\right)- \frac{\bm x_i^{(1)}}{d} \cdot \frac{\sigma_x^2 t_n\sqrt{d}}{2} \right\|\right\|_{L_r}\\
            &= O\left(\frac{t_n\sqrt{d}}{\sqrt{d}}\left(\frac{1}{t_n}+\frac{t_n^3}{\sqrt{d}}\right)\right)=O\left(1+\frac{t_n^4}{\sqrt{d}}\right).
        \end{align*}
        
        So using Lemma~\ref{cond mean lemma} along with an application of Minkowski's and Holder's inequality, we have obtained for any fixed $r\geq 1$,
        {\small\begin{align*}
            &\left\|\left\|\bbE\left[\bm z_j^{(1)}| \bm x_i, w_{ij,2}=1, j\in N_i\setminus N_i^{(ER)}\right] - \frac{\bm x_i^{(1)}}{d}\cdot t_n\sqrt{d}\right\|\right\|_{L_r} \\
            \leq & \left\|\left\|\bbE[\bm z_j^{(1)}| \bm x_i, w_{ij,2}=1, j\in N_i \setminus N_i^{(ER)}] - \frac{4\bm x_i^{(1)}}{d\sigma_x^2 (\sigma_x^4+4\sigma_x^2\sigma_\eta^2+2\sigma_\eta^4)}\bbE\left[\frac{(\sigma_x^2+\sigma_\eta^2)^2}{2}\bm x_i^\top \bm x_j - \frac{\sigma_x^4}{2}\bm z_i^{(2)\top}\bm z_j^{(2)}\Bigr| w_{ij,2}=1, j\in N_i\setminus N_i^{(ER)}\right]\right\|\right\|_{L_r} \\
            &\quad\quad\quad\quad\quad\quad + \frac{2(\sigma_x^2+\sigma_\eta^2)^2}{\sigma_x^2(\sigma_x^4+4\sigma_x^2\sigma_\eta^2+2\sigma_\eta^4)}\left\|\left\|\frac{\bm x_i^{(1)}}{d}\bbE\left(\bm x_i^\top\bm x_j| \bm x_i^\top\bm x_j \geq \sigma_x^2t_n\sqrt{d}, \bm z_i^{(2)\top}\bm z_j^{(2)}\geq \frac{\sigma_x^2 t_n\sqrt{d}}{2}\right)- \frac{\bm x_i^{(1)}}{d}\cdot \sigma_x^2 t_n\sqrt{d}\right\|\right\|_{L_r}\\
            &\quad\quad\quad\quad\quad\quad+ \frac{2\sigma_x^4}{\sigma_x^2(\sigma_x^4+4\sigma_x^2\sigma_\eta^2+2\sigma_\eta^4)}\left\|\left\|\frac{\bm x_i^{(1)}}{d}\bbE\left(\bm z_i^{(2)\top}\bm z_j^{(2)}| \bm x_i^\top \bm x_j \geq \sigma_x^2t_n\sqrt{d}, \bm z_i^{(2)\top}\bm z_j^{(2)}\geq \frac{\sigma_x^2t_n\sqrt{d}}{2}\right)- \frac{\bm x_i^{(1)}}{d}\cdot \frac{\sigma_x^2 t_n\sqrt{d}}{2}\right\|\right\|_{L_r}\\
            =& O(d^{-1/2}) + O\left(1+\frac{t_n^4}{\sqrt{d}}\right)=O\left(1+\frac{t_n^4}{\sqrt{d}}\right).
        \end{align*}}
        A similar proof works for $\left\|\left\|\bbE[\bm z_j^{(2)}| \bm x_i, w_{ij,1}=1, j\in N_i\setminus N_i^{(ER)}] - \frac{\bm x_i^{(2)}}{d}\cdot t_n\sqrt{d} \right\|\right\|_{L_r}$.
    \end{proof}
\end{lemma}

\begin{lemma}\label{x_i from cond exp prelim}
Assume that $n\gg d \gg (\log n)^3$ and $\alpha>\gamma + \frac{\sigma_x^4(\alpha-1)}{2(\sigma_x^2+\sigma_\eta^2)^2}$. Then, we have for any $i\in [n]$ and fixed $r\geq 1$,
$$\left\|\left\|\bbE[\bm \lambda_i^{(1)}|\bm x_i]-\bm x_i^{(1)}\right\|\right\|_{L_r} = O\left(\sqrt{d}\left(n^{\gamma+\frac{\sigma_x^4(\alpha-1)}{2(\sigma_x^2+\sigma_\eta^2)^2}-\alpha}+\frac{1}{t_n}+\frac{t_n^3}{\sqrt{d}}\right)\right),$$
$$\left\|\left\|\bbE[\bm \lambda_i^{(2)}|\bm x_i]-\bm x_i^{(2)}\right\|\right\|_{L_r} = O\left(\sqrt{d}\left(n^{\gamma+\frac{\sigma_x^4(\alpha-1)}{2(\sigma_x^2+\sigma_\eta^2)^2}-\alpha}+\frac{1}{t_n}+\frac{t_n^3}{\sqrt{d}}\right)\right).$$
    \begin{proof}
    First note that
        \begin{equation*}
    \begin{split}
        &\bbE\left[\frac{t_n}{\sqrt{d}}\bm \lambda_i^{(1)}\Bigr|\bm x_i, N_i, N_i^{(ER)}\right] = \bbE\left[\frac{\sum_{j\in N_i} w_{ij,2} \bm z_j^{(1)}}{\sum_{j\in N_i} w_{ij,2}} \Bigr| \bm x_i,N_i, N_i^{(ER)}\right]\\
    =& \bbE\bbE\left[\frac{\sum_{j\in N_i} w_{ij,2} \bm z_j^{(1)}}{\sum_{j\in N_i} w_{ij,2}} \Bigr| \bm x_i,N_i, N_i^{(ER)}, \{w_{ij,2}\}_{j\in N_i}\right]\\
    =& \bbE\left[ \frac{\sum_{j\in N_i^{(ER)}} \bbE\left(w_{ij,2}\bm z_j^{(1)} | \bm x_i, \{w_{ij,2}\}_{j\in N_i} \right) + \sum_{j\in N_i\setminus N_i^{(ER)}} \bbE\left(w_{ij,2}\bm z_j^{(1)} | \bm x_i, \{w_{ij,2}\}_{j\in N_i} \right)}{\sum_{j\in N_i} w_{ij,2}}\Bigr| \{w_{ij,2}\}_{j\in N_i} \right].
    \end{split}
\end{equation*}
Now, we observe
\begin{align*}
    \bbE(w_{ij,2}\bm z_j^{(1)}| \bm x_i, w_{ij,2}, j\in N_i^{(ER)}) &= w_{ij,2}\bbE(\bm z_j^{(1)}| \bm x_i, w_{ij,2}=1, j\in N_i^{(ER)}),\\
    \bbE(w_{ij,2}\bm z_j^{(1)}| \bm x_i, w_{ij,2}, j\in N_i \setminus N_i^{(ER)}) &= w_{ij,2}\bbE(\bm z_j^{(1)}| \bm x_i, w_{ij,2}=1, j\in N_i \setminus N_i^{(ER)}).
\end{align*}

Thus,
\begin{equation}\label{eq msg pass}
    \begin{split}
        &\bbE\left[\frac{t_n}{\sqrt{d}}\bm \lambda_i^{(1)} \Bigr|\bm x_i, N_i, N_i^{(ER)}\right] = \bbE\left[\frac{\sum_{j\in N_i} w_{ij,2} \bm z_j^{(1)}}{\sum_{j\in N_i} w_{ij,2}} \Bigr| \bm x_i,N_i, N_i^{(ER)}\right]\\
    =& \bbE\left[\frac{\sum_{j\in N_i^{(ER)}} w_{ij,2}}{\sum_{j\in N_i} w_{ij,2}}\Bigr|\bm x_i,N_i, N_i^{(ER)}\right] \bbE(\bm z_j^{(1)}| \bm x_i, w_{ij,2}=1, j\in N_i^{(ER)}) \\
    &\quad\quad\quad\quad\quad+ \bbE\left[\frac{\sum_{j\in N_i\setminus N_i^{(ER)}} w_{ij,2}}{\sum_{j\in N_i} w_{ij,2}}\Bigr|\bm x_i,N_i, N_i^{(ER)} \right] \bbE(\bm z_j^{(1)}| \bm x_i, w_{ij,2}=1, j\in N_i\setminus N_i^{(ER)})\\
    =& \mathcal{T}_{1,i}\bbE(\bm z_j^{(1)}| \bm x_i, w_{ij,2}=1, j\in N_i^{(ER)})+(1-\mathcal{T}_{1,i})\bbE(\bm z_j^{(1)}| \bm x_i, w_{ij,2}=1, j\in N_i\setminus N_i^{(ER)})\\
    =& \mathcal{T}_{1,i}\cdot 0 +(1-\mathcal{T}_{1,i})\left(\frac{\bm x_i^{(1)}}{d}\cdot t_n\sqrt{d} + \nu_{i}\right),
    \end{split}
\end{equation}
by using Lemma~\ref{dot prod cond exp appx 1} and defining the quantities
\begin{align*}
    \mathcal{T}_{1,i} &= \bbE\left[\frac{\sum_{j\in N_i^{(ER)}} w_{ij,2}}{\sum_{j\in N_i} w_{ij,2}}\Bigr|\bm x_i,N_i, N_i^{(ER)}\right], \\
    \nu_{i} &= \bbE[\bm z_j^{(1)}| \bm x_i, w_{ij,2}=1, j\in N_i\setminus N_i^{(ER)}] - \frac{\bm x_i^{(1)}}{d}\cdot t_n\sqrt{d}.
\end{align*}
Then for any fixed $r\geq 1$, using the results of Lemma~\ref{wij count er} and Lemma~\ref{wij count} along with Lemma~\ref{inverse wij count moments} and Holder's inequality,
$$\|\bbE[\mathcal{T}_{1,i}|\bm x_i]\|_{L_r} \leq \|\mathcal{T}_{1,i}\|_{L_r} = \left\|\bbE\left[\frac{\sum_{j\in N_i^{(ER)}} w_{ij,2}}{\sum_{j\in N_i} w_{ij,2}}\Bigr|\bm x_i,N_i, N_i^{(ER)}\right] \right\|_{L_r} \leq \left\| \frac{\sum_{j\in N_i^{(ER)}} w_{ij,2}}{\sum_{j\in N_i} w_{ij,2}}\right\|_{L_r} = O\left(n^{\gamma+\frac{\sigma_x^4(\alpha-1)}{2(\sigma_x^2+\sigma_\eta^2)^2}-\alpha}\right).$$

Then, continuing from \cref{eq msg pass}, we write
\begin{align*}
    \bbE[\bm \lambda_i^{(1)}|\bm x_i] - \bm x_i^{(1)} 
    &= \bbE\left[ -\mathcal{T}_{1,i}\bm x_i^{(1)}  +  \frac{\sqrt{d}(1-\mathcal{T}_{1,i})\nu_{i}}{t_n}\Bigr| \bm x_i  \right].
\end{align*}
Taking the $L_r$ norm of the above, it follows using Jensen's and Holder's inequality, and Lemma~\ref{dot prod cond exp appx 1} and Lemma~\ref{dot prod cond exp appx 2} that, 
\begin{align*}
    \left\|\bbE[\bm \lambda_i^{(1)}|\bm x_i] - \bm x_i^{(1)} \right\|_{L_r} &\leq  \|\|\bbE[\mathcal{T}_{1,i}|\bm x_i]\bm x_i^{(1)}\|\|_{L_r}+\frac{\sqrt{d}}{t_n}\|\|\bbE[(1-\mathcal{T}_{1,i})\nu_{i}|\bm x_i]\|\|_{L_r}\\
    &\leq \|\mathcal{T}_{1,i}\|_{L_{2r}}\|\|\bm x_i^{(1)}\|\|_{L_{2r}} + \frac{\sqrt{d}}{t_n}\|1-\mathcal{T}_{1,i}\|_{L_{2r}}\|\|\nu_{i}\|\|_{L_{2r}}\\
    &= O\left(\sqrt{d}\cdot n^{\gamma+\frac{\sigma_x^4(\alpha-1)}{2(\sigma_x^4+\sigma_\eta^2)^2}-\alpha}\right)+O\left(\frac{\sqrt{d}}{t_n}\cdot \left(1+\frac{t_n^4}{\sqrt{d}}\right)\right)\\
    &= O\left(\sqrt{d}\left(n^{\gamma+\frac{\sigma_x^4(\alpha-1)}{2(\sigma_x^2+\sigma_\eta^2)^2}-\alpha}+\frac{1}{t_n}+\frac{t_n^3}{\sqrt{d}}\right)\right).
\end{align*}
A similar proof works for $\left\|\left\|\bbE[\bm \lambda_i^{(2)}|\bm x_i]-\bm x_i^{(2)}\right\|\right\|_{L_r}$.
    \end{proof}
\end{lemma}

\begin{corollary}\label{x_i from cond exp}
Assume that $n\gg d \gg (\log n)^3$ and $\alpha>\gamma + \frac{\sigma_x^4(\alpha-1)}{2(\sigma_x^2+\sigma_\eta^2)^2}$. Then, we have for any $i\in [n]$ and fixed $r\geq 1$,
$$\left\|\left\|\bbE[\bm \lambda_i|\bm x_i]-\bm x_i\right\|\right\|_{L_r} = O\left(\sqrt{d}\left(n^{\gamma+\frac{\sigma_x^4(\alpha-1)}{2(\sigma_x^2+\sigma_\eta^2)^2}-\alpha}+\frac{1}{t_n}+\frac{t_n^3}{\sqrt{d}}\right)\right).$$    
\begin{proof}
    The proof follows immediately by combining the two statements from Lemma~\ref{x_i from cond exp prelim}.
\end{proof}
\end{corollary}

\begin{lemma}\label{T_i from cond exp prelim}
Assume that $n \gg d \gg (\log n)^3$. Then, for any fixed $r\geq 1$,
$$\left\|\left\|\bm \lambda_i^{(1)}-\bbE[\bm \lambda_i^{(1)}|\bm x_i]\right\|\right\|_{L_r}=O\left( \frac{d}{\sqrt{t_n n^\alpha}} \right),$$
$$\left\|\left\|\bm \lambda_i^{(2)}-\bbE[\bm \lambda_i^{(2)}|\bm x_i]\right\|\right\|_{L_r}=O\left( \frac{d}{\sqrt{t_n n^\alpha}} \right).$$
    \begin{proof}
        Start by defining the notations
        $$\mu_i = \bbE[\bm z_j^{(1)}|\bm x_i, j\in \Tilde{N}_i^{(1)}],$$
        $$\nu_{i} = \bbE[\bm z_j^{(1)}| \bm x_i, w_{ij,2}=1, j\in N_i\setminus N_i^{(ER)}] - \frac{\bm x_i^{(1)}}{d}\cdot t_n\sqrt{d}.$$
        
        Then it follows that
        \begin{align*}
            \bbE\left[\frac{t_n}{\sqrt{d}}\bm \lambda_i^{(1)}|\bm x_i\right] = \bbE\left[\bbE\left[ \frac{1}{|\Tilde{N}_i^{(1)}|}\sum_{j\in \Tilde{N}_i^{(1)}} \bm z_j^{(1)} \Bigr| \bm x_i,\Tilde{N}_i^{(1)} \right]\Bigr| \bm x_i \right]
            = \bbE[\mu_i|\bm x_i].
        \end{align*}
        We also have from Lemma~\ref{dot prod cond exp appx 1} and Lemma~\ref{dot prod cond exp appx 2},
        \begin{align*}
            \mu_i &= \bbE[\bm z_j^{(1)}|\bm x_i, j\in \Tilde{N}_i^{(ER,1)}]\bbP\left(j\in \Tilde{N}_i^{(ER,1)}|\bm x_i, j\in \Tilde{N}_i^{(1)}\right)\\
            &\quad\quad\quad+ \bbE[\bm z_j^{(1)}|\bm x_i, j\in \Tilde{N}_i^{(1)}\setminus \Tilde{N}_i^{(ER,1)}]\bbP\left(j\in \Tilde{N}_i^{(1)}\setminus \Tilde{N}_i^{(ER,1)}|\bm x_i, j\in \Tilde{N}_i^{(1)}\right)\\
            \implies \|\|\mu_i\|\|_{L_r} &\leq 0 + \left\|\left\| \bbE[\bm z_j^{(1)}|\bm x_i, j\in \Tilde{N}_i^{(1)}\setminus \Tilde{N}_i^{(ER,1)}]\right\|\right\|_{L_r}\\
            &=  \left\|\left\| \frac{\bm x_i^{(1)}}{d}\cdot t_n\sqrt{d}+\nu_{i} \right\|\right\|_{L_r}\\
            &\leq \frac{t_n}{\sqrt{d}}\|\|\bm x_i^{(1)}\|\|_{L_r}+\|\|\nu_{i}\|\|_{L_r} = O\left(t_n\right)+O\left(\frac{1}{t_n}+\frac{t_n^4}{\sqrt{d}}\right)=O(t_n).
        \end{align*}

        Then, writing $R_{ij} = \bm z_j^{(1)} - \bbE[\mu_i|\bm x_i]$, we have
        \begin{align*}
         \|\|R_{ij}\|\|_{L_r} = \left\|\left\|\bm z_j^{(1)}-\bbE[\mu_i|\bm x_i]\right\|\right\|_{L_r} = \|\|\bm z_j^{(1)}\|\|_{L_r} + \|\|\mu_i\|\|_{L_r} = O(\sqrt{d}+t_n) = O(\sqrt{d}).
        \end{align*}
        
        With $\{R_{ij}: j\in S_i\}$ being conditionally i.i.d. given $\Tilde{N}_i^{(1)}$, we get from Lemma~\ref{inverse wij count moments} and Lemma~\ref{random sum moment order}, 
        $$\left\|\left\|\bm \lambda_i^{(1)}-\bbE[\bm \lambda_i^{(1)}|\bm x_i]\right\|\right\|_{L_r}=\frac{\sqrt{d}}{t_n}\left\|\frac{1}{|\Tilde{N}_i^{(1)}|}\sum_{j\in \Tilde{N}_i^{(1)}} R_{ij}\right\|_{L_r} = O\left( \frac{\sqrt{d}}{t_n}\cdot \frac{\sqrt{dt_n}}{\sqrt{n^\alpha}} \right) = O\left( \frac{d}{\sqrt{t_n n^\alpha}} \right).$$
    
    A similar proof works for $\left\|\left\|\bm \lambda_i^{(2)}-\bbE[\bm \lambda_i^{(2)}|\bm x_i]\right\|\right\|_{L_r}$.
    \end{proof}
\end{lemma}

\begin{corollary}\label{T_i from cond exp}
    Assume that $n \gg d \gg (\log n)^3$. Then, for any fixed $r\geq 1$,
$$\left\|\left\|\bm \lambda_i-\bbE[\bm \lambda_i|\bm x_i]\right\|\right\|_{L_r}=O\left( \frac{d}{\sqrt{t_n n^\alpha}} \right).$$
\begin{proof}
    The proof follows immediately by combining the two statements from Lemma~\ref{T_i from cond exp prelim}.
\end{proof}
\end{corollary}

\begin{lemma}
    Assume that $n^\alpha\gg d \gg (\log n)^3$ and $\alpha>\gamma+\frac{\sigma_x^4(\alpha-1)}{2(\sigma_x^2+\sigma_\eta^2)^2}$. Then for any fixed $r\geq 1$,
    $$\left\|\bm \lambda_i - \bm x_i\right\|_{L_r} = O\left(\sqrt{d}\left(n^{\gamma+\frac{\sigma_x^4(\alpha-1)}{2(\sigma_x^2+\sigma_\eta^2)^2}-\alpha}+\frac{1}{t_n}+\frac{t_n^3}{\sqrt{d}}+\frac{\sqrt{d}}{\sqrt{t_nn^\alpha}}\right)\right)=o(\sqrt{d}).$$
    \begin{proof}
        Writing
        $$\bm \lambda_i - \bm x_i = \left\{\bm \lambda_i - \bbE[\bm \lambda_i|\bm x_i]\right\} + \left\{\bbE[\bm \lambda_i|\bm x_i] - \bm x_i\right\},$$
        the claim follows immediately from an application of Minkowski's inequality, and Corollary~\ref{x_i from cond exp}, Corollary~\ref{T_i from cond exp}.
    \end{proof}
\end{lemma}

\section{Lemmas on concentration of neighbourhood sizes}\label{ngbhd size appdx}

We prove in this section that both the geometric and Erd\H{o}s--R\'enyi neighbourhood sizes concentrate for all nodes in the graph under our described model. We also show that the attention-filtered sub-neighbourhood sizes  concentrate.

\begin{lemma}\label{er ngbhd size}
For each $i\in [n]$, the Erd\H{o}s--R\'enyi neighbourhood size $|N_i^{(ER)}|$ follows
$$\bbE|N_i^{(ER)}| = n^\gamma \text{ and } |N_i^{(ER)}|=\Theta_\bbP(n^\gamma).$$

    \begin{proof}
        For each $i\in [n]$, the Erd\H{o}s--R\'enyi neighbourhood size has distribution
        $$|N_i^{(ER)}| \sim \operatorname{Bin}(n, n^{\gamma-1}). 
        $$
        Hence, $\bbE|N_i^{(ER)}| = n^\gamma $, and further by Chernoff's inequality,
        \begin{align*}
            \bbP\left(\frac{n^{\gamma}}{2} \leq |N_i^{(ER)}| \leq 2n^{\gamma}\right) \geq 1 - \left(\exp\left\{-\frac{n^{\gamma}}{3}\right\} + \exp\left\{-\frac{n^\gamma}{8}\right\}\right).
        \end{align*}
        By union bound,
        $$\bbP\left(\frac{n^{\gamma}}{2} \leq |N_i^{(ER)}| \leq 2n^{\gamma} \text{ for all $i\in[n]$}\right) \geq 1 - n\left(\exp\left\{-\frac{n^{\gamma}}{3}\right\} + \exp\left\{-\frac{n^\gamma}{8}\right\}\right).$$
        For any $0<\gamma<1$, note that $n\left(\exp\left\{-\frac{n^{\gamma}}{3}\right\} + \exp\left\{-\frac{n^\gamma}{8}\right\}\right) \longrightarrow 0$ as $n\longrightarrow \infty$, and that concludes the proof.
    \end{proof}
\end{lemma}

\begin{lemma}\label{pure geo ngbhd size}
Assume that $d\gg (\log n)^3$. Then, for all $i\in [n]$, the pure geometric sub-neighbourhood size $|N_i\setminus N_i^{(ER)}|$ follows
$$\bbE|N_i\setminus N_i^{(ER)}| = {\Theta}\left(\frac{n^\alpha}{t_n}\right), \text{ and } |N_i\setminus N_i^{(ER)}| = {\Theta}_\bbP\left(\frac{n^\alpha}{t_n}\right).$$

    \begin{proof}
        Conditionally given $N_i^{(ER)}$, the pure geometric sub-neighbourhood size of node $i$ follows the distribution $\left|N_i
        \setminus N_i^{(ER)}\right| \, \bigr|, N_i^{(ER)} \sim \operatorname{Bin}\left(n-\left|N_i^{(ER)}\right|, q_n\right)$, where
        \begin{align*}
            q_n = \bbE\left[\bbP\left(\bm{x}_i^\top\bm{x}_j \geq \sigma_x^2 t_n\sqrt{d}|\bm x_i\right)\right]=\bbE\left[\bbP\left(\bm{x}_i^\top \bm{x}_j \geq \sigma_x^2 t_n\sqrt{d}|\bm x_i \right)\right] = \bbE\left[\Phi^c\left( \frac{t_n\sqrt{d}}{\|\bm x_i/\sigma_x\|} \right)\right].
        \end{align*}
        We have from Proposition~\ref{expectation of phi^c general form} and Lemma~\ref{er ngbhd size} that,
        $$\bbE\left[\Phi^c\left( \frac{t_n\sqrt{d}}{\|\bm x_i/\sigma_x\|} \right)\right] = {\Theta}\left(\frac{n^{\alpha-1}}{t_n}\right), \qquad \bbE[n - |N_i^{(ER)}|]=n-n^\gamma = \Theta(n).$$
        Combining the above two gives,
        $$\bbE\left|N_i\setminus N_i^{(ER)}\right|={\Theta}\left(\frac{n^\alpha}{t_n}\right).$$
        
        Also by Chernoff's inequality,
        \begin{align*}
            &\bbP\left(\left(n-\left|N_i^{(ER)}\right|\right)\frac{q_n}{2} \leq \left|N_i \setminus N_i^{(ER)}\right| \leq 2\left(n-\left|N_i^{(ER)}\right|\right)q_n\,\bigr| \, N_i^{(ER)}\right)\\[2mm]
            \geq& 1 - \left(\exp\left\{-\left(n-\left|N_i^{(ER)}\right|\right)\frac{q_n}{3}\right\} + \exp\left\{-\left(n-\left|N_i^{(ER)}\right|\right)\frac{q_n}{8}\right\}\right).
        \end{align*}
        Since from Lemma~\ref{er ngbhd size}, $n-\left|N_i^{(ER)}\right| = \Theta_\bbP(n)$, we get
        $$\left(n-\left|N_i^{(ER)}\right|\right)q_n = {\Theta}_\bbP\left(\frac{n^\alpha}{t_n}\right).$$
        Because $0<\alpha<1$, one has $1-n\exp(-{\Theta}(n^\alpha/t_n)) \longrightarrow 1$, as $n\longrightarrow\infty$, hence by a union bound, for all $i\in [n]$, 
        $$\left|N_i\setminus N_i^{(ER)}\right| = {\Theta}_\bbP\left(\frac{n^\alpha}{t_n}\right).$$
        
    \end{proof}
\end{lemma}

\begin{lemma}\label{wij count er}
    Assume that $d\gg (\log n)^3$ and $\gamma+\frac{\sigma_x^4(\alpha-1)}{2(\sigma_x^2+\sigma_\eta^2)^2}>0$. Then for all $i\in [n]$ and $k=1,2$, we have that 
    $$\bbE\left|\Tilde{N}_i^{(ER,k)}\right| = {\Theta}\left(\frac{n^{\gamma+\frac{\sigma_x^4(\alpha-1)}{2(\sigma_x^2+\sigma_\eta^2)^2}}}{t_n}\right), \qquad \left|\Tilde{N}_i^{(ER,k)}\right|= {\Theta}_\bbP\left(\frac{n^{\gamma+\frac{\sigma_x^4(\alpha-1)}{2(\sigma_x^2+\sigma_\eta^2)^2}}}{t_n}\right).$$
    \begin{proof}
        Conditionally given $\bm x_i, \bm \varepsilon_i$ and $N_i^{(ER)}$,
        $$\left|\Tilde{N}_i^{(ER,1)}\right| \,\Bigr|\, \bm x_i, \bm \eta_i, N_i^{(ER)} \sim \operatorname{Bin}\left(\left|N_i^{(ER)}\right|, p^{(ER,1)}_n\right),$$
        where
        \begin{align*}
            p^{(ER,1)}_n &= \bbP\left( w_{ij,2}=1 | \bm x_i, \bm \eta_i, j\in N_i^{(ER)} \right)= \bbP\left(\bm z_i^{(2)\top}\bm z_j^{(2)} \geq \frac{\sigma_x^2 t_n\sqrt{d}}{2} \Bigr| \bm x_i, \bm \eta_i \right)\\
            &= \Phi^c\left( \frac{ \sigma_x^2 t_n\sqrt{d}}{2\sqrt{\sigma_x^2+\sigma_\eta^2}\|\bm z_i^{(2)}\|} \right) = \Phi^c\left( \frac{ t_n\sqrt{d/2}}{\frac{\sqrt{2}(\sigma_x^2+\sigma_\eta^2)}{\sigma_x^2}\cdot \left\|\bm z_i^{(2)}/\sqrt{\sigma_x^2+\sigma_\eta^2}\right\|} \right).
        \end{align*}
        We have from Proposition~\ref{expectation of phi^c general form} and Lemma~\ref{er ngbhd size} that,
        $$\bbE\left[\Phi^c\left( \frac{ t_n\sqrt{d/2}}{\frac{\sqrt{2}(\sigma_x^2+\sigma_\eta^2)}{\sigma_x^2}\cdot \left\|\bm z_i^{(2)}/\sqrt{\sigma_x^2+\sigma_\eta^2}\right\|} \right) \right] = {\Theta}\left(\frac{1}{t_n}\cdot {n^{\frac{
        \sigma_x^4(\alpha-1)}{2(\sigma_x^2+\sigma_\eta
        ^2)^2}}}\right), \qquad \bbE\left[\left|N_i^{(ER)}\right|\right]=n^\gamma .$$
        Combining the above two gives,
        $$\bbE\left|\Tilde{N}_i^{(ER,1)}\right|={\Theta}\left(\frac{n^{\gamma+\frac{
        \sigma_x^4(\alpha-1)}{2(\sigma_x^2+\sigma_\eta
        ^2)^2}}}{t_n} \right).$$
        
        Also by Chernoff's inequality,
        \begin{align*}
            &\bbP\left(\left|N_i^{(ER)}\right|\frac{p_n^{(ER,1)}}{2} \leq \left|\Tilde{N}_i^{(ER,1)}\right| \leq 2\left|N_i^{(ER)}\right|p_n^{(ER,1)} \,\bigr|\, \bm x_i, \bm \varepsilon_i, N_i^{(ER)}\right)\\
            \geq& 1 - \left(\exp\left\{-\left|N_i^{(ER)}\right|\frac{p_n^{(ER,1)}}{3}\right\} + \exp\left\{-\left|N_i^{(ER)}\right|\frac{p_n^{(ER,1)}}{8}\right\}\right).
        \end{align*}
        Since from Lemma~\ref{er ngbhd size}, $\left|N_i^{(ER)}\right| = \Theta_\bbP(n^\gamma)$, 
        $$\left|N_i^{(ER)}\right|p_n^{(ER,1)} = {\Theta}_\bbP\left(\frac{n^{\gamma+\frac{\sigma_x^4(\alpha-1)}{2(\sigma_x^2+\sigma_\eta^2)^2}}}{t_n}\right).$$
        Because $0<\gamma+\frac{\sigma_x^4(\alpha-1)}{2(\sigma_x^2+\sigma_\eta^2)^2}<1$, one has $1-n\exp\left(-{\Theta}\left(n^{\gamma+\frac{\sigma_x^4(\alpha-1)}{2(\sigma_x^2+\sigma_\eta^2)^2}}/t_n\right)\right) \longrightarrow 1$, hence by a union bound, for all $i\in [n]$, 
        $$\left|\Tilde{N}_i^{(ER,1)}\right| = {\Theta}_\bbP\left(\frac{n^{\gamma+\frac{\sigma_x^4(\alpha-1)}{2(\sigma_x^2+\sigma_\eta^2)^2}}}{t_n}\right).$$
        
        A similar argument works to show the same orders for $|\Tilde{N}_i^{(ER,2)}|$.
    \end{proof}
\end{lemma}

\begin{lemma}\label{wij count geo}
    Assume that $d\gg (\log n)^3$. For all $i\in [n]$ and $k=1,2$, we have that
    $$\bbE\left|\Tilde{N}_i^{(k)} \setminus \Tilde{N}_i^{(ER,k)}\right| = {\Theta}(n^\alpha/t_n), \text{ and } \left|\Tilde{N}_i^{(k)} \setminus \Tilde{N}_i^{(ER,k)}\right| = {\Theta}_\bbP(n^\alpha/t_n).$$
    \begin{proof}
        Conditionally given $N_i\setminus N_i^{(ER)}$,
        $$|\Tilde{N}_i^{(1)}\setminus \Tilde{N}_i^{(ER,1)}| \,\Bigr|\, N_i \setminus N_i^{(ER)} \sim \operatorname{Bin}(|N_i \setminus N_i^{(ER)}|, p^{(geo,1)}_n),$$
        where
        \begin{align*}
            p^{(geo,1)}_n &= \bbP\left( w_{ij,2}=1 |  j\in N_i \setminus N_i^{(ER)} \right)= \bbP\left(\bm z_i^{(2)\top}\bm z_j^{(2)} \geq \frac{\sigma_x^2 t_n\sqrt{d}}{2} \Bigr| \bm x_i^\top \bm x_j \geq \sigma_x^2 t_n\sqrt{d}\right).
        \end{align*}
        We have from Lemma~\ref{gaussian tail}, Proposition~\ref{expectation of phi^c general form} and Lemma~\ref{P u v lemma} that,
        \begin{align*}
            \bbP\left(\bm z_i^{(2)\top}\bm z_j^{(2)} \geq \frac{\sigma_x^2t_n\sqrt{d}}{2} \Bigr| \bm x_i^\top \bm x_j \geq \sigma_x^2t_n\sqrt{d}\right) &= \frac{\bbP\left(\bm z_i^{(2)\top}\bm z_j^{(2)} \geq \frac{\sigma_x^2 t_n\sqrt{d}}{2} , \bm x_i^\top \bm x_j \geq \sigma_x^2 t_n\sqrt{d}\right)}{\bbP\left( \bm x_i^\top \bm x_j \geq \sigma_x^2 t_n\sqrt{d}\right)}\\
            &= \frac{\frac{\varphi(t_n)}{2t_n}(1+o(1))\left(1+O\left(\frac{t_n^3}{\sqrt{d}}\right)\right)}{\bbE\left[\Phi^c\left(\frac{t_n\sqrt{d}}{\|\bm x_i/\sigma_x\|}\right)\right]}\\
            &= \frac{\frac{n^{\alpha-1}}{2\sqrt{2\pi}t_n}}{\frac{1}{\sqrt{2\pi}t_n}n^{\alpha-1}}\frac{(1+o(1))\left(1+O\left(\frac{t_n^3}{\sqrt{d}}\right)\right)}{1+o(1)} = \frac{1}{2}(1+o(1)).
        \end{align*}
        Also recall that, from Lemma~\ref{pure geo ngbhd size}, $\bbE(|N_i \setminus N_i^{(ER)}|) = {\Theta}(n^\alpha/t_n)$. Combining it with the above,
        $$\bbE\left|\Tilde{N}_i^{(1)} \setminus \Tilde{N}_i^{(ER,1)}\right|={\Theta}(n^\alpha/t_n).$$
        
        Also by Chernoff's inequality,
        \begin{align*}
            &\bbP\left(\left|N_i\setminus N_i^{(ER)}\right|\frac{p_n^{(ER,1)}}{2} \leq \left|\Tilde{N}_i^{(1)}\setminus \Tilde{N}_i^{(ER,1)}\right| \leq 2\left|N_i\setminus N_i^{(ER)}\right|p_n^{(ER,1)} \,\bigr|\, N_i\setminus N_i^{(ER)}\right)\\
            \geq& 1 - \left(\exp\left\{-\left|N_i\setminus N_i^{(ER)}\right|\frac{p_n^{(geo,1)}}{3}\right\} + \exp\left\{-\left|N_i\setminus N_i^{(ER)}\right|\frac{p_n^{(geo,1)}}{8}\right\}\right).
        \end{align*}
        Since from Lemma~\ref{pure geo ngbhd size}, $|N_i\setminus N_i^{(ER)}| = {\Theta}_\bbP(n^\alpha/t_n)$, 
        and $0<\alpha<1$, one has $1-n\exp\left(-{\Theta}\left(n^\alpha/t_n\right)\right) \longrightarrow 1$, as $n\longrightarrow \infty$, hence by a union bound, for all $i\in [n]$, 
        $$|\Tilde{N}_i^{(1)}\setminus \Tilde{N}^{(ER,1)}| = {\Theta}_\bbP\left(n^{\alpha}/t_n\right).$$
        
        A similar argument works to show the same orders for $|\Tilde{N}_i\setminus \Tilde{N}_i^{(ER,2)}|$.
    \end{proof}
\end{lemma}

\begin{lemma}\label{wij count}
Assume that $d\gg (\log n)^3$ and $\alpha > \gamma + \frac{\sigma_x^4(\alpha-1)}{2(\sigma_x^2+\sigma_\eta^2)^2}$. For any $i\in [n]$ and $k=1,2$, we have that
$$\bbE|\Tilde{N}_i^{(k)}| = {\Theta}\left(n^{\alpha}/t_n\right), \text{ and } |\Tilde{N}_i^{(k)}| = {\Theta}_\bbP\left(n^{\alpha}/t_n\right).$$
    \begin{proof}
    Observe the decomposition
    $$|\Tilde{N}_i^{(k)}| = |\Tilde{N}_i^{(ER,k)}| + |\Tilde{N}_i^{(k)} \setminus \Tilde{N}_i^{(ER, k)}|.$$
    Then the conclusion follows immediately from Lemma~\ref{wij count er} and Lemma~\ref{wij count geo}, recalling the assumption $\alpha > \frac{\sigma_x^4(\alpha-1)}{2(\sigma_x^2+\sigma_\eta^2)^2}$.
    \end{proof}
\end{lemma}

\section{Repeatedly used toolkits}

\begin{proposition}\label{norm x_i lemma}
Fix any $0<u<\sqrt{d}$. For all $1 \leq i \leq n$, we have
$$\sqrt{d} - u \leq \|\bm{x}_i\| \leq \sqrt{d} + u$$
with probability at least $1-2n\exp(u^2/4)$.
    \begin{proof}
    
        We know that $\|\bm{x}_i\|^2 = \sum_{k\in [d]} \bm{x}_i(k)^2$ follows a $\chi^2$ distribution with $d$ degrees of freedom, marginally. Then by Lemma 1 in \cite{10.1214/aos/1015957395}, we first have
        \begin{align*}
            \bbP\left(\|\bm{x}_i\|^2  \geq d + u\sqrt{d} + \frac{u^2}{2}\right) \leq \exp(-u^2/4).
        \end{align*}
        Similarly using the other concentration inequality from Lemma 1 in \cite{10.1214/aos/1015957395},
        \begin{align*}
            \bbP\left(\|\bm{x}_i\|^2  \leq d - u\sqrt{d} \right) \leq \exp(-u^2/4).
        \end{align*}
        Then note that
        \begin{align*}
            \bbP\left(\left|\|\bm{x}_i\|-\sqrt{d} \right| \geq u \right) &\leq \bbP(\|\bm{x}_i\| \geq \sqrt{d}+u) + \bbP(\|\bm{x}_i\| \leq \sqrt{d}-u)\\
            &= \bbP(\|\bm{x}_i\|^2 \geq d + 2u\sqrt{d}+u^2) + \bbP(\|\bm{x}_i\|^2 \leq d - 2u\sqrt{d}+u^2)\\
            &{\leq} \bbP\left(\|\bm{x}_i\|^2 \geq d+u\sqrt{d}+ \frac{u^2}{2}\right) + \bbP\left(\|\bm{x}_i\|^2 \leq d-u\sqrt{d}\right) \leq 2\exp(-u^2/4),
        \end{align*}
        where the penultimate step follows by noting that $d + 2u\sqrt{d}+u^2 \geq d + u\sqrt{d}+u^2/2$ because $u>0$ and $d - 2u\sqrt{d}+u^2 \leq d-u\sqrt{d}$ because $u<\sqrt{d}$. The final conclusion follows by taking an union bound over all $i$'s.
    \end{proof}
\end{proposition}

\begin{lemma}\label{orders lemma}
Let $c>0$ be a constant. We have that
$$\left|\Phi^c\left(\frac{t_n}{c}\right) - \Phi^c\left(\frac{t_n\sqrt{d}}{c\|\bm x_i\|}\right)\right| \leq \left(\frac{t_n u}{c(\sqrt{d}-u)}\right)\varphi\left(\frac{t_n\sqrt{d}}{c(\sqrt{d}+u)}\right)$$
holds for each $i\in [n]$, with probability at least $1-2ne^{-u^2/4}$.

    \begin{proof}
        For some $0<u<\sqrt{d}$, define the event
        $$E = \{\sqrt{d}-u \leq \|\bm{x}_i\| \leq \sqrt{d}+u \text{ for all } i\in[n]\},$$
        so that by Proposition~\ref{norm x_i lemma} and a union bound, $\bbP(E^c) \leq 2ne^{-u^2/4}$.
        Conditional on $E$ happening,
        \begin{equation}\label{norm feature order}
            \frac{\sqrt{d}}{\sqrt{d}+u} \leq \frac{\sqrt{d}}{\|\bm{x}_i\|} \leq \frac{\sqrt{d}}{\sqrt{d}-u}  \implies \varphi\left(\frac{t_n\sqrt{d}}{c(\sqrt{d}-u)}\right) \leq \varphi\left(\frac{t_n\sqrt{d}}{c\|\bm{x}_i\|}\right) \leq \varphi\left(\frac{t_n\sqrt{d}}{c(\sqrt{d}+u)}\right).
        \end{equation}
        We also have that, since $\varphi(\cdot)$ is strictly decreasing in $\bbR^+$, 
        $$\varphi\left(\frac{t_n}{c}\right) \leq \varphi\left(\frac{t_n\sqrt{d}}{c(\sqrt{d}+u)}\right).$$
        
        Further, by mean value theorem, there exists $\xi$ between $\frac{t_n}{c}$ and $\frac{t_n\sqrt{d}}{c\|\bm x_i\|}$ such that
        \begin{align*}
            \left|\Phi^c\left(\frac{t_n}{c}\right) - \Phi^c\left(\frac{t_n\sqrt{d}}{c\|\bm x_i\|}\right)\right| 
            &\leq \frac{\varphi(\xi)}{c}\left|t_n - \frac{t_n\sqrt{d}}{\|\bm{x}_i\|}\right| \\
            &\leq  \left(\frac{t_n u}{c(\sqrt{d}-u)}\right)\varphi\left(\frac{t_n\sqrt{d}}{c(\sqrt{d}+u)}\right),
        \end{align*}
        since $\varphi(\cdot)$ is monotonic between $t_n$ and $\frac{t_n\sqrt{d}}{\|\bm x_i\|}$.
    \end{proof}
\end{lemma}

\begin{proposition}\label{q norm of chi sq}
For every $r\ge 1$ there exists a finite constant $C_{r}\in(0,\infty)$,
depending only on $r$, such that
\[
    \sup_{d\ge 1}\;
    \mathbb E\left[ \left(\tfrac{\|\bm x_{i}\|}{\sqrt d}\right)^{r}\right]
    \le C_{r},
\]
and so $\left\| \frac{\|\bm x_i\|}{\sqrt{d}} \right\|_{L_r}$ is bounded above uniformly in $n,d$.

\begin{proof}
Denote
\[
    X_{d}\;:=\;\frac{\|\bm x_{i}\|}{\sqrt d},
\]
and it is known that $\|\bm x_i\|^2 \sim \chi_d^2$.
Using the well-known form of the $s$-th moment of a $\chi^{2}_{d}$ variable,
\[
    \mathbb E\bigl[(\|\bm x_i\|^2)^{s}\bigr]
    \;=\;2^{s}\,\frac{\Gamma\!\bigl(\frac{d}{2}+s\bigr)}{\Gamma\!\bigl(\frac{d}{2}\bigr)}.
\]
Choosing $s=r/2$ and dividing by $d^{r/2}$ gives
\[
    \mathbb E[X_{d}^{\,r}]
    \;=\;d^{-\,r/2}\,2^{r/2}\,
        \frac{\Gamma\!\bigl(\frac{d+r}{2}\bigr)}{\Gamma\!\bigl(\frac{d}{2}\bigr)}.
\]

Using the inequality $\frac{\Gamma(x+\alpha)}{\Gamma(x)}
    \;\le\;x^{\alpha}$ for all $x\ge 1$ and $\alpha\ge 0$,
 and plugging in $x=\tfrac d2\ge 1$ and
$\alpha=\tfrac r2$, we obtain
\[
    \mathbb E[X_{d}^{\,r}]
    \;\le\;d^{-\,r/2}\,2^{r/2}\left(\tfrac d2\right)^{r/2}
    \;=\;1,
    \qquad d\ge 2.
\]

For the only remaining case $d=1$,
\[
    \mathbb E[X_{1}^{\,r}]
    \;=\;2^{\,r/2}\,
        \frac{\Gamma\!\left(\tfrac{r+1}{2}\right)}{\sqrt\pi}.
\]
Then, setting
\[
    {\;
        C_{r}:=\max\left\{1,\;
        2^{r/2}\frac{\Gamma\left(\tfrac{r+1}{2}\right)}{\sqrt\pi}\right\}},
\]
we have concluded the proof.
\end{proof}
\end{proposition}

\begin{proposition}\label{q norm of inverted chi sq}
    For any $1\leq r \leq \frac{d}{2}$,
    $$\left\|\frac{\sqrt{d}}{\|\bm x_i\|}\right\|_{L_r} \leq \sqrt{2}.$$
    \begin{proof}
        Note that, since $\|\bm x_i\|^2 \sim \chi^2(d)$,
        \begin{align*}
            \bbE[\|\bm x_i\|^{-r}] = \bbE[(\|\bm x_i\|^2)^{-r/2}] &= \int_0^\infty z^{-r/2}\cdot \frac{1}{2^{d/2}\Gamma(d/2)}z^{d/2-1}e^{-z/2}dz\\
            &= 2^{-r/2}\frac{\Gamma((d-r)/2)}{\Gamma(d/2)}\\
            &\leq 2^{-r/2}\left( \frac{d-r}{2}\right)^{-r/2}\\
            &= (d-r)^{-r/2}.
        \end{align*}
        In the last inequality step above, we used that for any $u>v>0$, one has $\Gamma(u-v)/\Gamma(v)\leq (u-v)^{-v}$. Then we get, since $r \leq \frac{d}{2}$,
        $$\left\|\frac{\sqrt{d}}{\|\bm x_i\|}\right\|_{L_r}  = \sqrt{d}\,\bbE[\|\bm x_i\|^{-r}]^{1/r} \leq \sqrt{\frac{d}{d-r}}\leq \sqrt{2}.$$
    \end{proof}
\end{proposition}

\begin{lemma}\label{q norms bounds}
    Fix any $q\geq 1$. Then we have
    \begin{itemize}
        \item $\|\|\bm x_i\|^2\|_{L_q} = O(d), \|\|\bm z_i^{(2)}\|^2\|_{L_q} = O(d), \|\|\bm x_i^{(2)}\|^2\|_{L_q} = O(d), \|\bm x_i^{(2)\top} \bm z_i^{(2)}\|_{L_q} = O(d)$
        \item  $\|\|\bm x_i\|^2 - \sigma_x^2d \|_{L_q} = O(d^{1/2}), \|\|\bm z_i^{(2)}\|^2 - (\sigma_x^2+\sigma_\eta^2)d/2\|_{L_q} = O(d^{1/2}), \|\|\bm x_i^{(2)}\|^2 - \sigma_x^2d/2\|_{L_q} = O(d^{1/2})$
        \item $\|\bm x_i^{(2)\top}\bm \eta_i^{(2)}\|_{L_q} = O(d^{1/2}), \|\bm x_i^\top \bm x_j\|_{L_q} = O(d^{1/2}), \|\bm z_i^{(2)\top} \bm z_j^{(2)}\|_{L_q} = O(d^{1/2})$
    \end{itemize}
    \begin{proof}
        Note that 
        $$ \|\bm x_i\|^2 = \sum_{k=1}^d \bm x_i(k)^2,$$
        where $\bm x_i(k)^2 \sim \sigma_x^2\cdot \chi^2_1$. Hence by Minkowski's inequality and Proposition 2.7.1 in \cite{Vershynin_2018},
        $$\|\|\bm x_i\|^2\|_{L_q} \leq \sum_{k=1}^d \|\bm x_i(k)^2\|_{L_q} = O(d).$$
        Similar argument shows that $\|\|\bm z_i^{(2)}\|^2\|_{L_q} = O(d)$ and $\|\|\bm x_i^{(2)}\|^2\|_{L_q} = O(d)$.

        Next, note that
        $$
            \|\bm x_i\|^2 - \sigma_x^2d = \sum_{k=1}^d  (\bm x_i(k)^2 - \sigma_x^2),\quad
            \|\bm z_i^{(2)}\|^2 - (\sigma_x^2+\sigma_eta^2)d/2 = \sum_{k=d/2}^d (\bm z_i(k)^2-(\sigma_x^2+\sigma_\eta^2)), \quad \|\bm x_i^{(2)}\|^2 - \sigma_x^2 d/2 = \sum_{k=d/2}^d (\bm x_i(k)^2-1),
            $$
        $$\bm x_i^{(2)\top}\bm \eta_i^{(2)} = \sum_{k=d/2}^{d} \bm x_i(k)\bm \eta_i(k), \quad \bm x_i^\top \bm x_j = \sum_{k=1}^d \bm x_i(k) \bm x_j(k),\quad
            \bm z_i^{(1)\top}\bm z_j^{(1)} = \sum_{k=1}^d \bm z_i^{(1)}(k)\bm z_j^{(1)}(k).$$
        Each of the above are sums of i.i.d. centered sub-exponential random variables, so it follows by Proposition~\ref{sum of centered subexpo} that all of them have $O(d^{1/2})$ $L_q$ norms. Finally, noting that $\bm x_i^{(2)\top} \bm z_i^{(2)} = \|\bm x_i^{(2)}\|^2 + \bm x_i^{(2)\top}\bm \eta_i^{(2)},$ we have that
        $$\|\bm x_i^{(2)\top} \bm z_i^{(2)}\|_{L_q} = O(d).$$
    \end{proof}
\end{lemma}

\begin{proposition}\label{expectation of phi^c general form}
    Let $\bm Z \sim N(\bm 0, \bm I_d)$, and $c>0$ be a fixed constant. Then,
    $$\sqrt{2\pi}\frac{t_n}{c}n^{(1-\alpha)/c^2}\bbE\left[\Phi^c\left(\frac{t_n\sqrt{d}}{c\|\bm Z\|}\right)\right] \longrightarrow 1,$$
    as $n,d\longrightarrow \infty$ and $d \gg (\log n)^2$.
    \begin{proof}
        For any $x>0$, by Mills ratio inequality,
        $$\frac{\varphi(x)}{x+\frac{1}{x}} \leq \Phi^c(x) \leq \frac{\varphi(x)}{x}.$$
        For the proof, we shall temporarily use notations $S = \frac{\|\bm Z\|^2}{d}, R = \frac{\sqrt{d}}{\|\bm Z\|} = \frac{1}{\sqrt{S}}$, and $u_n = \frac{t_nR}{c}$. In the above inequality, put $x = u_n$ and multiply all sides by $\sqrt{2\pi}\frac{t_n}{c}n^{(1-\alpha)/c^2}$,
        \begin{align*}
             \frac{t_n}{c}e^{-u_n^2/2}\frac{u_n}{1+u_n^2}n^{(1-\alpha)/c^2} \leq \sqrt{2\pi}\frac{t_n}{c}n^{(1-\alpha)/c^2}\Phi^c\left(u_n\right) \leq \frac{t_n}{c}\frac{e^{-u_n^2/2}}{u_n}n^{(1-\alpha)/c^2}.
        \end{align*}
        The upper bound simplifies to 
        $$\frac{t_n}{c}\frac{e^{-u_n^2/2}}{u_n}n^{(1-\alpha)/c^2}=\frac{1}{R}n^{(1-\alpha)/c^2}e^{-t_n^2R^2/c^2}=\frac{1}{R}n^{\frac{(1-\alpha)}{c^2}(1-R^2)},$$
        and the lower bound simplifies to
        $$\frac{t_n}{c}e^{-u_n^2/2}\frac{u_n}{1+u_n^2}n^{(1-\alpha)/c^2} = \frac{t_n^2 R}{c^2+t_n^2R^2}\frac{1}{R}n^{\frac{(1-\alpha)}{c^2}(1-R^2)} = \frac{t_n^2R^2}{c^2 + t_n^2R^2}\frac{1}{R}n^{\frac{(1-\alpha)}{c^2}(1-R^2)}.$$
        Taking expectation, we get
        \begin{equation}\label{bounds on E g_n}
            \bbE\left[\frac{t_n^2R^2}{c^2 + t_n^2R^2}g_n\right] \leq \sqrt{2\pi}\frac{t_n}{c}n^{(1-\alpha)/c^2}\bbE[\Phi^c(u_n)] \leq \bbE[g_n],
        \end{equation}
        where $g_n = \frac{1}{R}n^{\frac{(1-\alpha)}{c^2}(1-R^2)} = \sqrt{S}\exp\left(\frac{1-\alpha}{c^2}\log n \cdot \frac{S-1}{S}\right)$. We shall prove that $\bbE[g_n]\longrightarrow 1$ and the gap between the lower and the upper bound tends to $0$.

        \begin{itemize}
            \item[(i)] $\bbE[g_n] \longrightarrow 1.$\\[3mm]
            By law of large numbers, $S \stackrel{\bbP}{\longrightarrow}1$. By Chebyshev's inequality, $|S-1| = O_\bbP(1/\sqrt{d})$; and on $|S-1|\leq M/\sqrt{d}$,
            $$\left| \frac{1-\alpha}{c^2}\log n \cdot \frac{S-1}{S} \right|\leq M'\frac{1-\alpha}{c^2}\cdot \frac{\log n}{\sqrt{d}} \longrightarrow 0,$$
            since $d\gg (\log n)^2$. Reusing $S \stackrel{\bbP}{\longrightarrow}1$,
            $$g_n = \sqrt{S}\exp\left(\frac{1-\alpha}{c^2}\log n \cdot \frac{S-1}{S}\right)\stackrel{\bbP}{\longrightarrow}1.$$

            We, next, show that $g_n$ is uniformly integrable via a uniform $L^2$ bound. We have
            $$g_n^2 = Sn^{\frac{2(1-\alpha)}{c^2}(1-1/S)}.$$
            If $S \leq 1$, we get
            $$1-\frac{1}{S} \leq 0 \implies g_n^2 \leq S \leq 1 \implies \bbE[g_n^2] \leq 1.$$
            On the other hand, if $S \geq 1$, we get
            $$1-\frac{1}{S} \leq S-1 \implies g_n^2 \leq S\exp\left(\lambda_n(S-1)\right),$$
            where $\lambda_n = \frac{2(1-\alpha)}{c^2}\log n$. Observe that $\frac{\lambda_n}{d} = O(\log n/d)\leq \frac{1}{2}$, for large enough $n,d$. Hence, noting that $dS \sim \chi^2_d$, we use the MGF of $\chi^2_d$ distribution at $\lambda_n/d$ and differentiate it to get,
            \begin{align*}
                \bbE\left[ S\exp(\lambda_n(S-1)) \right] = \frac{e^{-\lambda_n}}{d}\bbE\left[ dSe^{-\frac{\lambda_n}{d}\cdot dS} \right]= \frac{e^{-\lambda_n}}{d} \cdot d\left( 1 - \frac{2\lambda_n}{d} \right)^{-\frac{d}{2}-1}.
            \end{align*}
            It follows from the expansion $\log(1-x)=-x-\frac{x^2}{2}+O(x^3)$, that
            \begin{align*}
                &\log\bbE\left[ S\exp(\lambda_n(S-1)) \right] = -\lambda_n -\left(\frac{d}{2}+1\right)\log\left(1-\frac{2\lambda_n}{d}\right)\\
                \implies & \log\bbE\left[ S\exp(\lambda_n(S-1)) \right] = -\lambda_n + \left(\lambda_n+\frac{2\lambda_n}{d}\right)+\left(\frac{\lambda_n^2}{d}+\frac{2\lambda_n^2}{d^2} \right)+O\left(\frac{\lambda_n^3}{d^2}\right)\\
                \implies & \bbE\left[ S\exp(\lambda_n(S-1)) \right] = \exp\left(\frac{2\lambda_n}{d}+\frac{\lambda_n^2}{d}+O\left(\frac{\lambda_n^3}{d^2}\right)\right).
            \end{align*}
            Observing that the leading order in the summation on the right hand side of the above equation is $\frac{\lambda_n^2}{d}$, which converges to $0$ since $d\gg (\log n)^2$, we conclude that 
            $$\sup_n \bbE[S\exp(\lambda_n(S-1))]<\infty.$$
            This shows that $\sup_n \bbE[g_n^2]<\infty$, and combining this with $g_n \stackrel{\bbP}{\longrightarrow}1$, we get that $\bbE[g_n] \longrightarrow 1$.

            \item[(ii)] The gap between the lower and upper bound vanishes.\\[3mm]
            We have from Eq.~\ref{bounds on E g_n},
            $$0\leq \bbE[g_n] - \sqrt{2\pi}\frac{t_n}{c}n^{\frac{(1-\alpha)}{c^2}}\bbE[\Phi^c(u_n)] \leq \bbE\left[g_n\frac{c^2}{c^2+t_n^2R^2}\right]\leq \frac{c^2}{t_n^2} \bbE\left[ \frac{g_n}{R^2} \right]=\frac{c^2}{t_n^2} \bbE\left[ Sg_n \right].$$
            It suffices to show that $\frac{c^2}{t_n^2} \bbE\left[ Sg_n \right]\longrightarrow 0$. By Cauchy-Schwarz inequality,
            $$\sup_n\bbE\left[ Sg_n \right] \leq \sup_n \sqrt{E[S^2]E[g_n^2]} < \infty,$$
            by Proposition~\ref{q norm of chi sq}, and as shown above, $\sup_n\bbE[g_n^2]<\infty$. Because $c$ is a constant, while $t_n\longrightarrow\infty$, we conclude that
            $$\frac{c^2}{t_n^2} \bbE\left[ Sg_n \right]\longrightarrow 0.$$
        \end{itemize}
    \end{proof}

\end{proposition}

\begin{proposition}\label{sum of centered subexpo}
    Let $Z_1,\ldots,Z_d$ be $d$ i.i.d. sub-exponential random variables with $\bbE[Z_1] = 0$ and $\|Z_1\|_{\psi_1} = O(1)$ for all $k\in [d]$. Then for any fixed $q>1$,
    $$\left\| \sum_{k=1}^d Z_k \right\|_{q}  = O(d^{1/2}).$$
    \begin{proof}
        By Bernstein's inequality (Theorem 2.8.1 in \cite{Vershynin_2018}),
        $$\bbP\left(\left| \sum_{k=1}^d Z_k\right|\geq t \right) \leq 2\exp\left[ -c\min\left\{ \frac{t^2}{\|Z_1\|_{\psi_1}^2d},\frac{t}{\|Z_1\|_{\psi_1}} \right\} \right].$$
        Then we get
        \begin{align*}
            \bbE\left| \sum_{k=1}^d Z_k\right|^q &= q\int_{0}^\infty t^{q-1}\bbP\left(\left| \sum_{k=1}^d Z_k\right|\geq t  \right) dt\\
            &= \underbrace{q\int_{0}^d t^{q-1}\bbP\left(\left| \sum_{k=1}^d Z_k\right|\geq t  \right) dt}_{I_1} + \underbrace{q\int_{d}^\infty t^{q-1}\bbP\left(\left| \sum_{k=1}^d Z_k\right|\geq t  \right) dt}_{I_2}.
        \end{align*}
        For the term $I_1$,
        \begin{align*}
            I_1 &= q\int_{0}^d t^{q-1}\bbP\left(\left| \sum_{k=1}^d Z_k\right|\geq t  \right) dt\\
            &= 2q\int_{0}^d t^{q-1}\exp\left[-\frac{ct^2}{\|Z_1\|_{\psi_1}^2d} \right] dt\\
            &= 2q\int_0^{\frac{cd}{\|Z_1\|_{\psi_1}^2}} \left(\frac{\|Z_1\|_{\psi_1}^2 d}{c}\right)^{\frac{q-1}{2}} u^{\frac{q-1}{2}}e^{-u}\frac{\sqrt{\|Z_1\|_{\psi_1}^2 d}}{2\sqrt{c}}\frac{du}{\sqrt{u}}\\
            &\leq q\left(\frac{\sqrt{\|Z_1\|_{\psi_1}^2 d}}{c} \right)^{q}\int_{0}^\infty u^{\frac{q}{2}-1}e^{-u}du\\
            &= O(1)\Gamma(q/2)\times d^{q/2} = O(d^{q/2}).
        \end{align*}
        Moving on to the term $I_2$,
        \begin{align*}
            I_2 &= q\int_{d}^\infty t^{q-1}\bbP\left(\left| \sum_{k=1}^d Z_k\right|\geq t  \right) dt\\
            &= 2q\int_d^\infty t^{q-1} \exp\left[-\frac{ct}{\|Z_1\|_{\psi_1}} \right]dt = O(1),
        \end{align*}
        since the above is bounded by the full gamma integral. Hence, we get that 
        $$\left\| \sum_{k=1}^d Z_k \right\|_{q} \leq (I_1 + I_2)^{1/q} = O(d^{1/2}).$$
    \end{proof}
\end{proposition}

\begin{proposition}\label{random sum moment order}
Suppose that $N$ is a random subset of $[n]$, satisfying $\left\|\frac{1}{|N|}\right\|_{L_r}=\Theta(1/b_n)$ for some sequence $(b_n)_{n\in\bbN}$, and a fixed $r\geq 1$. A collection of random vectors $\{X_k: k\in N\}$ are i.i.d. given $N$, and $\|\|X_k\|\|_{L_r}=\Theta(a_n)$ for some sequence $(a_n)_{n\in\bbN}$. Then,
$$\left\|\left\| \frac{1}{|N|}\sum_{k\in N} X_k \right\|\right\|_{L_r} = O\left(\frac{a_n}{\sqrt{b_n}}\right).$$
\begin{proof}
    Write $Y_n = \frac{1}{|N|}\sum_{k\in N}X_k$ and $m=|N|$. Then,
    $$\bbE\|\|Y_n\|\|^r = \bbE\left[ \frac{1}{m^r}\bbE\left[\left\|\sum_{k\in N} X_k\right\|^r \Bigr|N \right] \right].$$
    Then using triangle inequality with Rosenthal's inequality to bound the $r$-th moment of sum i.i.d. random variables,
    $$\bbE\left[\left\|\left\|\sum_{k\in N} X_k\right\|\right\|^r \Bigr|N \right] \leq O(m^{r/2}a_n^r).$$
    Plugging this in,
    \begin{align*}
        \bbE\|\|Y_n\|\|^r &= O(a_n^r)\bbE\left[m^{-r/2} \right]\\
        &= O\left(\frac{a_n^r}{b_n^{r/2}}\right),
    \end{align*}
    recalling the assumption about $L_r$ moments of $\frac{1}{|N|}$. This concludes the proof.
\end{proof}
\end{proposition}

\begin{lemma}\label{subexpo wasserstein}
Let $W = K^{-1/2}\sum_{i=1}^K X_i\in \bbR^m$, where $\{X_1,\ldots,X_K\}$ are independent, $\bbE(X_i)=\bm 0$ for all $i$, and $\operatorname{Cov}(W)=\Sigma$. Suppose $\|X_i\|_{\psi_1}\leq b$ for all $1\leq i\leq K$. Let $Z \sim N(\bm 0, \Sigma)$. Then, for any $p\geq 2$, we have
$$\mathcal{W}_p(W,Z) \leq C\left(\frac{pm^{1/4}}{\sqrt{K}}+ \frac{p^{5/2}}{K}\right)\|\Sigma^{1/2}\|_{op}\|\Sigma^{-1/2}\|_{op}^2b^2. $$
    \begin{proof}
        Since $\operatorname{Cov}(W) = \Sigma$, we can define $Y := \Sigma^{-1/2}W$ with $\operatorname{Cov}(Y)=\bm I_m$. Also this operation does not change the mean, hence
        $$\bbE(Y) = \bm 0, \qquad \operatorname{Cov}(Y) = \bm I_m, \qquad \|Y\|_{\psi_1} \leq \|\Sigma^{-1/2}\|_{op}b.$$
        Define $Z \sim N(\bm 0, \bm I_m)$. Then, by Theorem~4.1 in \cite{10.1214/23-EJP976}, 
        $$\mathcal{W}_p(Y,Z) \leq C\left(\frac{pm^{1/4}}{\sqrt{K}}+ \frac{p^{5/2}}{K}\right)\|\Sigma^{-1/2}\|_{op}^2b^2. $$

        Then we immediately obtain
        $$\mathcal{W}_p(W, \Sigma^{1/2}Z) = \mathcal{W}_p(\Sigma^{1/2}Y, \Sigma^{1/2}Z)\leq \|\Sigma^{1/2}\|_{op}\mathcal{W}_p(Y,Z) \leq C\left(\frac{pm^{1/4}}{\sqrt{K}}+ \frac{p^{5/2}}{K}\right)\|\Sigma^{1/2}\|_{op}\|\Sigma^{-1/2}\|_{op}^2b^2.$$
    \end{proof}
\end{lemma}

\begin{proposition}\label{the nongaussian theorem}
    Let $W$ be a $m$-dimensional random vector, and $Z \sim N(\bm 0, \bm \Sigma)$. Let $A\subset \bbR^m$ be a nonempty convex Borel set, and for $\epsilon>0$, define its outer and inner parallel sets 
    $$A_\epsilon = \{\bm x\in \bbR^m: \operatorname{dist}(x, A)\leq \epsilon\}, \qquad A_{-\epsilon} = \{\bm x\in \bbR^m: \operatorname{dist}(x, A^c)> \epsilon\}.$$ Suppose there exist constants $\alpha> 0, A_0>0, p_0\geq 1,$ and $\Delta\in (0,1)$ such that
    $$\mathcal{W}_p(W,Z) \leq A_0 p^\alpha\Delta \quad\text{ for all } 1\leq p \leq p_0,$$
    and let 
    $$s_A = \log\left(\frac{1}{\bbP(Z \notin A)}\right)\in (0,\infty].$$
    Suppose that, there exists $\epsilon_0>0$ such that the following admissibility and boundary-strip condition hold:
    \begin{itemize}
        \item $|\log \Delta| + s_A\leq p_0$, and with 
        $$\epsilon_* := eA_0(|\log \Delta|+s_A)^\alpha\Delta,$$
        we have $\epsilon_*\leq \epsilon_0$.
        \item There is a non-decreasing function $r_A:(0,\epsilon_0] \mapsto [0,\infty)$ such that for all $0<\epsilon\leq \epsilon_0$,
        $$\frac{\bbP(Z\in A_\epsilon\setminus A_{-\epsilon})}{\bbP(Z \notin A)}\leq r_A(\epsilon).$$
    \end{itemize}
    Then,
    $$\left| \frac{\bbP(W \notin A)}{\bbP(Z\notin A)} -1\right|\leq r_A(\epsilon_*)+\Delta.$$
    \begin{proof}
        Write $E = \{W\notin A\}, G = \{Z\notin A\}$, and $H = \{\|W-Z\|\leq \epsilon\}$. First,
        $$|\bbP(E)-\bbP(G)|\leq \bbP(E\triangle G),$$
        so it suffices to bound the right hand side to bound the left hand side. The claim is that $E\triangle G \subset \{Z \in A_\epsilon \setminus A_{-\epsilon}\} \cup H^c$. Indeed, take any outcome with $\|W-Z\|\leq \epsilon$, i.e. in $H$. If $W\notin A$ while $Z \in A$, then $\operatorname{dist}(Z,A^c)\leq \epsilon$, hence $Z\notin A_{-\epsilon}$; so $Z \in A\setminus A_{-\epsilon}\subset A_\epsilon \setminus A_{-\epsilon}$. The other possibility is $W\in A$, while $Z \notin A$, in which case $\operatorname{dist}(Z,A)\leq \epsilon$ so that $Z \in A_\epsilon$; so $Z \in A_\epsilon\setminus A \subset A_\epsilon \setminus A_{-\epsilon}$. Therefore, on $H$, the symmetric difference $E\triangle G$ implies $Z$ lies in the boundary strip, thus proving $$|\bbP(E)-\bbP(G)|\leq \bbP(E\triangle G)\leq \bbP(Z \in A_\epsilon \setminus A_{-\epsilon}) + \bbP(H^c).$$ 
        This translates to
        \begin{equation}\label{symm diff equation}
            |\bbP(W\notin A) - \bbP(Z \notin A)| \leq \bbP(Z \in A_{\epsilon} \setminus A_{-\epsilon}) + \bbP(\|Z-W\|>\epsilon).
        \end{equation}
        Now, set $p = |\log \Delta| + s_A$ and hence $\epsilon_* = eA_0p^\alpha\Delta$. By admissibility assumptions, we have $p\leq p_0$ and $\epsilon_* \leq \epsilon_0$. From the upper bound on $\mathcal{W}_p(W,Z)$, we can couple $(W,Z)$ so that $\bbE[\|W-Z\|^p]^{1/p} \leq A_0p^\alpha\Delta$. Then, by Markov's inequality,
        \begin{align*}
            \bbP\left(\|W-Z\|>\epsilon_*\right)\leq  \frac{\bbE[\|W-Z\|^p]}{\epsilon_*^p} \leq \frac{(A_0p^\alpha\Delta)^p}{\epsilon_*^p} =  \frac{1}{e^p}.
        \end{align*}
        Now, with $\epsilon=\epsilon_*$, divide Eq.~\ref{symm diff equation} by $P(Z\notin A)$, and use the boundary-strip condition to get
        $$\left|\frac{\bbP(W\notin A)}{\bbP(Z\notin A)}-1\right|\leq \frac{\bbP(Z\in A_{\epsilon_*}\setminus A_{-{\epsilon_*}})}{\bbP(Z\notin A)} + \frac{\bbP(\|Z-W\|>\epsilon_*)}{\bbP(Z\notin A)}\leq r_A(\epsilon_*) + \frac{e^{-p}}{\bbP(Z\notin A)}.$$
        Finally, plugging in $p$ and $s_A$,
        $$\left|\frac{\bbP(W\notin A)}{\bbP(Z\notin A)}-1\right|\leq r_A(\epsilon_*) + \frac{e^{-|\log \Delta|+\log(\bbP(Z\notin A))}}{\bbP(Z\notin A)} = r_A(\epsilon_*) + \Delta.$$
        The last step followed since we assumed $\Delta\in(0,1)$.
    \end{proof}    
\end{proposition}

\begin{proposition}\label{Laplace at 0}
    If $F \in \mathcal{C}^{J+1}[0,\infty)$ and $F^{(J+1)}$ is bounded on $[0, \infty)$, then
    $$\int_{0}^\infty e^{-tw}F(w)dw = \sum_{j=0}^{J} \frac{F^{(j)}(0)}{t^{j+1}} + R_{J+1}, \text{ where } |R_{J+1}|\leq \frac{\|F^{(J+1)}\|_\infty}{t^{J+2}}. $$
\end{proposition}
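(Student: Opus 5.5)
This is the Laplace-type asymptotic expansion (Watson's lemma with an explicit remainder). The plan is to prove it by repeated integration by parts, inducting on $J$. The boundary terms at infinity are handled by boundedness of $F^{(J+1)}$, which makes every lower derivative grow at most polynomially.

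\emph{Step 1 (growth control).} Since $F^{(J+1)}$ is bounded, integrate $J+1-j$ times from $0$ to get $|F^{(j)}(w)| \le C(1+w)^{J+1-j}$ for $j\le J+1$. Consequently $e^{-tw}F^{(j)}(w)\to 0$ as $w\to\infty$ for every fixed $t>0$, and every integral $\int_0^\infty e^{-tw}F^{(j)}(w)\,dw$ converges absolutely.

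\emph{Step 2 (one integration by parts).} For any $G\in\mathcal C^1[0,\infty)$ of polynomial growth with $G'$ of polynomial growth, integrate $e^{-tw}$ and differentiate $G$:
$$\int_0^\infty e^{-tw}G(w)\,dw=\Big[-\tfrac{e^{-tw}}{t}G(w)\Big]_0^\infty+\frac1t\int_0^\infty e^{-tw}G'(w)\,dw=\frac{G(0)}{t}+\frac1t\int_0^\infty e^{-tw}G'(w)\,dw.$$
Apply this successively with $G=F,F',\dots,F^{(J)}$; Step 1 justifies each application. This gives the exact identity
$$\int_0^\infty e^{-tw}F(w)\,dw=\sum_{j=0}^{J}\frac{F^{(j)}(0)}{t^{j+1}}+\frac{1}{t^{J+1}}\int_0^\infty e^{-tw}F^{(J+1)}(w)\,dw.$$
The inductive step is just one more application of Step 2 to the previous remainder.

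\emph{Step 3 (remainder bound).} Define $R_{J+1}=t^{-(J+1)}\int_0^\infty e^{-tw}F^{(J+1)}(w)\,dw$. Then
$$|R_{J+1}|\le \|F^{(J+1)}\|_\infty\, t^{-(J+1)}\int_0^\infty e^{-tw}\,dw=\frac{\|F^{(J+1)}\|_\infty}{t^{J+2}}.$$

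The only delicate point is Step 1, namely the vanishing of the boundary terms at infinity. Boundedness of the top derivative is what guarantees that all lower derivatives grow at most polynomially, and hence that $e^{-tw}$ kills them. The proof needs $t>0$, which I take as implicit in the statement.
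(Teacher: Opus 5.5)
Your proof is correct and complete: the polynomial growth of $F,\dots,F^{(J)}$, which follows from boundedness of $F^{(J+1)}$, kills the boundary terms at infinity for $t>0$, and the exact remainder $R_{J+1}=t^{-(J+1)}\int_0^\infty e^{-tw}F^{(J+1)}(w)\,dw$ gives the stated bound immediately. The paper states this proposition without proof as a standard fact, and repeated integration by parts is the standard argument; keep your explicit integral form of $R_{J+1}$, since it also yields $\bigl|\tfrac{d}{dt}R_{J+1}\bigr|\le (J+2)\|F^{(J+1)}\|_\infty/t^{J+3}$, which is exactly what justifies differentiating the expansion term by term in $t_n$ in the proof of Lemma~\ref{gaussian tail}.
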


\begin{lemma}\label{gaussian tail}
Let $Z_1, Z_2$ be two standard normal random variables with correlation $\rho\in(0,1)$. Then
$$\bbP\left(Z_1 \geq t_n, Z_2 \geq \rho t_n\right) = \varphi(t_n)\left[\frac{1}{2t_n} + \frac{\rho}{\sqrt{2\pi(1-\rho^2)}t_n^2} - \frac{1}{2t_n^3} \right] + O\left(\frac{\varphi(t_n)}{t_n^4}\right),$$
$$\bbE\left[Z_1\indic\{Z_1\geq t_n, Z_2 \geq \rho t_n\}\right] = \varphi(t_n)\left[\frac{1}{2}+\frac{\rho}{\sqrt{2\pi(1-\rho^2)}t_n} \right]+O\left(\frac{\varphi(t_n)}{t_n^3}\right).$$
\begin{proof}
    Note that
    $$\bbP(Z_2 \geq \rho t_n|Z_1 =u) = \Phi\left( \frac{\rho}{\sqrt{1-\rho^2}}(u-t_n) \right).$$
    Hence,
    \begin{align*}
        \bbP\left(Z_1 \geq t_n, Z_2 \geq \rho t_n\right) = \int_{t_n}^\infty \varphi(u)\, \bbP(Z_2 \geq \rho t_n|Z_1 =u) du &= \int_{t_n}^\infty \varphi(u)\Phi\left( \frac{\rho}{\sqrt{1-\rho^2}}(u-t_n) \right)du\\
        &= \varphi(t_n)\int_0^\infty e^{-t_n w} F(w) dw,
    \end{align*}
    where $F(w) = e^{-w^2/2}\Phi\left(\frac{\rho}{\sqrt{1-\rho^2}}w\right)$. Let's write $k=\frac{\rho}{\sqrt{1-\rho^2}}, A(w)=e^{-w^2/2}$ and $B(w)=\Phi\left(kw\right)$; and note that
    \begin{align*}
        A'(w) = -wA(w), \quad A''(w) = (w^2-1)A(w), \quad A'''(w) = -(w^3-3w)A(w),\\
        B'(w) = k\varphi(kw), \quad B''(w) = -k^3w\varphi(kw), \quad B'''(w) = k^3(k^2w^2-1)\varphi(kw).
    \end{align*}
    Consequently, we get
    $$F(0) = \frac{1}{2}, \quad F'(0) = \frac{k}{\sqrt{2\pi}}, \quad F''(0) = -\frac{1}{2},$$
    while $$F'''(w) = O(\operatorname{poly}(w)e^{-w^2/2}(\varphi(kw)+\Phi(kw))).$$
    We know that $\varphi(kw),\Phi(kw)$ are both bounded uniformly, and although the $\operatorname{polynom}(w)$ has coefficients depending on $k (k\leq 1)$, the $e^{-w^2/2}$ make the entire term uniformly bounded in $w, w\geq 0$. This sets up the premise to apply Proposition~\ref{Laplace at 0} and get
    \begin{align*}
        \bbP\left(Z_1 \geq t_n, Z_2 \geq \rho t_n\right) &= \varphi(t_n)\int_0^\infty e^{-t_n w} F(w) dw\\
        &= \varphi(t_n)\left[ \frac{F(0)}{t_n} + \frac{F'(0)}{t_n^2} + \frac{F''(0)}{t_n^3} \right] + O\left(\frac{\varphi(t_n)}{t_n^4}\right)\\
        &= \varphi(t_n)\left[\frac{1}{2t_n} + \frac{k}{\sqrt{2\pi}t_n^2} - \frac{1}{2t_n^3} \right] + O\left(\frac{\varphi(t_n)}{t_n^4}\right).
    \end{align*}
    On the other hand, 
    \begin{align*}
        \bbE\left[Z_1\indic\{Z_1\geq t_n, Z_2 \geq \rho t_n\}\right] = &\int_{t_n}^\infty u\varphi(u)\Phi(k(u-t_n))du\\
        =& \varphi(t_n)\int_{0}^\infty e^{-t_nw}(t_n+w)F(w)dw\\
        =& \varphi(t_n)\left[\underbrace{t_n\int_{0}^\infty e^{-t_nw}F(w)dw}_{\frac{t_n\bbP(Z_1\geq t_n, Z_2 \geq \rho t_n)}{\varphi(t_n)}}    +    \underbrace{\int_{0}^\infty we^{-t_nw}F(w)dw}_{:=R(t_n)} \right].
    \end{align*}
    Now, for the second summand $R(t_n)$, observe that
    \begin{align*}
        R(t_n) &= -\frac{d}{dt_n}\int_{0}^\infty e^{-t_n w}F(w)dw\\
        &= -\frac{d}{dt_n}\left[ \frac{F(0)}{t_n}+\frac{F'(0)}{t_n^2} + \frac{F''(0)}{t_n^3}+O\left(\frac{1}{t_n^4}\right) \right]\\
        &= \frac{F(0)}{t_n^2}+\frac{2F'(0)}{t_n^3} + \frac{3F''(0)}{t_n^4}+O\left(\frac{1}{t_n^5}\right) = \frac{1}{2t_n^2} + O\left(\frac{1}{t_n^3}\right).
    \end{align*}
    Plugging it in, we get
    \begin{align*}
        \bbE\left[Z_1\indic\{Z_1\geq t_n, Z_2 \geq \rho t_n\}\right] = & \varphi(t_n)\left[\frac{1}{2} + \frac{k}{\sqrt{2\pi}t_n} - \frac{1}{2t_n^2} + \frac{1}{2t_n^2}\right] + O\left(\frac{\varphi(t_n)}{t_n^3}\right)\\
        =& \varphi(t_n)\left[\frac{1}{2}+\frac{k}{\sqrt{2\pi}t_n}\right] + O\left(\frac{\varphi(t_n)}{t_n^3}\right).
    \end{align*}
    This concludes the proof.
\end{proof}
    
\end{lemma}

\begin{lemma}\label{P u v lemma}
Assume $d\gg (\log n)^3$. Let $Z_1$ and $Z_2$ are standard normals with correlation $\frac{\sigma_x^2}{\sqrt{2}(\sigma_x^2+\sigma_\eta^2)}$. Then for any $s$ satisfying $t_n\leq  s\leq d^{1/6}$,
$$\left|\frac{\bbP\left(\bm x_i^\top \bm x_j \geq s\sigma_x^2\sqrt{d}, \bm z_i^{(2)\top}\bm z_j^{(2)} \geq \frac{\sigma_x^2t_n\sqrt{d}}{2}\right)}{\bbP\left(Z_1 \geq s, Z_2 \geq \frac{\sigma_x^2t_n}{\sqrt{2}(\sigma_x^2+\sigma_\eta^2)}\right)} - 1\right| = O\left( \frac{s^3}{\sqrt{d}}\right).$$
    \begin{proof}
        Let 
        $$U = \frac{\bm x_i^\top \bm x_j}{\sqrt{d}\sigma_x^2}, \qquad V = \frac{\bm z_i^{(2)\top}\bm z_j^{(2)}}{\sqrt{\frac{d}{2}}(\sigma_x^2+\sigma_\eta^2)}.$$
        Then, 
        $$\bbE\begin{pmatrix}
            U\\ V
        \end{pmatrix}=\bm 0, \qquad \operatorname{Cov}\begin{pmatrix}
            U\\ V
        \end{pmatrix} = \begin{bmatrix}
            1 & \rho \\ \rho & 1
        \end{bmatrix}=:\Sigma_\rho,$$
        where $\rho = \frac{\sigma_x^2}{\sqrt{2}(\sigma_x^2+\sigma_\eta^2)}$. Also,  $(Z_1,Z_2)\sim N(\bm 0, \Sigma_\rho)$. Define the set $A = \{x\geq s, y \geq \rho t_n\}^c \in \bbR^2$; and in the notation of Proposition~\ref{the nongaussian theorem}, 
        $$A_{\epsilon}\setminus A_{-\epsilon} = \{s - \epsilon \leq x <s+\epsilon, y\geq \rho t_n-\epsilon\} \cup \{x\geq s+\epsilon, \rho t_n - \epsilon\leq y<\rho t_n+\epsilon\}.$$
        Define
        \begin{align*}
            \psi_{t_n,-\epsilon}(u) &= \bbP\left(Z_2\geq \rho t_n - \epsilon| Z_1=u\right)=\Phi\left(\frac{\rho}{\sqrt{1-\rho^2}}(u-t_n)+\frac{\epsilon}{\sqrt{1-\rho^2}}\right),\\
            \psi_{t_n,\epsilon}(u) &= \bbP\left(Z_2\geq \rho t_n + \epsilon| Z_1=u\right)=\Phi\left(\frac{\rho}{\sqrt{1-\rho^2}}(u-t_n)-\frac{\epsilon}{\sqrt{1-\rho^2}}\right).
        \end{align*}
        Then we get for one of the strips
        \begin{equation}\label{strip1}
            \begin{split}
                \bbP\left((Z_1,Z_2) \in \{s - \epsilon \leq x <s+\epsilon, y\geq \rho t_n-\epsilon\}\right)=
                 \int_{s-\epsilon}^{s+\epsilon} \varphi(u)\,\psi_{t_n,-\epsilon} (u)du
                \leq \int_{s-\epsilon}^{s+\epsilon} \varphi(u)du
                \leq 2\epsilon\varphi(s-\epsilon),
            \end{split}
        \end{equation}
        and for the other strip
        \begin{equation}\label{strip2}
        \begin{split}
            \bbP\left((Z_1,Z_2) \in \{ x \geq s+\epsilon, \rho t_n-\epsilon \leq y < \rho t_n+\epsilon\}\right) &= \int_{s+\epsilon}^\infty \varphi(u) \, (\psi_{t_n,-\epsilon}(u)-\psi_{t_n,\epsilon}(u)) du\\
            &\leq C'\epsilon\int_{s+\epsilon}^\infty \varphi(u)du \leq C'\epsilon\int_{s}^\infty \varphi(u)du\\
            &= C'\epsilon\Phi^c(s) \leq C'\epsilon\frac{\varphi(s)}{s},
        \end{split}
        \end{equation}
        where the first inequality step follows because $\psi_{t_n,-\epsilon}(u)-\psi(t_n,\epsilon)(u)$ is bounded linearly in $\epsilon$ since standard normal density is uniformly bounded, and the last inequality step follows by Mill's ratio inequality. Then, combining Eq.~\ref{strip1} and Eq.~\ref{strip2}, we get that for an appropriate constant $C''>0$,
        \begin{equation}\label{strip control}
            \bbP((Z_1,Z_2)\in A_{\epsilon}\setminus A_{-\epsilon}) \leq  C''\epsilon\left( \varphi(s-\epsilon)+\frac{\varphi(s)}{s} \right).
        \end{equation}
        Because the function $u\mapsto \Phi\left(\frac{\rho}{\sqrt{1-\rho^2}}(u-t_n)\right)$ is increasing,
        \begin{equation}\label{P A^c}
            \begin{split}
                \bbP((Z_1,Z_2)\in A^c) &= \int_s^\infty \varphi(u)\bbP\left(Z_2\geq \rho t_n|Z_1=u \right)du\\
                &= \int_s^\infty \varphi(u)\Phi\left(\frac{\rho}{\sqrt{1-\rho^2}}(u-t_n)\right)du \\
            &\geq \Phi\left(\frac{\rho}{\sqrt{1-\rho^2}}(s-t_n)\right)\int_s^\infty \varphi(u)du \geq \frac{1}{2}\Phi^c(s)\geq \frac{\varphi(s)}{2(s+1/s)},
            \end{split}
        \end{equation}
        where the last step follows using Mill's ratio inequality. Hence,
        $$\frac{\bbP((Z_1,Z_2)\in A_{\epsilon}\setminus A_{-\epsilon})}{\bbP((Z_1,Z_2) \in A^c)} \leq C''\cdot\frac{\epsilon\left( \varphi(s-\epsilon)+\frac{\varphi(s)}{s} \right)}{\varphi(s)/2(s+1/s)}.$$
        Write $U$ and $V$ each as a scaled sum of $\frac{d}{2}$ independent, subexponential random variables,
        $$U = \frac{1}{\sqrt{d}}\sum_{k=1}^{d/2} \left[\frac{\bm x_i(k)}{\sigma_x}\frac{\bm x_j(k)}{\sigma_x} + \frac{\bm x_i\left(\frac{d}{2}+k\right)}{\sigma_x}\frac{\bm x_j\left(\frac{d}{2}+k\right)}{\sigma_x}\right], \qquad V = \frac{1}{\sqrt{d/2}}\sum_{k=1}^{d/2} \frac{\bm z_i\left(\frac{d}{2}+k\right)}{\sqrt{\sigma_x^2+\sigma_\eta^2}}\frac{\bm z_j\left(\frac{d}{2}+k\right)}{\sqrt{\sigma_x^2+\sigma_\eta^2}}.$$
        Then the $\psi_1$ norms of $U$ and $V$ are bounded uniformly in $d$, and also both $\|\Sigma_\rho^{1/2}\|_{op}$ and $\|\Sigma_\rho^{-1/2}\|_{op}$ are bounded, so we get from Lemma~\ref{subexpo wasserstein},
        $$\mathcal{W}_p((U,V),(Z_1,Z_2)) \leq C\left(\frac{p}{\sqrt{d}} + \frac{p^{5/2}}{d}\right).$$
        Take $\alpha=1$ and fix $p_0 \geq 2$. Then, for all $1\leq p \leq p_0$,
        $$\frac{p^{5/2}}{d}\leq \frac{pp_0^{3/2}}{d} \implies \mathcal{W}_p((U,V),(Z_1,Z_2)) \leq Cp\Delta,$$
        where $\Delta = \frac{1}{\sqrt{d}}+ \frac{p_0^{3/2}}{d}$. Then, we get from Eq.~\ref{P A^c},
        $$s_A = \log\left(\frac{1}{\bbP((Z_1,Z_2)\notin A)}\right) \leq \log\left(\frac{2(s+1/s)}{\varphi(s)}\right)\leq cs^2,$$
        for a constant $c>0$. So if we pick  $p_0=C's^2$ for a constant $C'>0$,
        $$\Delta = \frac{1}{\sqrt{d}} + \frac{C^{'3/2}s^3}{\sqrt{d}\cdot\sqrt{d}} \geq \frac{1}{\sqrt{d}} \implies |\log \Delta|\leq  \frac{1}{2}\log d.$$
        It can also be observed that  $\frac{s^3}{d}\leq \frac{1}{\sqrt{d}}$ since $s\leq d^{1/6}$, yielding $\Delta = O(1/\sqrt{d})$.
        We choose the constant $C'$ in $p_0=C's^2$ such that
        $$p_0 =C's^2 \geq |\log \Delta|+s_A.$$
        Finally, define
        $$\epsilon_* = eC(|\log \Delta|+s_A)\Delta = O\left(s^2 \cdot \frac{1}{\sqrt{d}}\right) = O\left(d^{-1/6}\right),$$
        where we use that $s\leq d^{1/6}$ and $\Delta = O(1/\sqrt{d})$.
        An observation here is that 
        $$s\leq d^{1/6} \implies \frac{1}{s}\geq \frac{1}{d^{1/6}} \implies \epsilon_* \leq \frac{k}{s},$$
        for some constant $k>0$. Now, being totally in the setup of Proposition~\ref{the nongaussian theorem}, using $s+\frac{1}{s}=s(1+o(1))$ since $s\gg 1$,
        \begin{align*}
            \left|\frac{\bbP((U,V) \in A^c)}{\bbP((Z_1,Z_2) \in A^c)}-1\right| &\leq C''\cdot\frac{\epsilon_*\left( \varphi(s-\epsilon_*)+\frac{\varphi(s)}{s} \right)}{\varphi(s)/2(s+1/s)} +\Delta\\
            &\leq \frac{C''}{\frac{1}{2}(1+o(1))}\frac{\epsilon_*\left(e^{\epsilon_*s}\varphi(s)+\frac{\varphi(s)}{s} \right)}{\varphi(s)/s}+\Delta\\
            &\leq O\left(\epsilon_*s + \frac{1}{\sqrt{d}}\right) \quad(\text{using $\epsilon_*\leq \frac{k}{s}$ eventually})\\
            &= O\left(\frac{s^3}{\sqrt{d}}\right).
        \end{align*}
    \end{proof}
\end{lemma}

\begin{lemma}\label{inverse wij count moments}
Let $B_n \sim \operatorname{Binomial}(n,s_n)$ be a sequence of random variables and $ns_n\longrightarrow \infty$. Then, for any fixed $r\geq 1$,
$$\|B_n\|_{L_r}=O(ns_n), \qquad \left\|\frac{1}{B_n}\right\|_{L_r} = O((ns_n)^{-1}).$$
    \begin{proof}
        
        Start by writing $\|B_n\|_{L_r} \leq ns_n+\left\|\sum_{j=1}^n (Y_j-s_n)\right\|_{L_r}$, where $Y_j$ are i.i.d. $\operatorname{Bernoulli}(s_n)$ random variables.
        \begin{itemize}
            \item If $1\leq r\leq 2$, we have
            $$\left\|\sum_{j=1}^n (Y_j-s_n)\right\|_{L_r} \leq \left\|\sum_{j=1}^n (Y_j-s_n)\right\|_{L_2} = \sqrt{\operatorname{Var}(B_n)}=\sqrt{ns_n(1-s_n)}\leq \sqrt{ns_n}.$$

            \item If $r\geq 2$, we have by Rosenthal's inequality for sum of centered, independent $Y_j$'s,
            $$\bbE\left|\sum_{j=1}^n (Y_j-s_n)\right|^r \leq C_r\left( \sum_{j=1}^n \bbE|Y_j-s_n|^r + \left(\sum_{j=1}^n \operatorname{Var}(Y_j)\right)^{r/2} \right).$$
            Note that, for $r\geq 2$
            $$\bbE|Y_j-s_n|^r = s_n(1-s_n)^r + (1-s_n)s_n^r\leq 2s_n,$$
            so we have
            $$\bbE\left|\sum_{j=1}^n (Y_j-s_n)\right|^r \leq C_r(2ns_n + (ns_n)^{r/2}).$$
        \end{itemize}
        Combining we get for any $r\geq 1$,
        $$\|B_n\|_{L_r} \leq ns_n + (C_r \vee 
        1)(2ns_n + (ns_n)^{r/2}) = O(ns_n).$$

        On the other hand, using Chernoff's inequality for Binomials,
        $$\bbP\left(B_n \leq \frac{1}{2}\bbE(B_n)\right) \leq \exp\left(\frac{-\bbE(B_n)}{8} \right).$$

        So we get
        \begin{align*}
            \bbE\left( \frac{1}{B_n^r} \right) &= \bbE\left( \frac{1}{B_n^r}\indic\left\{ B_n> \frac{\bbE B_n}{2}\right\}\right) + \bbE\left( \frac{1}{B_n^r}\indic\left\{ B_n\leq \frac{\bbE B_n}{2}\right\}\right)\\
            &\leq \left( \frac{2}{\bbE B_n} \right)^r + \bbP\left( B_n \leq \frac{\bbE B_n}{2}\right)\\
            &\leq \left( \frac{2}{\bbE B_n} \right)^r + \exp\left(\frac{-\bbE B_n}{8} \right).
        \end{align*}
        Since $\bbE B_n$ diverges to infinity, the term $\left( \frac{2}{\bbE B_n} \right)^r$ is of the leading order in the above sum. Hence,
        $$\bbE\left( \frac{1}{B_n^r} \right) = O\left( \frac{1}{\bbE B_n^r}\right) \implies \left\| \frac{1}{B_n}\right\|_{L_r} = {O}((ns_n)^{-1}).$$
    \end{proof}
\end{lemma}

\begin{proposition}\label{orthogonal matrix linalg fact}
    Let $\mathcal{O}\left(m\right)$ be the set of all real-valued $m\times m$ orthogonal matrices, and define the set $\mathcal M$ of matrices as 
    $$\mathcal M:= \left\{\begin{bmatrix}
        \bm Q_1 & \bm 0 \\ \bm 0 & \bm Q_2
    \end{bmatrix}: \bm Q_1, \bm Q_2 \in \mathcal{O}\left(\frac{d}{2}\right)\right\}\subset \mathcal{O}(d).$$
    If $\bm R\in \bbR^{d\times d}$ is such that $\bm R \bm M= \bm M \bm R$ for all $\bm M \in \mathcal{M}$, then
    $$\|\bm R\|_{\mathrm{op}} \leq \frac{2\sum_{j=1}^d |\bm R_{jj}|}{d}.$$
    \begin{proof}
        Start by writing 
        $$\bm R = \begin{bmatrix}
            \bm A & \bm B \\ \bm C & \bm D
        \end{bmatrix},$$
        where $\bm A, \bm B, \bm C, \bm D \in \bbR^{\frac{d}{2}\times \frac{d}{2}}$. Since $\bm R$ commutes with $\bm M$ for all $\bm M\in \mathcal{M}$, we get that
        \begin{equation}
            \begin{split}
                \bm Q_1 \bm A = \bm A \bm Q_1, \qquad \forall \bm Q_1 \in \mathcal{O}\left(\frac{d}{2}\right),\\
            \bm Q_1 \bm B = \bm B \bm Q_2,\qquad \forall \bm Q_1, \bm Q_2 \in \mathcal{O}\left(\frac{d}{2}\right),\\
            \bm Q_2 \bm C = \bm C \bm Q_1, \qquad \forall \bm Q_1, \bm Q_2 \in \mathcal{O}\left(\frac{d}{2}\right),\\
            \bm Q_2 \bm D = \bm D \bm Q_2 \qquad \forall \bm Q_2 \in \mathcal{O}\left(\frac{d}{2}\right).
            \end{split}
        \end{equation}
        It follows immediately from the above equation that $\bm B = \bm C = \bm 0$, and $\bm A = a\bm I_{d/2}, \bm D = b\bm I_{d/2}$ for some scalars $a,b\in \bbR$. Thus, we get that $\bm R$ is diagonal and as a consequence
        $$\|\bm R\|_{\mathrm{op}} = \max\{|a|, |b|\} \leq |a| + |b| = \frac{2\sum_{j=1}^d |\bm R_{jj}|}{d}.$$
    \end{proof}
\end{proposition}

\begin{proposition}\label{Neumann series}
    Let $(M_n)_{n\in\bbN}$ and $(B_n)_{n\in \bbN}$ be two sequences of random matrices such that
    $$\|M_n - B_n\|_{op} = O_\bbP(a_n),$$
    where $a_n \longrightarrow 0$, as $n\longrightarrow \infty$, and $\sup_n \|B_n^{-1}\|_{op} \leq C < \infty$. Then, $M_n$ is invertible with probability tending to $1$ and we have 
    $$\|M_n^{-1} - B_n^{-1}\|_{op} = O_\bbP(a_n).$$
    \begin{proof}
        Let $E_n:=M_n-B_n$ and $F_n:=B_n^{-1}E_n$. Since
$\|F_n\|_{op}\le \|B_n^{-1}\|_{op}\|E_n\|_{op}\le C\|E_n\|_{op}$
and $\|E_n\|_{op}=O_{\mathbb P}(a_n)$, we have $\|F_n\|_{op}=O_{\mathbb P}(a_n)$.
Hence $\mathbb P(\|F_n\|_{op}<1/2)\to1$. On this high probability event, $(I+F_n)$ is invertible and so
\[
M_n=B_n(I+F_n)\quad\Rightarrow\quad M_n^{-1}=(I+F_n)^{-1}B_n^{-1}.
\]

From $(I+F_n)(I+F_n)^{-1}=I$, expand and rearrange to obtain
\[
(I+F_n)^{-1}-I =  -F_n(I+F_n)^{-1}.
\]
Taking operator norms and using submultiplicativity gives
\[
\|(I+F_n)^{-1}-I\|_{op}
\le \|(I+F_n)^{-1}\|_{op}\;\|F_n\|_{op}.
\]
When $\|F_n\|_{op}<1$, the Neumann series
$(I+F_n)^{-1}=\sum_{k=0}^\infty (-F_n)^k$ converges and hence
\[
\|(I+F_n)^{-1}\|_{op} \le \sum_{k=0}^\infty \|F_n\|_{op}^{\,k}
= \frac{1}{1-\|F_n\|_{op}}.
\]
Therefore, on $\{\|F_n\|_{op}\le \tfrac12\}$,
\[
\|(I+F_n)^{-1}-I\|_{op}
\le \frac{\|F_n\|_{op}}{1-\|F_n\|_{op}}
\le 2\,\|F_n\|_{op}.
\]

Finally,
\[
\|M_n^{-1}-B_n^{-1}\|_{op}
=\|(I+F_n)^{-1}-I\|_{op}\;\|B_n^{-1}\|_{op}
\le 2C\,\|F_n\|_{op}
\le 2C^2\,\|E_n\|_{op}.
\]
Since $\|E_n\|_{op}=O_{\mathbb P}(a_n)$, it follows that
$\|M_n^{-1}-B_n^{-1}\|_{op}=O_{\mathbb P}(a_n)$.
    \end{proof}
\end{proposition}

\section{Real data experiment details}\label{experiment appdx}

\subsection{Real-graph datasets and experimental pipeline}

We evaluate on three real graphs: \textbf{OGBN-Products}, \textbf{OGBN-MAG (paper)} and \textbf{PyG-Reddit}. The pipeline is the same across datasets:
\begin{enumerate}
\item \textbf{Dot-product embedding fit (graph alignment).} We first preprocess existing node covariates $\bm v_i$ so that dot-products of learned covariates $\bm x_i = f(\bm v_i)$ explain the observed adjacency as well as possible, and treat the learned covariates as latent covariates $\bm x_i$.
\item \textbf{Controlled regression task.} From the learned latents, we form noisy covariates and responses
\[
\bm z_i \;=\; \bm x_i + \bm \eta_i, \qquad y_i \;=\; \bm x_i^\top \bm \beta + \bm \varepsilon_i,
\]
and sweep $\sigma_\eta^2$. This keeps the real graph structure while controlling signal and noise.
\item \textbf{ER edge augmentation (varying neighbourhood size / adding non-geometric edges).} To probe robustness to additional non-geometric structure and to vary effective neighbourhood sizes, we further augment the observed graph by adding Erd\H{o}s--R\'enyi random edges at controlled rates (details below).
\end{enumerate}

\subsection{Embedding model and training (Stage 1)}

\paragraph{Model.} A 3-layer MLP encoder maps raw node features $\bm v_i \in \bbR^{d_{\mathrm{in}}}$ to an embedding $\bm x_i \in \mathbb{R}^{d_{\mathrm{out}}}$:
$$\text{Linear($d_{\mathrm{in}},256$) $\to$ ReLU $\to$ Linear($256,256$) $\to$ ReLU $\to$ Linear($256,d_{\mathrm{out}}$).}$$
$d_{\mathrm{in}}, d_{\mathrm{out}}$ for different datasets are listed in Table~\ref{data-params}. Edges are scored by a dot-product decoder with a learned scalar bias: $s_{ij}= \bm x_i^{\top}\bm x_j + b$.

\paragraph{Objective.} Binary cross-entropy link prediction loss using observed edges with 1:1 negative sampling (uniform random node pairs).

\paragraph{Optimization / schedule.} Adam (lr $10^{-3}$, weight decay $10^{-5}$), trained for 5 epochs with positive-edge minibatches of 250K edges.

\begin{table}[tb]
\caption{Feature and Embedding dimensions of node covariates across datasets}
\label{data-params}
\vskip 0.15in
\begin{center}
\begin{small}
\begin{sc}
\begin{tabular}{lccccc}
\toprule
Data set & $d_{\mathrm{in}}$ & $d_{\mathrm{out}}$ & Avg. degree before & Avg. degree after & Degree cutoff\\
\midrule
OGBN-Products    & 100 & 100 & 50 & 99 & 25\\
OGBN-MAG (paper) & 128  & 32 & 15 & 33 & 20\\
PyG-Reddit    & 602  & 602 & 492 & 958 & 200\\
\bottomrule
\end{tabular}
\end{sc}
\end{small}
\end{center}
\vskip -0.1in
\end{table}

\subsection{ER augmentation on real graphs}

The numbers of ER edges we add in the graph are approximately the same as the number of edges existing in the corresponding dataset. Concretely, we set an augmentation rate $p_n$, we sample random node pairs, each with success probability $p_n$ and add them as edges, then coalesce duplicates. Average degrees before and after adding ER edges are in Table~\ref{data-params}. 

\subsection{Predictors, training protocol, and early stopping}

\textbf{Evaluation protocol.} For each seed we sample evaluation nodes uniformly without replacement; all remaining nodes are used for training. Results are averaged over 10 seeds. 

\textbf{OLS.} Regress $y$ on $\bm z$, ignoring the graph.

\textbf{GCN baseline.} A full-batch mean-aggregation GCN regressor implemented with chunked edge processing:
\begin{itemize}
\item 2 layers, hidden width 64, dropout 0.1, ReLU activations, linear readout.
\item AdamW (lr $5 \times 10^{-3}$, weight decay $5 \times 10^{-5}$).
\item Mixed precision enabled on GPU; neighbour aggregation streamed in edge chunks.
\end{itemize}

\textbf{GAT (trained comparator).} A residual GAT regressor designed for full-batch training on large augmented graphs:
\begin{itemize}
\item Each attention layer uses scaled dot-product attention with learned query/key/value projections: scores are computed as a dot-product between queries and keys and then softmax-normalized over neighbours; messages use the value projections.
\item Input projection to width $H$, then 2 attention layers with residual connections and LayerNorm, followed by an MLP readout (Linear($H,H$)--ReLU--Dropout--Linear($H,1$)).
\item Heads = 1 and hidden-per-head = 64 (so $H = 64$); dropout 0.1.
\item AdamW (lr $3 \times 10^{-3}$, weight decay $10^{-5}$), gradient clipping; attention computed in streamed edge chunks, with intra-layer gradient checkpointing to control activation memory.
\end{itemize}

\textbf{Early stopping.} For both GCN and GAT we use a plateau-based criterion on training MSE: after a minimum of 10 epochs, stop if relative improvement is below 0.5\% for 5 consecutive epochs, with a hard cap at 100 epochs.

\subsection{Proxy-regression on real graphs: high-degree restriction}

A key practical issue on real graphs is presence of lower degree nodes. Our denoising relies on averaging over screened neighbourhoods and is therefore most stable for nodes with sufficiently large neighbourhoods. Accordingly:

\textbf{High-degree-only estimation.} We estimate the regression coefficient according to Algorithm~\ref{alg:attn-proxy-regression} \textbf{only on a high-degree subset} (degree computed on the base, unaugmented graph). The degree cutoffs of each dataset are in Table~\ref{data-params}.

\textbf{Hybrid predictions.} The ``proxy-regression'' curve reported on the full evaluation set corresponds to a hybrid predictor:
\begin{itemize}
\item use predictions according to Algorithm~\ref{alg:attn-proxy-prediction} on high-degree nodes;
\item use GCN predictions on low-degree nodes.
\end{itemize}
Thus, full-population MSE for ``proxy-regression'' should be interpreted as ``proxy-regression on high-degree nodes + GCN elsewhere.'' To isolate the denoising effect, we additionally report MSE restricted to high-degree evaluation nodes, where ``proxy-regression'' corresponds to the genuine Algorithm~\ref{alg:attn-proxy-prediction} prediction without fallback.

\subsection{Compute}

All real-graph experiments were run on an H200 GPU cluster using NVIDIA H200 GPUs (one GPU per job), with SLURM-managed resources (64 CPU cores and 50 GB RAM).

\end{document}